\documentclass[10pt,journal,compsoc]{IEEEtran}

\usepackage{marginnote}
\usepackage{amsmath,graphicx}
\usepackage[american]{babel}
\usepackage{color}
\usepackage{lineno,hyperref}
\usepackage{amssymb,amsmath,amsthm}
\usepackage{multirow}
\usepackage{bm}
\usepackage{subfigure}
\usepackage{booktabs}
\usepackage{threeparttable,multirow}
\usepackage{makecell}
\usepackage[ruled,linesnumbered]{algorithm2e}
\usepackage{microtype}

\usepackage{schemata}
\newcommand\AB[2]{\schema{\schemabox{#1}}{\schemabox{#2}}}
\usepackage{enumitem}

\usepackage{ragged2e}
\usepackage{times}  
\usepackage{helvet}  
\usepackage{courier}  
\usepackage{url}  
\usepackage{graphicx}  
\usepackage{chronology}
\frenchspacing  
\setlength{\pdfpagewidth}{8.5in}  
\setlength{\pdfpageheight}{11in}  

\newtheorem{definition}{Definition}

\newtheorem{theorem}{Theorem}
\newtheorem{lemma}{Lemma}

\usepackage{tikz}
\usetikzlibrary{calc,fit,arrows.meta,calc,positioning,decorations.pathreplacing}
\usepackage{xifthen}

\colorlet{A}{gray}
\colorlet{B}{lightgray}
\colorlet{C}{white}

\pgfdeclarelayer{background}
\pgfdeclarelayer{foreground}
\pgfsetlayers{background,main,foreground}

\tikzset{
        timeline/.style={arrows={}}%
        ,timeline style/.style={timeline/.append style={#1}}%
        ,year label/.style={font=\small\bfseries,below}
        ,year label style/.style={year label/.append style={#1}}%
        ,year tick/.style={tick size=0pt}%
        ,year tick style/.style={year tick/.append style={#1}}%
        ,minor tick/.style={tick size=0pt, very thin}%
        ,minor tick style/.style={minor tick/.append style={#1}}%
        ,period/.style={solid,line width=\timelinewidth,line cap=square}%
        ,periodbox/.style={font=\small\bfseries,text=black}
        ,eventline/.style={draw,red,thick,line cap=round,line join=round}%
        ,eventbox/.style={rectangle,rounded corners=3pt,inner sep=3pt,fill=red!25!white,text width=3cm,anchor=west,text=black,align=left,font=\small}%
        ,tick size/.code={\def\ticksize{#1}}%
        ,labeled years step/.code={\def\yearlabelstep{#1}}%
        ,minor tick step/.code={\def\minortickstep{#1}}%
        ,year tick step/.code={\def\yeartickstep{#1}}%
        ,enlarge timeline/.code={\def\enlarge{#1}}%
        ,eventboxa/.style={eventbox,text width=#1,draw=A,fill=none}%
        ,eventboxb/.style={eventbox,text width=#1,draw=A,fill=none}%
}

\newcommand*{\drawtimeline}[5][]{%
        \def\fromyear{#2}%
        \def\toyear{#3}%
        \def\timelinesize{#4}%
        \def\timelinewidth{#5}%
        \pgfmathsetmacro{\timelinesizept}{\timelinesize}%
        \pgfmathsetmacro{\timelinewidthpt}{\timelinewidth}%
        \pgfmathsetmacro{\timelineoffset}{\timelinewidth/2}
        \pgfmathsetmacro{\timelineoffsetpt}{\timelineoffset}
        \begin{scope}[x=1pt, y=1pt, 
                labeled years step=1,
                minor tick step=0.25,%
                enlarge timeline=0cm,%
                year tick step=1,#1]
                \pgfmathsetmacro{\enlargept}{\enlarge}
                \pgfmathsetmacro{\yearticksep}{\timelinesize/((\toyear-\fromyear)/\yeartickstep)}
                \pgfmathsetmacro{\minorticksep}{\timelinesize/((\toyear-\fromyear)/\minortickstep)}
                \pgfmathsetmacro{\minorticklast}{\minorticksep/\minortickstep}
                \foreach \y[remember=\y as \lasty (initially 0), count=\i from \fromyear] in {0,\yearticksep,...,\timelinesizept}{
                        \coordinate (Y-\i) at (\y,0);
                        \draw[year tick] (\y,-\ticksize/2) --  ++(0,\ticksize);
                        \ifnum\i=\toyear\breakforeach\else
                        \foreach \q[count=\j from 0] in {0,\minorticksep,...,\minorticklast}
                        {
                                \coordinate (Y-\i-\j) at (\q+\y,0);
                                \draw[minor tick] (\q+\y,-\ticksize/2) -- ++(0,\ticksize);
                        };\fi};%
                \pgfmathsetmacro{\nextyear}{int(\fromyear+\yearlabelstep)}
                \draw[timeline] (0,0) -- ++(-\enlargept,0) (0,0) -- ++(\timelinesizept,0) coordinate (end) -- ++(\enlargept,0);
        \end{scope}%

}



\newcommand{\vevent}[9]{
        \pgfmathtruncatemacro{\syr}{#2}
        \pgfmathtruncatemacro{\smth}{#3-1}
        \pgfmathsetmacro{\dim}{#4/31}
        \ifthenelse{#3=12}{%
                \pgfmathtruncatemacro{\fyr}{#2+1}
                \pgfmathtruncatemacro{\fmth}{0}
        }{%
                \pgfmathtruncatemacro{\fyr}{#2}
                \pgfmathtruncatemacro{\fmth}{#3}
        }
        \draw[eventline,#1]($(Y-\syr-\smth)!\dim!(Y-\fyr-\fmth)$) -- ++(#5) -- ++(#6) node[#7] (#8) {#9};
}

\tikzset{
        block/.style 2 args = {
                draw=none, inner sep=0, outer sep=0,
                rounded corners=3pt,
                fit=(#1) (#2)}
}

\usetikzlibrary{calc,matrix,backgrounds,fit,positioning}
\tikzset{%
	nodeStyleGreen/.style={
		draw=green!30!black,
		fill=green!50!lime!30,
		align=left,
		very thick,
		rounded corners
	},
	nodeStyleRed/.style={
		draw=red,
		fill=red!50!lime!30,
		align=left,
		very thick,
		rounded corners
	},
	nodeStyleBlue/.style={
		draw=blue,
		fill=blue!50!lime!30,
		align=left,
		very thick,
		rounded corners
	},
	nodeStyleYellow/.style={
	draw=yellow!80!black,
	fill=yellow!50!lime!30,
	align=left,
	very thick,
	rounded corners
	},
	lineStyleRed/.style={
		color=red,opacity=0.75,line width=2pt,
	},
	lineStyleGreen/.style={
		color=green!40!black,opacity=0.75,line width=2pt,
	},
	lineStyleBlue/.style={
		color=blue,opacity=0.75,line width=2pt,
	},
	lineStyleYellow/.style={
	color=yellow!80!black,opacity=0.75,line width=2pt,
},
}

\usetikzlibrary{arrows,chains,matrix,positioning,scopes}
\makeatletter
\tikzset{join/.code=\tikzset{after node path={%
			\ifx\tikzchainprevious\pgfutil@empty\else(\tikzchainprevious)%
			edge[every join]#1(\tikzchaincurrent)\fi}}}
\makeatother
\tikzset{>=stealth',every on chain/.append style={join},
	every join/.style={->}}
\tikzstyle{labeled}=[execute at begin node=$\scriptstyle,
execute at end node=$]


\tolerance=1000
\hyphenpenalty=5000

%
\ifCLASSOPTIONcompsoc
  \usepackage[nocompress]{cite}
\else
  \usepackage{cite}
\fi
%
\ifCLASSINFOpdf
\else
\fi


\hyphenation{op-tical net-works semi-conduc-tor}

\begin{document}

\clearpage

\twocolumn
\pagenumbering{arabic}
\setcounter{page}{1}
%
\title{Towards a Unified Quadrature Framework for Large-Scale Kernel Machines}
%
%
%

\author{Fanghui Liu, Xiaolin Huang, Yudong Chen, Johan A.K. Suykens
\thanks{
F. Liu and J.A.K. Suykens are with the Department of Electrical Engineering
(ESAT-STADIUS), KU Leuven, B-3001 Leuven, Belgium (email: \{fanghui.liu;johan.suykens\}@esat.kuleuven.be).
 }
\thanks{
	X. Huang is with Institute of Image Processing and Pattern Recognition, and also with Institute of Medical Robotics, Shanghai Jiao Tong University, Shanghai 200240, P.R. China (e-mail: xiaolinhuang@sjtu.edu.cn).
}
\thanks{
	Y. Chen is with School of Operations Research and Information Engineering, Cornell University, Ithaca, NY 14850 USA (e-mail: yudong.chen@cornell.edu).
}
}

\markboth{}%
{Shell \MakeLowercase{\textit{et al.}}: Bare Advanced Demo of IEEEtran.cls for IEEE Computer Society Journals}
%



\IEEEtitleabstractindextext{%
\begin{abstract}
	\justifying  
In this paper, we develop a quadrature framework for large-scale kernel machines via a numerical integration representation. 
Considering that the integration domain and measure of typical kernels, e.g., Gaussian kernels, arc-cosine kernels, are \emph{fully symmetric}, we leverage deterministic fully symmetric interpolatory rules to efficiently compute quadrature \emph{nodes} and associated weights for kernel approximation.
The developed interpolatory rules are able to reduce the number of needed nodes while retaining a high approximation accuracy.
Further, we randomize the above deterministic rules by the classical Monte-Carlo sampling and \emph{control variates} techniques with two merits: 1) The proposed stochastic rules make the dimension of the feature mapping flexibly varying, such that we can control the discrepancy between the original and approximate kernels by tuning the dimnension.
2) Our stochastic rules have nice statistical properties of unbiasedness and variance reduction with fast convergence rate.
In addition, we elucidate the relationship between our deterministic/stochastic interpolatory rules and current quadrature rules for kernel approximation, including the sparse grids quadrature and stochastic spherical-radial rules, thereby unifying these methods under our framework.
Experimental results on several benchmark datasets show that our methods compare favorably with other representative kernel approximation based methods.
\justifying  
\end{abstract}

\begin{IEEEkeywords}
random features, quadrature methods, fully symmetric interpolatory rule, kernel approximation
\end{IEEEkeywords}}

\maketitle

\IEEEdisplaynontitleabstractindextext

%
\IEEEpeerreviewmaketitle

\ifCLASSOPTIONcompsoc
\IEEEraisesectionheading{\section{Introduction}\label{sec:introduction}}
\else
\section{Introduction}
\fi
\IEEEPARstart{K}{ernel} methods \cite{Sch2003Learning,suykens2002least,kafai2018croification} have shown to be powerful in statistical machine learning, but often scale poorly to large datasets in terms of space and time complexity \cite{liu2020learning,dang2020large,liu2018generalization}.
To make kernel methods scalable, the class of random Fourier features (RFF) \cite{rahimi2007random}  is one of the most effective kernel approximation techniques.
RFF transforms input features to a new space for approximating the original kernel function, and then conducts linear learning in this space.
It spawns the new direction on kernel approximation for scaling up traditional kernel methods \cite{lopez2014randomized,sun2018but}, recent convolutional neural tangent kernel \cite{arora2019exact}, and attention in Transformers \cite{choromanski2021rethinking,peng2021random}.
Partly due to its remarkable repercussions, Rahimi and Recht \cite{rahimi2007random} won the test-of-time award for their seminal work on RFF at NeurIPS 2017.

Formally, given a positive definite kernel $k(\cdot, \cdot): \mathbb{R}^d \times \mathbb{R}^d \rightarrow \mathbb{R}$, 
we focus on kernel approximation in which the kernel $k$ admits the following $d$-dimensional integral representation $I_d$
\begin{equation}\label{originte}
k(\bm x, \bm y) \! := \! I_d\left(f_{\bm x \bm y} \right) \!=\! \int_{\mathbb{R}^d} f_{\bm x \bm y}(\bm \omega) \mu(\mathrm{d}{\bm \omega}) \!=\! \mathbb{E}_{\bm \omega \sim \mu} [f_{\bm x \bm y}(\bm \omega)]\,,
\end{equation}
with the integral (probability) measure $\mu$ being standard multivariate Gaussian, i.e., $\bm \omega = [\omega_1, \cdots, \omega_d]^{\!\top} \sim \mathcal{N}(\bm 0, \bm I_d)$.
The integrand $f_{\bm x \bm y}$, $f$ for short, is defined as $f(\bm \omega):=\langle  \phi\left(\bm \omega^{\!\top} \bm{x}\right),  \phi\left(\bm \omega^{\!\top} \bm{y}\right) \rangle $ with a nonlinear activation function $\phi$.
As demonstrated by \cite{lyu2017spherical,munkhoeva2018quadrature}, various kernels admit this $d$-dimensional integration representation by choosing different $\phi$.
For example, the popular Gaussian kernel corresponds to $\phi(x) = [\cos(x), \sin(x)]^{\!\top}$;
the zero-order arc-cosine kernel \cite{cho2009kernel} admits this representation by choosing $\phi(x)$ as the Heaviside function; and the first-order arc-cosine kernel \cite{cho2009kernel} corresponds to $\phi(x) = \max \{ 0, x \}$, i.e., the ReLU activation function commonly-used in deep neural networks.

To approximate the kernel function in Eq.~\eqref{originte}, RFF\footnote{In the original paper \cite{rahimi2007random}, RFF builds on Bochner's theorem \cite{wendland2004scattered} that requires the kernel to be shift-invariant, i.e., $k(\bm x, \bm y) = k(\bm x - \bm y)$, which excludes arc-cosine kernels used in this paper. However, RFF is still able to provide an unbiased approximation of arc-cosine kernels by Monte-Carlo sampling according to the integral representation~\eqref{originte}.} uses Monte-Carlo sampling to draw random features $\{ \bm \omega_i \}_{i=1}^N$ from $\mathcal{N}(\bm 0, \bm I_d)$ such that $k(\bm x, \bm y) \approx {1}/{N} \sum_{i=1}^N f_{\bm x \bm y}(\bm \omega_i)$.
Apart from such random sampling based scheme, an alternative way is to use quadrature rules in a deterministic fashion
\begin{equation}\label{origintenew}
k(\bm x, \bm y)  \approx \! \sum_{i=1}^N \! a_i f_{\bm x \bm y}(\bm \gamma_i) = \langle \Phi(\bm x), \Phi(\bm y) \rangle\,,
\end{equation}
where $\bm \gamma_i  \in \mathbb{R}^d$ are called the quadrature \emph{nodes}, $a_i \in \mathbb{R}$ are the corresponding weights, and $\Phi: \mathbb{R}^d \rightarrow \mathbb{R}^N$ is the related explicit feature mapping. 
The nodes and weights are \emph{deterministically} given by various quadrature rules such that there is no approximation error whenever the integrand $f$ belongs to all polynomials with a total degree up to $2L-1$, where $L$ is the accuracy level. 
For example, in the univariate case ($d=1$), Gaussian quadrature (GQ) uses $L$ nodes to deliver the exact value of polynomials up to $(2L-1)$-degree without approximation error for $\omega_1^{i_1} \omega_2^{i_2} \cdots \omega_d^{i_d}$ with $\sum_{j=1}^d i_j \leq 2L-1$.  
{If the integrand $f$ is general, beyond a ($2L-1$)-degree polynomial, Gaussian quadrature still works well. Under this setting, if $f$ has $c$-order bounded derivatives, the mean squared error (MSE) of Gaussian quadrature decreases asymptotically as $\mathcal{O}(N^{-c})$, which is better than the $N^{-1/2}$-consistency of Monte-Carlo sampling \cite{davis2007methods}.}
Gaussian quadrature in the univariate case can be easily extended to multidimensional cases ($d>1$) by product rules but suffers from ``curse of dimensionality": the number of required nodes is $N=L^d$ in an exponential order of $d$.
To tackle this issue, sparse grid quadrature (SGQ) \cite{heiss2008likelihood} uses a linear combination of low-level tensor products of univariate quadrature rules, and thus the number of nodes $N$ by SGQ can be decreased in a polynomial order of $d$.

Recall Eq.~\eqref{originte}, where the integral is not generic but has a nice property: both the integration domain $\mathbb{R}^d$ and the Gaussian measure are \emph{fully symmetric}, with definition deferred to Section~\ref{sec:pre}.
Benefiting from this property, the nodes and the weights can be efficiently obtained from a pre-given vector through permutations and sign changes of its coordinates.
Furthermore, such \emph{fully symmetric} property is helpful to reduce the number of the required nodes $N$ in quadrature rules.
For example, considering $d=25$ with an accuracy level $L=4$ for seventh-degree polynomial exactness approximation, Gaussian quadrature requires $4^{25}$ nodes; SGQ needs 24,751 nodes; while the Deterministic Fully Symmetric intepolatory rule (termed as D-FS) \cite{genz1996fully} needs 22,151 nodes, which reduces over 10\% nodes.
In some cases, the required nodes can be even reduced over 50\%  \cite{novak1999simple}.
Figure~\ref{sgqours} demonstrates the superiority of D-FS against SGQ on time cost and the reduction on required nodes\footnote{D-FS requires the same number of nodes $N=2d+1$ with SGQ in the third-degree rule but needs smaller $N$ than SGQ in higher-degree rules.}.

	\begin{figure} 
	\centering 
	\subfigure[Time cost]{\label{sgqourstime}
		\includegraphics[width=0.232\textwidth]{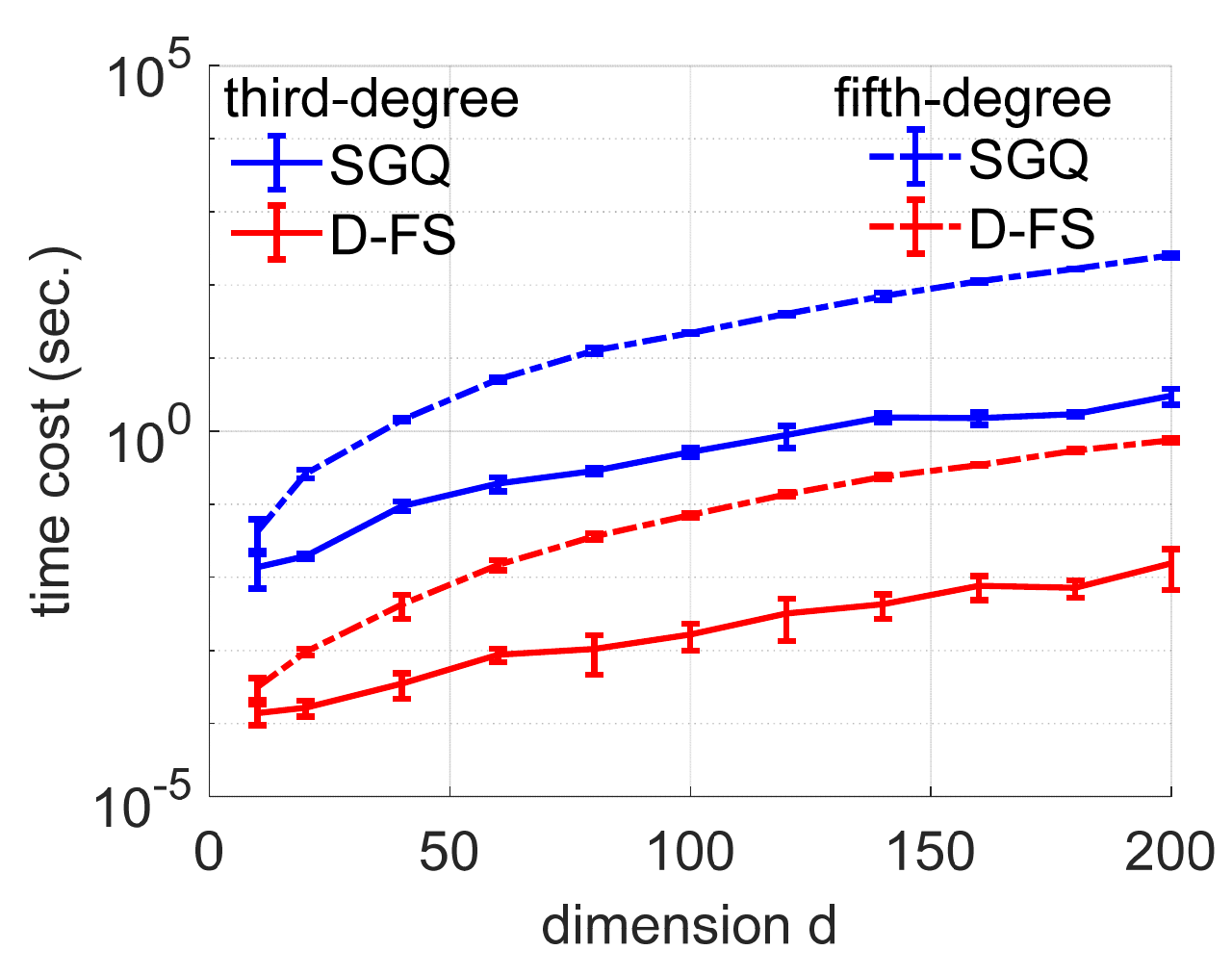}}
\subfigure[${N_{\text{SGQ}} - N_{\text{D-FS}}}$]{\label{sgqom}
		\includegraphics[width=0.215\textwidth]{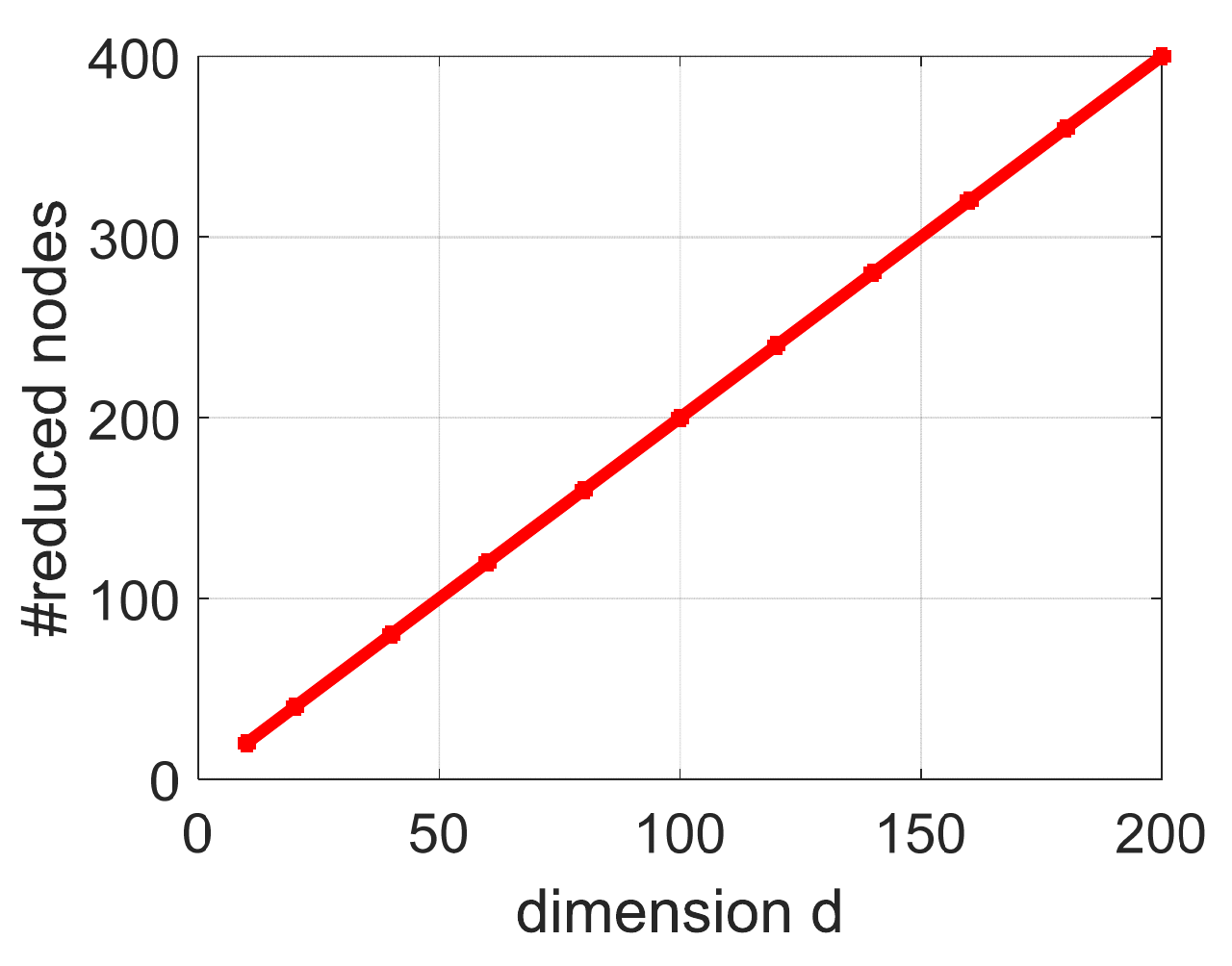}}
	\caption{Benefits of D-FS against SGQ in time cost (a), and the reduction on the required \emph{nodes} in fifth-degree rules (b).}\label{sgqours}
\end{figure}%

Based on the above analysis, we propose to use deterministic \emph{fully symmetric} interpolatory rules \cite{genz1996fully}, i.e., D-FS, for kernel approximation.
Besides, we randomize such deterministic rules to new stochastic versions, termed as S-FS (here ``S" denotes stochastic), which exhibit nice statistical properties: unbiased estimation and variance reduction.
Furthermore, we elucidate the relationship among SGQ \cite{dao2017gaussian}, stochastic spherical-radial (SSR) rules \cite{munkhoeva2018quadrature} and the developed D-FS/S-FS. Thereby, the proposed framework unifies these methods, as shown in Figure~\ref{framework}. 
We make the following contributions:
\begin{itemize}
	\item By virtue of the \emph{fully symmetric} property of the integration~\eqref{originte}, we derive the third/fifth-degree D-FS for kernel approximation. The obtained feature mapping $\Phi(\cdot)$ is fixed-size given $d$, e.g., $N=2d+1$ in the third-degree rule and $N=1+2d^2$ in the fifth-degree rule, and thus our method achieves $\mathcal{O}(d)$ time and space complexity, see Section~\ref{sec:full}. 
	\item We randomize D-FS to a stochastic version S-FS by combining the classical Monte-Carlo sampling and \emph{control variates} techniques. The proposed S-FS has two merits: 1) The dimension of the obtained feature mapping by S-FS can be easily tuned to an arbitrary value for practical requirements. 2) S-FS is theoretically demonstrated to be an unbiased estimator for kernel approximation and achieves variance reduction with fast convergence rates, see in Section~\ref{sec:sfsir}.
	\item We build a unifying quadrature framework for kernel approximation as shown in Figure~\ref{framework}, which unifies our D-FS/S-FS, SGQ and SSR.
	We show that i) by choosing suitable nodes and weights in the third-degree SGQ, it is equivalent to the third-degree D-FS;
	ii) SSR can be regarded as a doubly stochastic version of D-FS: one stochasticity comes from random projection and another source is using a randomized \emph{generator} vector; see in Section~\ref{sec:relation}.
\end{itemize}
Besides, experimental results on several benchmark datasets show that the developed deterministic/stochastic fully symmetric interpolatory rules achieve promising kernel approximation quality and also performs well on classification tasks.

\begin{figure}[t]
	\centering
	\includegraphics[width=0.48\textwidth]{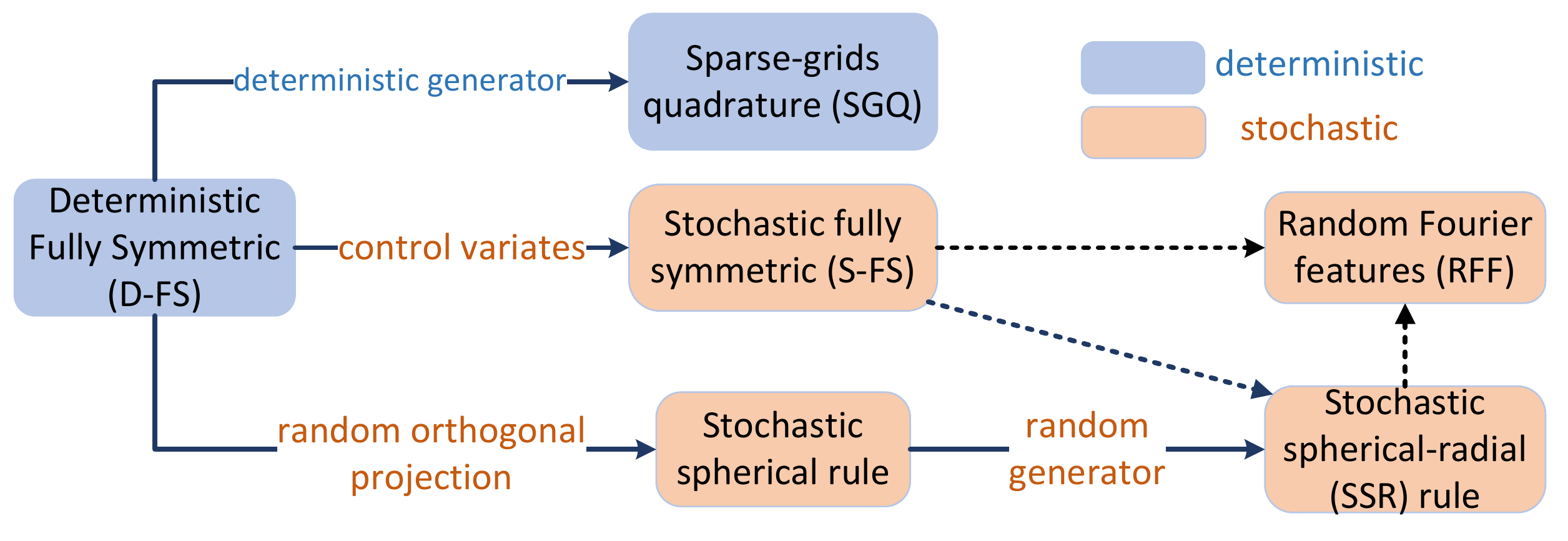}
	\caption{Relationship between quadrature based methods.}\label{framework}
\end{figure}

\section{Related Works and Preliminaries}
\label{sec:relatedwork}

In this section, we give an overview of representative random features based algorithms for kernel approximation, refer to a survey \cite{liu2020survey} for details.
Then we briefly introduce the related \emph{fully symmetric} concepts and basic ideas behind deterministic fully symmetric interpolatory rules in numerical integration.

\subsection{Related Works}
To approximate the kernel function in Eq.~\eqref{originte}, current kernel approximation methods for finding the weights and nodes $\{ a_i, \bm \gamma_i \}_{i=1}^N$, given by Eq.~\eqref{origintenew}, can be divided into \emph{Monte Carlo} and \emph{quadrature} based approaches.

Monte Carlo based methods are often equal-weight rules where the nodes $\{ \bm \gamma_i \}_{i=1}^N$ are obtained by variants of Monte Carlo sampling, and then provide an unbiased estimator of the original kernel.
For example, to approximate the kernel in Eq.~\eqref{originte}, the standard random Fourier features (RFF) adopt $\bm \gamma_i \equiv \bm \omega_i \sim \mathcal{N}(\bm 0, \bm I_d)$ by Monte Carlo sampling and the equal weights $a_1 = \cdots = a_N \equiv 1/N$. 
To reduce the approximation variance, orthogonal random features (ORF) \cite{Yu2016Orthogonal} incorporates an orthogonality constraint on the transformation matrix $\bm W = [\bm \omega_1, \cdots, \bm \omega_N]$, demonstrated by theoretical guarantees on variance reduction \cite{choromanski2017unreasonable}.
Sampling theory \cite{niederreiter1992random} suggests that the convergence rate of Monte-Carlo used in RFF/ORF can be significantly improved by sampling in a deterministic scheme instead of i.i.d. version.
Accordingly, quasi-Monte Carlo (QMC) sampling \cite{Avron2016Quasi}, as a possible middle-ground method, utilizes a low-discrepancy sequence for sampling, and achieves a convergence rate of order $\mathcal{O}((\log N)^c/N)$ on discrepancy \cite{caflisch1998monte}, where $c$ is a constant independent of $N$, but may depend on $d$. The convergence rate can be further improved if the integrand has bounded variation, or higher-order smoothness \cite{leobacher2014introduction,dick2011higher}.
In fact, a series of empirical and theoretical results \cite{lyu2017spherical,choromanski2019unifying} have demonstrated that, coupling samples to be orthogonal to one another (i.e., uniformly distributed over the space), rather than being i.i.d., can significantly improve statistical efficiency.
Apart from the above data-independent sampling schemes used in random features, another line is to utilize data-dependent sampling strategy for better approximation quality and generalization properties for random features. Typical examples include leverage score based sampling \cite{avron2017random}, fast leverage score approximation \cite{rudi2018fast,liu2020random}, Christoffel functions \cite{pauwels2018relating}, and Fourier sparse leverage scores \cite{erdelyi2020fourier}.
We also note that the existence of MCMC based algorithms, incorporating with RFFs for various applications, e.g., function estimation in a distribution sense \cite{kammonen2020adaptive}, and fitting Gaussian processes based latent variable models \cite{gundersen2021latent}, but they are beyond the scope of this paper.

In quadrature based methods, the nodes are usually given by \emph{deterministic} rules (can be extended to stochastic versions) and the weights are often not equal. Examples include Gaussian quadrature \cite{evans1993practical} and SGQ \cite{dao2017gaussian} based on the Smolyak formula \cite{heiss2008likelihood}.
Instead of directly approximating the $d$-dimensional integration, the stochastic spherical-radial (SSR) rules \cite{genz1998stochastic} transform the integration in Eq.~\eqref{originte} to a double-integral over the unit $d$-sphere and over the radius, which are then approximated by stochastic spherical rules and stochastic radial rules, respectively. The idea of SSR has been successfully applied to kernel approximation \cite{munkhoeva2018quadrature} and achieves promising approximation quality.

\subsection{Preliminaries: Fully Symmetric Properties and Rules}
\label{sec:pre}

Next we briefly introduce \emph{fully symmetric sets} and related symmetry concepts, which is needed in this paper. 

\begin{definition}\label{deffullys}
	(Fully symmetric set \cite{cools1997constructing,karvonen2018fully}) Given an integer-valued vector $\bm p =[p_1,p_2,\cdots, p_d] $ with $ p_i \in \{0, 1,\dots,m \}$, 
let $\Pi_{\bm p}$ be the set of all permutations of $\bm p$ and $\mathcal{V}_d$ be the set of all vectors with the form $\bm \nu = [\nu_1, \nu_2, \cdots, \nu_d]$ with $\nu_i = \pm 1$. Then, given a vector $\bm \lambda_{\bm p} =[\lambda_{p_1}, \lambda_{p_2}, \cdots, \lambda_{p_d}]^{\!\top}$, the point set
\begin{equation*}
	\{ \bm \lambda_{\bm p} \} := \bigcup_{q \in \Pi_{\bm p}} \bigcap_{\bm \nu \in \mathcal{V}_d} \Big\{ \left(\nu_{1} \lambda_{q_{1}}, \nu_{2} \lambda_{q_{2}}, \ldots, \nu_{d} \lambda_{q_{d}}\right) \Big\} \subset \mathbb{R}^d \,,
\end{equation*}
is the fully symmetric set generated by $\lambda_{\bm p}$.
\end{definition}
Based on the above definition, the concepts of fully symmetric domain, function, and measure follow naturally.
To be specific, a point domain $\mathcal{A} \subseteq  \mathbb{R}^d$ is said to be \emph{fully symmetric} if $\bm \lambda \in \mathcal{A}$ implies $\bm \lambda' \in \mathcal{A}$, where $\bm \lambda'$ is obtained by permutations and sign changes on the coordinates of $\bm \lambda$. Naturally, $\mathbb{R}^d$ is a fully symmetric domain.
A function $f: \mathbb{R}^d \rightarrow \mathbb{R}$ is \emph{fully symmetric} if it is constant in each fully symmetric set, i.e., $f (\bm x) = f(\bm x')$ for any $\bm x, \bm x' \in \{ \bm \lambda_{\bm p} \}$.
A measure $\mu$ is \emph{fully symmetric} if its density (with respect to the Lebesgue measure) is a fully symmetric function. The Gaussian measure used in Eq.~\eqref{originte} satisfies this condition.
In Definition~\ref{deffullys}, $\bm \lambda_{\bm p}$ is called a \emph{generator} vector and its individual elements are called \emph{generators}.
Further, assuming $\lambda_0 = 0$, the \emph{fully symmetric basic rule} $ f(\bm \lambda_{\bm p})$ is defined by \cite{genz1996fully}
\begin{equation*}
	f(\bm \lambda_{\bm p}) = \sum_{q \in \Pi_{\bm p}} \sum_{\bm \nu \in \mathcal{V}_d} f\left(\nu_{1} \lambda_{q_{1}}, \nu_{2} \lambda_{q_{2}}, \ldots, \nu_{d} \lambda_{q_{d}}\right)\,.
\end{equation*}
{For example, when $d=4$ and $\bm p = (2,0,0,0)$, we have
	\begin{equation*}
		\begin{split}
			f(\bm \lambda_{\bm p}) &= f(\lambda_2,0,0,0) + f(- \lambda_2,0,0,0) + f(0, \lambda_2,0,0) \\
			&\quad + f(0, -\lambda_2,0,0) + f(0, 0, \lambda_2,0) + f(0, 0, -\lambda_2,0) \\
			&\quad + f(0, 0, 0, \lambda_2) + f(0, 0, 0, -\lambda_2) \,.
		\end{split}
	\end{equation*}
}

\begin{definition}\label{deffsir} (Fully symmetric interpolatory rules \cite{genz1996fully}) Define
	$\mathcal{P}^{(m, d)}$ as a set of all distinct $d$-partitions of the integers $\{ 0,1, \dots, m \}$, i.e.
	\begin{equation*}
		\mathcal{P}^{(m, d)}=\left\{\bm p \in \mathbb{N}^d | p_1 \geq p_2 \geq \cdots \geq p_{d} \geq 0, \| \bm p \|_1 \leq m\right\}\,,
	\end{equation*}
the fully symmetric interpolatory rule is defined as
\begin{equation}\label{IQmdf}
	Q^{(m,d)}(f) = \sum_{\bm p \in \mathcal{P}^{(m,d)}} a_{\bm p}^{(m,d)} f(\bm \lambda_{\bm p})\,,
\end{equation}
where the weight $a_{\bm p}^{(m,d)}$ is given by
\begin{equation}\label{weightapmd}
	a_{\bm p}^{(m, d)}=2^{-K} \sum_{\|\bm u \|_1 \leq m-\| \bm p \|_1} \prod_{i=1}^{d} \frac{b_{u_{i}+p_{i}}}{\prod_{j=0, \neq p_{i}}^{u_{i}+p_{i}}\left(\lambda_{p_{i}}^{2}-\lambda_{j}^{2}\right)}\,,
\end{equation}
where $\bm u = [u_1, u_2, \cdots, u_d]$ is the set of the integers $\{ 0,1, \dots, m \}$ and $K$ is the number of nonzero components in $\bm p$. If $\bm q$ is one of the permutations of $\bm p$, then $a^{(m,d)}_{\bm q} = a^{(m,d)}_{\bm p}$.
The coefficient $b_0=1$ and $b_i$ ($i \geq 1$) satisfies
\begin{equation}\label{bi}
	b_{i}=\frac{1}{\sqrt{2 \pi}} \int_{-\infty}^{+\infty} e^{-x^{2} / 2} \prod_{j=0}^{i-1}\left(x^{2}-\lambda_{j}^{2}\right) \mathrm{d} x \quad(i \geq 1)\,.
\end{equation}
\end{definition}
According to Eq.~\eqref{IQmdf}, $Q^{(m,d)}(f)$ stands for the weighted sum of evaluations of $f$ at the nodes of the fully symmetric set on the distinct $d$-partitions $\mathcal{P}^{(m, d)}$.
The theory for fully symmetric interpolatory rules \cite{genz1996fully} demonstrates that $Q^{(m,d)}(f)$ is an approximation to $I_d(f)$ that is exact for all polynomials with the total degree $2m+1$ or less.
The third/fifth-degree rules $Q^{(1,d)}(f)$ and $Q^{(2,d)}(f)$ correspond to $m=1$ and $m=2$, respectively. 
Note that, although the mathematical foundations and derivations of the fully symmetric rule are relatively complex, the obtained feature mapping for kernel approximation in this paper is quite simple and easy to be implemented.
We will illustrate this in the next section.

\noindent{\bf Different from previous works:} When compared to the original work on fully symmetric interpolatory rules \cite{genz1996fully}, the contribution of this paper lies in developing deterministic rules for kernel approximation, especially the derivation of the fifth-degree rule, providing a new stochastic version as an unbiased estimator for kernel approximation, demonstrating nice statistical properties with theoretical guarantees, and casting typical quadrature rules in a unifying framework.

\section{Deterministic Rules for Kernel Approximation}
\label{sec:full}
In this section, we present the third-degree and fifth-degree D-FS for kernel approximation based on the fully symmetric interpolatory rules \cite{genz1996fully}.
We do not employ higher-degree rules in this paper due to sufficient approximation and efficient computation, refer to \cite{arasaratnam2009cubature} with detailed discussion.

According to Eq.~\eqref{IQmdf},  $Q^{(m,d)}(f)$ is a weighted sum of \emph{fully symmetric} basic rules $f(\bm \lambda_{\bm p})$.
Therefore, the kernel $k$, a.k.a. the $d$-dimensional integration~\eqref{originte}, can be approximated by a weighted sum of evaluations of $f$ at the nodes of the fully symmetric set on the distinct $d$-partitions $\mathcal{P}^{(m, d)}$
\begin{equation}\label{IQmdfnew}
k(\bm x, \bm y) \approx Q^{(m,d)}(f) = \sum_{\bm p \in \mathcal{P}^{(m,d)}} a_{\bm p}^{(m,d)} f(\bm \lambda_{\bm p})\,,
\end{equation}
where the weights $a_{\bm p}^{(m,d)}$ and the \emph{generator} vector $\bm \lambda_{\bm p}$ play significant roles in quadrature rules.
Different generation schemes for $\bm \lambda_{\bm p}$ lead to various quadrature based approaches.
For example, SGQ \cite{dao2017gaussian} uses \emph{deterministic} values to generate $\bm \lambda_{\bm p}$; while $\bm \lambda_{\bm p}$ in SSR \cite{munkhoeva2018quadrature} is sampled from a probability distribution.
The developed D-FS in this section follows \cite{genz1996fully} that selects $\bm \lambda_{\bm p}$ in a \emph{deterministic} scheme.
In the next we present this generation procedure equipped with the third/fifth-degree rules for kernel approximation.

\noindent {\bf Third-degree rule:} the kernel $k$ in Eq.~\eqref{originte} is approximated by the third-degree rule $Q^{(1,d)}(f)$ such that $k(\bm x, \bm y) \approx  Q^{(1,d)}(f)$
\begin{equation}\label{q1df}
 Q^{(1,d)}(f) = a_{0}^{(1, d)} f(\bm 0) + a_{1}^{(1, d)} \sum_{i=1}^d \left[ f(\lambda_1 \bm e_i) + f(-\lambda_1 \bm e_i) \right],
\end{equation}
where $\bm e_i$ is a unit vector with the $i$-th element being 1.
The weights are given by $a_{0}^{(1, d)} = 1-{d}/{\lambda_1^2}$ and $a_{1}^{(1, d)} = {1}/{(2\lambda_1^2)}$ according to Eq.~\eqref{weightapmd}.
Finally, the third-degree rule outputs $\{a_i, \bm \gamma_i \}_{i=0}^{2d}$ with 
\begin{equation}\label{fsirwa}
\left\{
\begin{array}{rcl}
\begin{split}
& \bm \gamma_{i} =\bm 0_{d \times 1} ;~  a_{i}=1-{d}/{\lambda_1^2};~ i =0 \\
& \bm \gamma_{i} =\lambda_1 \bm{e}_{i};~ a_{i} ={1}/{2\lambda_1^2};~ 1  \leq i \leq d \\
& \bm \gamma_{i} =-\lambda_1 \bm{e}_{i} ;~ a_{i} ={1}/{2\lambda_1^2};~ d+1 \leq i \leq 2 d\,,
\end{split}
\end{array}\right.
\end{equation}
which results in the number of nodes $N = 2d+1$.
The \emph{generator} vector $\bm \lambda = [\lambda_0, \lambda_1]^{\!\top}$ with $\lambda_0 = 0$ usually selects $\lambda_1$ by successive extensions of the one-dimensional 3-point Gauss–Hermite rule
so that certain sets of weights vanish, i.e., $\lambda_1 = \sqrt{3}$.

\noindent{\bf Feature mapping:} According to the third-degree rule $Q^{(1,d)}(f)$, we finally obtain the explicit feature mapping $\Phi$ for kernel approximation
\begin{equation}\label{determap}
\Phi(\bm x) \!=\! [\sqrt{a_0}\phi(\bm \gamma_0^{\!\top} \bm x), \sqrt{a_1}\phi(\bm \gamma_1^{\!\top} \bm x), \! \cdots \!, \sqrt{a_{2d}}\phi(\bm \gamma_{2d}^{\!\top} \bm x) ]^{\!\top}\,,
\end{equation}
such that $k(\bm x, \bm y) \approx Q^{(1,d)}(f) = \langle \Phi(\bm x), \Phi(\bm y) \rangle$.
Here the weight $a_0 = 1 - d/\lambda_1^2$ might be negative, and we consider the complex number $\sqrt{a_0}$, and thus the approximated kernel is still real-valued.
It can be observed that, generating $\{ a_i, \bm \gamma_i \}_{i=0}^{2d}$ is data-independent and deterministic.
The transformation matrix $\bm W = [\bm \gamma_0, \bm \gamma_1, \cdots, \bm \gamma_{2d}] \in \mathbb{R}^{d \times (2d+1)}$ can be obtained by Eq.~\eqref{fsirwa} as the following
\begin{equation}\label{wmatrix}
\bm W = \left[
\begin{matrix}
0      & -\lambda_1   & \lambda_1  &0 & 0 & \cdots & 0 & 0    \\
0      & 0     &0  &- \lambda_1 & \lambda_1 & \cdots & 0   & 0   \\
\vdots & \vdots & \vdots & \vdots & \vdots & \ddots & \vdots & \vdots  \\
0      & 0      & 0 & 0 & 0 & \cdots   & -\lambda_1   & \lambda_1  \\
\end{matrix}
\right]\,.
\end{equation}

For better illustration of our method, here we take the Gaussian kernel $k(\bm x, \bm y) = \exp\left(-{\| \bm x - \bm y\|_2^2}/{(2\sigma^2)}\right)$ as an example and discuss the difference with RFF.
According to Eq.~\eqref{originte}, the Gaussian kernel can be approximated by $k(\bm x, \bm y) \approx \sum_{i=1}^N a_i \cos[\bm \omega^{\!\top}_i (\bm x - \bm y)]$ with the transformation matrix $\bm W = [\bm \omega_1, \cdots, \bm \omega_N] \in \mathbb{R}^{d \times N}$ with $N$ features as follows.
\begin{figure}[!htb]
	\AB{}
	{
		\AB{RFF:}
		{
			dense: $\bm W = [W_{ij}]_{d \times N}$ with $W_{ij} \sim \mathcal{N}(0,1/\sigma^2)$ \\
			$a_i \equiv 1/N$
		}\\
		\AB{Ours:}
		{
			sparse: $\bm W = [\bm \gamma_0, \bm \gamma_1, \cdots, \bm \gamma_{2d}]$ in Eq.~\eqref{wmatrix}\\
			the weight $a_i$ is given in Eq.~\eqref{fsirwa}
		}
	}
\end{figure}

\noindent In RFF, $a_i \equiv 1/N$ and $W_{ij} \sim \mathcal{N}(0,1/\sigma^2)$ by Monte Carlo sampling. The number of random features $N$ can be manually specified to an arbitrary value; while in D-FS, the transformation matrix $\bm W  \in \mathbb{R}^{d \times (2d+1)}$ and the weights $a_i$ are \emph{deterministic}. Given $d$, the number of needed nodes is $N = 2d+1$ and cannot be easily tuned.
One oblivious advantage of D-FS is that, $\bm W$ is extremely sparse with only $2d$ non-zero elements $\pm \lambda_1$.
Accordingly, generating $\bm W$ needs $\mathcal{O}(d)$ space and time complexity, which is better than RFF with $\mathcal{O}(Nd)$ complexity.
More importantly, when $d$ is given, the nodes, the weights, and the transformation matrix in our deterministic rules can be directly determined, see Eqs.~\eqref{fsirwa} and~\eqref{wmatrix}.
That means, our deterministic rules can be much more efficient for kernel approximation by a look-up table.

\noindent{\bf Fifth-degree rule:} When choosing $m=2$ in Eq.~\eqref{IQmdfnew}, we obtain a fifth-degree rule $Q^{(2,d)}$ with $\| \bm p \|_1 \leq 2$ to further improve the approximation quality. 
To derive the fifth-degree D-FS, we cast it to three cases, i.e., $\| \bm p \|_1 = 0$, $\| \bm p \|_1 = 1$, and $\| \bm p \|_1 = 2$.
Note that, the derivation of the fifth-degree rule is relatively technical and lengthy, so we put it in Appendix~\ref{sec:fd}.
In fifth-degree rules, the number of required nodes in D-FS is $N=1+2d^2$, which is smaller than SGQ with $1+2d^2+2d$.
Further, the feature mapping in our fifth-degree rule can be obtained in a similar way with that of the third-degree rule, and thus we omit it here.

\section{Stochastic Rules and its properties}
\label{sec:sfsir}
{The above D-FS rules are \emph{determinstic}: given $d$, the number of required nodes $N$ is fixed, e.g., $N=2d+1$ in the third-degree rule and $N=1+2d^2$ in the fifth-degree rule. 
Unlike RFF, we cannot flexibly tune $N$ to control the discrepancy between the original and
approximate kernels.
This is a common issue in deterministic rules, e.g., SGQ \cite{dao2017gaussian}.}
{To tackle this issue for kernel approximation, we randomize the above deterministic rules by combining the classical Monte-Carlo sampling and control variates techniques \cite{rubinstein1985efficiency} for the design of stochastic rules S-FS. By doing so, we can flexibly tune the dimension of the obtained feature mapping with nice statistical properties.}

\subsection{Formulation of Stochastic Rules}
We begin with the design of the third-degree stochastic rule.
{Based on Eq.~\eqref{wmatrix}, we keep the nodes unchanged to maintain the sparse transformation matrix and randomize the weights in Eq.~\eqref{fsirwa}. 
Observing that $\mathbb{E}[\sum_{i=1}^d \omega_i^2] = d$ with $\bm \omega=[\omega_1, \cdots, \omega_d]^{\!\top} \sim \mathcal{N}(\bm 0, \bm I_d)$, we define the weights in our third-degree S-FS as functions of $\bm \omega$}
\begin{equation}\label{stoweight}
	\left\{
	\begin{array}{rcl}
		\begin{split}
			& \tilde{a}_0^{(1,d)}(\bm \omega) \equiv \tilde{a}_0^{(1,d)} = 1-{\sum_{i=1}^d \omega_i^2}/{\lambda_1^2} \\
			& \tilde{a}_1^{(1,d)}(\bm \omega) \equiv \tilde{a}_1^{(1,d)} = {\sum_{i=1}^d \omega_i^2}/{(2d\lambda_1^2)} \,.
		\end{split}
	\end{array}\right.
\end{equation}
Accordingly, by randomizing the weights, the stochastic version of the third-degree D-FS $Q^{(1,d)}(f)$ in Eq.~\eqref{q1df} is given by  
\begin{equation}\label{m1df}
	M^{(1,d)}(f, \bm \omega) \!=\! \tilde{a}_0^{(1,d)} f(\bm 0) + \tilde{a}_1^{(1,d)} \! \sum_{i=1}^d  \Big[ f(\lambda_1 \bm e_i) + f(-\lambda_1 \bm e_i) \Big]\,.
\end{equation}
Besides, the third-degree stochastic rule can be extended to general degrees
\begin{equation*}
	M^{(m,d)}(f, \bm \omega) = \sum_{\bm p \in P^{(m,d)}} \tilde{a}^{(m,d)}(\bm \omega) f(\bm \lambda_{\bm p}) \,,
\end{equation*}
where the nodes $\bm \lambda_{\bm p}$ are the same as that of deterministic rules in Eq.~\eqref{IQmdfnew}, while the randomized weights $\tilde{a}_{\bm p}^{(m,d)}(\bm \omega)$ are defined as
\begin{equation*}
\tilde{a}_{\bm p}^{\!(m,d)\!}(\bm \omega) = \left\{
\begin{array}{rcl}
\begin{split}
& 1;~ \mbox{if $\|\bm p\|_1=0$ and $\|\bm u\|_1=0$} \\
& \frac{2^{-K}}{d}\!\!\!\!\sum_{\|\bm u \|_1 \leq m-\| \bm p \|_1}  \sum_{i=1}^{d} \frac{\prod_{j=0}^{u_{i}+p_{i}-1}\left(\omega_i^2 - \lambda_{j}^{2}\right)}{\prod_{j=0, \neq p_{i}}^{u_{i} + p_{i}}\left(\lambda_{p_{i}}^{2} - \lambda_{j}^{2}\right)}\,,
\end{split}
\end{array}\right.
\end{equation*}
where $K$ is the number of nonzero components in $\bm p$.
The formulation of $\tilde{a}_{\bm p}^{(m,d)}(\bm \omega)$ is based on Eq.~\eqref{weightapmd} to ensure the summation to $1$. Besides, the continued product in Eq.~\eqref{weightapmd} is substituted by the summation to provide a tighter estimate, as $\mathbb{E} (\sum_{i=1}^d \omega_i^2) \leq \mathbb{E} (\prod_{i=1}^d \omega_i^2)$.

Since typical quadrature based methods for kernel approximation, e.g., SGQ \cite{dao2017gaussian} and SSR \cite{munkhoeva2018quadrature}, adopt the third-degree rule instead of higher-degree rules, in the next we focus on the third-degree stochastic rule.
The feature mapping associated with $M^{(1,d)}(f)$ is given by
\begin{equation}\label{randmap1}
\widetilde{\Phi}(\bm x, \bm \omega) = [\sqrt{\tilde{a}_0(\bm \omega)}\phi(\bm \gamma_1^{\!\top} \bm x), \cdots, \sqrt{\tilde{a}_{2d}(\bm \omega)}\phi(\bm \gamma_{2d}^{\!\top} \bm x) ]^{\!\top}\,,
\end{equation}
where $\{ \bm \gamma \}_{i=0}^{2d}$ are given by Eq.~\eqref{fsirwa}, the randomized weights $\{ \tilde{a}_i \}_{i=0}^{2d}$ refer to Eq.~\eqref{stoweight}, and $\bm \omega \sim \mathcal{N}(\bm 0, \bm I_d)$.
Therefore, $M^{(1,d)}(f, \bm \omega)$ is a \emph{randomized} rule such that $M^{(1,d)}(f, \bm \omega) \approx \langle \widetilde{\Phi}(\bm x), \widetilde{\Phi}(\bm y) \rangle$.
Unfortunately, unlike that RFF is an unbiased estimator of the original kernel, the obtained estimator $M^{(1,d)}(f, \bm \omega)$ is biased, i.e., $\mathbb{E}_{\bm \omega} [M^{(1,d)}(f, \bm \omega)] = Q^{(1,d)}(f) \neq I_d(f)$.
Besides, albeit stochastic, the designed $M^{(1,d)}(f, \bm \omega)$ still outputs the fixed dimension of the feature mapping, i.e., $N=2d+1$. In this case, we cannot flexibly tune it for practical requirements.

To tackle the above two issues, by virtue of Monte-Carlo sampling and control variates techniques \cite{rubinstein1985efficiency}, the designed S-FS is to pursue an unbiased estimator based on the formulation of $M^{(1,d)}(f, \bm \omega)$. Besides, the dimension of the feature mapping by S-FS can be flexibly tuned.
{According to Eq.~\eqref{originte}, we have the following equality
\begin{equation*}
\begin{split}
	k(\bm x, \bm y) & = \mathbb{E}_{\bm \omega}[M^{(1,d)}(f,\bm \omega)] + \mathbb{E}_{\bm \omega} [f(\bm \omega) - M^{(1,d)}(f,\bm \omega)] \\
	& =  Q^{(1,d)}(f) + \mathbb{E}_{\bm \omega}[f(\bm \omega) - M^{(1,d)}(f,\bm \omega)] \,.
\end{split}
\end{equation*}
As a result, the Monte-Carlo sampling for $f(\bm \omega)$ in Eq.~\eqref{originte} is transformed to estimate the difference $f(\bm \omega) - M^{(1,d)}(f,\bm \omega)$.} If $M^{(1,d)}(f,\bm \omega)$ is close to $f(\bm \omega)$ in the sense that the difference has smaller variance than $f(\bm \omega)$, variance reduction can be achieved.\footnote{It is possible to design other estimators close to $f(\bm \omega)$ for variance reduction. Roughly speaking, if the estimator is closer to $f(\bm \omega)$, then more variance reduction can be achieved.} 
Formally, by defining
\begin{equation}\label{rmf}
	\begin{split}
		R_1(f, \bm \omega) = Q^{(1,d)}(f) + f(\bm \omega) - M^{(1,d)}(f,\bm \omega)\,,
	\end{split}
\end{equation}
then our third-degree S-FS is defined as $\bar{R}_1(f, \bm \omega)$ such that 
\begin{equation}\label{Rmf}
	k(\bm x, \bm y) \approx \bar{R}_1(f, \bm \omega) := \frac{1}{D} \sum_{i=1}^D R_1(f, \bm \omega_i)\,,
\end{equation}
with $\{ \bm \omega_i \}_{i=1}^D \sim \mathcal{N}(\bm 0, \bm I_d)$.
Then by defining
\begin{equation*}
	\varphi(\bm x)=1/\sqrt{D}[\phi(\bm \omega_1^{\!\top}\bm x), \cdots, \phi(\bm \omega_D^{\!\top}\bm x)]^{\!\top} \in \mathbb{R}^D \,,
\end{equation*}
the final feature mapping associated with $\bar{R}_1(f, \bm \omega)$ is given by
\begin{equation}\label{feamapf}
\widehat{\Phi}(\bm x) \!=\!\! \left[\varphi(\bm x)^{\!\top}, \left(\!\frac{\mathfrak{i}}{D}\sum_{i=1}^D\widetilde{\Phi}(\bm x, \bm \omega_i)\! \right)^{\!\top}\!, \Phi(\bm x)^{\!\top} \!\right]^{\!\top}\!\!\! \in \mathbb{R}^{D+4d+2}\,,
\end{equation}
where {the symbol $\mathfrak{i}$ is the imaginary unit}, $\{ \bm \omega_i \}_{i=1}^D \!\sim\! \mathcal{N}(\bm 0, \bm I_d)$, and the mappings $\Phi(\bm x)$ and $\widetilde{\Phi}(\bm x, \bm \omega_i)$ are given by Eqs.~\eqref{determap} and \eqref{randmap1}, respectively.
As a consequence, we have $k(\bm x, \bm y) = \mathbb{E} \langle \widehat{\Phi}(\bm x), \widehat{\Phi}(\bm y) \rangle$.\\
{\bf Remark:} We make the following remarks.\\
{1) The considered kernels in this paper are real-valued. To approximate them, we introduce the imaginary unit in the feature mapping~\eqref{feamapf} due to the difference operation, i.e., $a-b=\langle (\sqrt{a}, \mathfrak{i}\sqrt{b}), (\sqrt{a}, \mathfrak{i}\sqrt{b}) \rangle$ for any $a,b \geq 0$, but the approximated kernels still remain real-valued.}\\
2) We can control the discrepancy between the original and approximated kernels by varying $D$ in the feature mapping $\widehat{\Phi}(\bm x) \in \mathbb{R}^{D+4d+2}$. Note that, the feature mappings $\widetilde{\Phi}(\bm x, \bm \omega_i)$ and $\Phi(\bm x)$ in Eq.~\eqref{feamapf} have only $2d$ non-zero elements. The nodes are independent of the sampling process and can be pre-given by Eq.~\eqref{fsirwa}.
In this case, S-FS still achieves the same space and time complexity $\mathcal{O}(Dd)$ with RFF.\\
3) Sampling $\{ \bm \omega_i \}_{i=1}^D \sim \mathcal{N}(\bm 0, \bm I_d)$ is not limited to the standard Monte Carlo sampling. It can be extended to other advanced approaches, e.g., QMC, SSR, as alternative ways, for pursuing further variance reduction. 
Our experimental results also verify this, see Section~\ref{sec:exp:sto} for details.

\subsection{Statistical Properties}
\label{sec:stati}
This subsection elucidates that i) our third-degree S-FS is unbiased, see Theorem~\ref{theun}; 
ii) exhibits a variance reduction property in Theorem~\ref{thmvar}.

\begin{theorem}\label{theun}
	(Unbiased estimation) Our stochastic rule $\bar{R}_1(f)$ in Eq.~\eqref{Rmf} is an unbiased third-degree rule for $I_d(f)$ in Eq.~\eqref{originte}.
\end{theorem}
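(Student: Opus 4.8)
The plan is to verify directly that $\mathbb{E}[\bar R_1(f,\bm\omega)] = I_d(f)$, exploiting the control-variate structure of Eq.~\eqref{rmf}. Since $\bar R_1$ in Eq.~\eqref{Rmf} is an empirical average of $D$ i.i.d.\ copies $R_1(f,\bm\omega_i)$ with $\bm\omega_i\sim\mathcal{N}(\bm 0,\bm I_d)$, linearity of expectation immediately reduces the claim to a single-sample identity, namely $\mathbb{E}_{\bm\omega}[R_1(f,\bm\omega)] = I_d(f)$. Expanding $R_1(f,\bm\omega)=Q^{(1,d)}(f)+f(\bm\omega)-M^{(1,d)}(f,\bm\omega)$ and noting that $Q^{(1,d)}(f)$ is deterministic, the expectation splits into three terms, of which $\mathbb{E}_{\bm\omega}[f(\bm\omega)] = I_d(f)$ holds by the very definition of $I_d$ in Eq.~\eqref{originte}.

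The crux is therefore to show that the randomized rule $M^{(1,d)}(f,\bm\omega)$ is an unbiased surrogate for the deterministic one, i.e.\ $\mathbb{E}_{\bm\omega}[M^{(1,d)}(f,\bm\omega)] = Q^{(1,d)}(f)$; once this is in hand, the first and third terms cancel and only $I_d(f)$ survives. Because the nodes $\bm 0,\pm\lambda_1\bm e_i$ are fixed and independent of the sampling, this reduces to checking that the randomized weights have the correct means, $\mathbb{E}_{\bm\omega}[\tilde a_0^{(1,d)}]=a_0^{(1,d)}$ and $\mathbb{E}_{\bm\omega}[\tilde a_1^{(1,d)}]=a_1^{(1,d)}$. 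Both follow from the single fact $\mathbb{E}[\omega_i^2]=1$, so that $\mathbb{E}[\sum_{i=1}^d\omega_i^2]=d$: substituting into Eq.~\eqref{stoweight} gives $\mathbb{E}[\tilde a_0^{(1,d)}]=1-d/\lambda_1^2=a_0^{(1,d)}$ and $\mathbb{E}[\tilde a_1^{(1,d)}]=d/(2d\lambda_1^2)=1/(2\lambda_1^2)=a_1^{(1,d)}$, matching the deterministic weights of Eq.~\eqref{fsirwa}. This is precisely the design principle behind Eq.~\eqref{stoweight}, and it is where I expect essentially all of the (modest) work to sit.

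Chaining these facts yields $\mathbb{E}_{\bm\omega}[R_1(f,\bm\omega)]=Q^{(1,d)}(f)+I_d(f)-Q^{(1,d)}(f)=I_d(f)$, and hence $\mathbb{E}[\bar R_1(f,\bm\omega)]=I_d(f)$, which is the asserted unbiasedness. The ``third-degree'' qualifier is inherited rather than separately proved: $M^{(1,d)}$ and $\bar R_1$ are built on top of the third-degree deterministic rule $Q^{(1,d)}$, whose polynomial exactness up to total degree $2m+1=3$ is already guaranteed by the theory of fully symmetric interpolatory rules recalled after Definition~\ref{deffsir}. I do not anticipate a genuine difficulty here; the only point requiring care is making the control-variate cancellation explicit, namely confirming that $\mathbb{E}_{\bm\omega}[f(\bm\omega)-M^{(1,d)}(f,\bm\omega)]$ subtracts off exactly $Q^{(1,d)}(f)$ and not merely an approximation of it, which is why the two weight-mean identities above must hold as exact equalities and not just asymptotically.
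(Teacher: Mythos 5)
Your unbiasedness argument coincides with the paper's proof step for step: the paper likewise computes $I_d(\tilde a_0^{(1,d)}) = a_0^{(1,d)}$ and $I_d(\tilde a_1^{(1,d)}) = a_1^{(1,d)}$ from $\mathbb{E}\bigl[\sum_{i=1}^d \omega_i^2\bigr] = d$, concludes $Q^{(1,d)}(f) = \mathbb{E}_{\bm\omega}[M^{(1,d)}(f,\bm\omega)]$, cancels the control variate to obtain $\mathbb{E}_{\bm\omega}[R_1(f,\bm\omega)] = I_d(f)$, and then passes to the i.i.d.\ average $\bar R_1$. The only divergence is the ``third-degree'' qualifier: you declare it inherited from the exactness of $Q^{(1,d)}$, whereas the paper verifies it explicitly on monomials, asserting $R_1(f,\bm\omega) = Q^{(1,d)}(f)$ for $f(\bm\omega) = \bm\omega^{2\bm u}$ with $\|\bm u\|_1 \le 1$ and $R_1(f,\bm\omega) = 0$ when some exponent is odd. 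Under the in-expectation reading of ``third-degree stochastic rule'' your shortcut is harmless: unbiasedness for arbitrary integrands already yields exactness in expectation on all polynomials of degree at most three, which is all that your argument (and, in substance, the paper's check) establishes. Note only that neither route gives the stronger realization-wise exactness that the paper's wording suggests --- for instance, with $f(\bm\omega) = \omega_j^2$ one has $R_1(f,\bm\omega) = 1 + \omega_j^2 - \frac{1}{d}\sum_{i=1}^d \omega_i^2$, which equals $Q^{(1,d)}(f) = 1$ only in expectation --- so on this secondary point your treatment is no weaker than the paper's, and the substantive claim, unbiasedness, is proved identically in both.
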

\begin{proof}
	Refer to Appendix~\ref{sec:theun}.
\end{proof}
\noindent{\bf Remark:} We have $Q^{(1,d)}(f) = \mathbb{E}_{\bm \omega \sim \mu}[M^{(1,d)}(f, \bm \omega)]$, and thus $M^{(1,d)}(f, \bm \omega)$ is an asymptotically unbiased estimator of $I_d(f)$.

Based on the above unbiased estimation, in the next, we derive the variance of our third-degree S-FS for Gaussian kernel approximation.
Before proceeding, we introduce some notations and definitions.
For the Gaussian kernel $k(\bm x, \bm y) = \exp\left(-{\| \bm x - \bm y\|_2^2}/{(2\sigma^2)}\right)$, we use the convenient shorthands $\bm z := (\bm x - \bm y)/\sigma$ and  $z :=\| \bm z \|_2$.
For an algorithm ${\tt A}$ sampling $\{ \bm \omega_i \}_{i=1}^D \sim \mu$, we define its expectation $\mathbb{E}[{\tt A}] := \mathbb{E}_{\bm \omega_i \sim \mu} \left[ 1/D \sum_{i=1}^D \cos(\bm \omega_i^{\!\top} \bm z) \right]$ and variance $\mathbb{V}[{\tt A}]:=  \mathbb{V}_{\bm \omega_i \sim \mu}\left[ {1}/{D} \sum_{i=1}^D \cos(\bm \omega_i^{\!\top} \bm z) \right]$.

\begin{theorem}\label{thmvar}
	(Lower variance)	For the Gaussian kernel $k(\bm x, \bm y) = \exp\left(-{\| \bm x - \bm y\|_2^2}/{(2\sigma^2)}\right)$, denoting $z :=\| \bm z \|_2$ with $\bm z := (\bm x - \bm y)/\sigma$, $Q := Q^{(1,d)}(f)$ for notational simplicity, then the variance of our third-degree S-FS~\eqref{Rmf} is
	\begin{equation}\label{varrmfthe}
	\begin{split}
	\mathbb{V}[\bar{R}_1(f,\bm \omega)] \!-\!  \mathbb{V}[\text{RFF}] &\!=\! \frac{2}{Dd} \underbrace{ \left(\! \left[\! (1\!-\!Q) \!-\! \frac{1}{2} z^2 e^{-\frac{z^2}{2}} \!\right]^2\!\! \!\!-\! \frac{1}{4} z^4 e^{-{z^2}} \! \right)}_{\triangleq h_{\text{S-FS}}(\bm z)} ,
	\end{split}
	\end{equation}
	where $\mathbb{V}[\text{RFF}] = {\left(1-e^{-z^2}\right)^2}/{(2D)}$ is given by \cite{Yu2016Orthogonal}.
	In particular, the variance reduction can be achieved by
	\begin{equation}\label{varneg}
	\begin{split}
	\mathbb{V}[\bar{R}_1(f, \bm \omega)] \!-\!  \mathbb{V}[\text{RFF}] < 0 ~~\mbox{when}~~ 1-Q < z^2 e^{-\frac{z^2}{2}} \,.
	\end{split}
	\end{equation}
\end{theorem}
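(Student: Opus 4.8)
The plan is to exploit the i.i.d.\ structure of the estimator to collapse everything to a single per-sample variance, and then evaluate that variance in closed form for the Gaussian integrand $f(\bm\omega)=\cos(\bm\omega^{\!\top}\bm z)$. Since $\bar R_1(f,\bm\omega)=\tfrac1D\sum_{i=1}^D R_1(f,\bm\omega_i)$ averages i.i.d.\ copies and $Q$ is a deterministic constant (so it contributes nothing to the variance), I would first write $\mathbb{V}[\bar R_1(f,\bm\omega)]=\tfrac1D\,\mathbb{V}[R_1(f,\bm\omega)]=\tfrac1D\,\mathbb{V}\!\left[f(\bm\omega)-M^{(1,d)}(f,\bm\omega)\right]$, using Eqs.~\eqref{rmf}--\eqref{Rmf}. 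This reduces the whole theorem to computing one scalar variance.

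The next step is to specialize to the Gaussian kernel and simplify $M^{(1,d)}$. The key observations are $f(\bm 0)=1$ and $f(\pm\lambda_1\bm e_i)=\cos(\lambda_1 z_i)$, so substituting the randomized weights of Eq.~\eqref{stoweight} collapses the rule into $M^{(1,d)}(f,\bm\omega)=1-\tfrac{S}{\lambda_1^2}C$, where $S:=\sum_{i=1}^d\omega_i^2$ is a $\chi^2_d$ variable and $C:=1-\tfrac1d\sum_{i=1}^d\cos(\lambda_1 z_i)$ is a deterministic quantity. Performing the same substitution in the deterministic rule~\eqref{q1df} gives $Q=1-\tfrac{d}{\lambda_1^2}C$, which lets me eliminate $C$ via $C/\lambda_1^2=(1-Q)/d$; this is precisely what forces the final answer to depend on $\bm z$ only through $z$ and $Q$.

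I would then expand the per-sample variance. Dropping the additive constant, $f-M=\cos(\bm\omega^{\!\top}\bm z)+\tfrac{C}{\lambda_1^2}S+\mathrm{const}$, so $\mathbb{V}[f-M]=\mathbb{V}[\cos(\bm\omega^{\!\top}\bm z)]+\tfrac{C^2}{\lambda_1^4}\mathbb{V}[S]+2\tfrac{C}{\lambda_1^2}\operatorname{Cov}\!\left(\cos(\bm\omega^{\!\top}\bm z),S\right)$. Three ingredients are needed: $\mathbb{V}[\cos(\bm\omega^{\!\top}\bm z)]=\tfrac12(1-e^{-z^2})^2$ (from $\mathbb{E}[\cos(t\,\bm\omega^{\!\top}\bm z)]=e^{-t^2z^2/2}$ together with the double-angle identity, which incidentally recovers the quoted $\mathbb{V}[\text{RFF}]$), the elementary $\mathbb{V}[S]=2d$, and the cross term. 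The main obstacle is this covariance, which requires the mixed moment $\mathbb{E}[\omega_j^2\cos(\bm\omega^{\!\top}\bm z)]$; I would obtain it by writing $\cos(\bm\omega^{\!\top}\bm z)=\operatorname{Re} e^{\mathfrak{i}\bm\omega^{\!\top}\bm z}$ and differentiating the characteristic function $\psi(\bm z)=e^{-\|\bm z\|_2^2/2}$ twice in $z_j$, which yields $\mathbb{E}[\omega_j^2\cos(\bm\omega^{\!\top}\bm z)]=(1-z_j^2)e^{-z^2/2}$, hence $\mathbb{E}[S\cos(\bm\omega^{\!\top}\bm z)]=(d-z^2)e^{-z^2/2}$ and $\operatorname{Cov}(\cos(\bm\omega^{\!\top}\bm z),S)=-z^2e^{-z^2/2}$.

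Assembling these and subtracting $\mathbb{V}[\text{RFF}]$ cancels the $\mathbb{V}[\cos]$ term, leaving $\mathbb{V}[\bar R_1]-\mathbb{V}[\text{RFF}]=\tfrac{2}{Dd}\big[(1-Q)^2-(1-Q)z^2e^{-z^2/2}\big]$; completing the square in the bracket reproduces $h_{\text{S-FS}}(\bm z)$ exactly as stated in~\eqref{varrmfthe}. For the reduction criterion~\eqref{varneg}, I would note that $\cos\le 1$ forces $C\ge 0$ and hence $1-Q=\tfrac{d}{\lambda_1^2}C\ge 0$; factoring $h_{\text{S-FS}}(\bm z)=(1-Q)\big[(1-Q)-z^2e^{-z^2/2}\big]$ then shows it is negative exactly when $1-Q<z^2e^{-z^2/2}$, which is the claimed condition.
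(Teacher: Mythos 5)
Your proposal is correct and reaches exactly the paper's identity, but its bookkeeping is genuinely different and in two places cleaner. The paper expands the full six-term square $\mathbb{E}\bigl[(f-M^{(1,d)}+Q)^2\bigr]-\bigl(\mathbb{E}[R_1]\bigr)^2$, computes $\mathbb{E}[(M^{(1,d)})^2]$ separately (which requires the second moment $\mathbb{E}[S^2]=d^2+2d$ of $S=\sum_i\omega_i^2$) as well as $\mathbb{E}[f\,M^{(1,d)}]$, and only at the end substitutes $\sum_{i=1}^d[f]=2\lambda_1^2Q-2\lambda_1^2+2d$. You instead discard the deterministic shift $Q$ from the variance immediately, collapse the randomized rule to the affine form $M^{(1,d)}=1-\tfrac{S}{\lambda_1^2}C$ with $C=1-\tfrac1d\sum_i\cos(\lambda_1 z_i)$, and apply the three-term variance formula, so only $\mathbb{V}[S]=2d$ and one covariance are needed; that covariance rests on the same mixed moment $\mathbb{E}[\omega_j^2\cos(\bm\omega^{\!\top}\bm z)]=(1-z_j^2)e^{-z^2/2}$ as the paper's Lemma~\ref{lemmafw}, which you obtain by differentiating the characteristic function rather than by the paper's direct complex Gaussian integration --- an equivalent computation. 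The more substantive improvement is in the sign condition: the paper justifies $1-Q\ge 0$ by asserting $Q\in[0,1]$ after a thresholding operation, which is slightly awkward (thresholding $Q$ would alter the variance formula just derived), whereas your identity $1-Q=\tfrac{d}{\lambda_1^2}C$ with $C\ge 0$ (from $\cos\le 1$) proves $Q\le 1$ structurally, with no modification of the estimator. One pedantic caveat shared by both arguments: when $C=0$ (i.e., all $\lambda_1 z_i\in 2\pi\mathbb{Z}$) the difference of variances is $0$ rather than strictly negative, so the strict inequality in~\eqref{varneg} implicitly assumes $1-Q>0$.
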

\begin{proof}
	Refer to Appendix~\ref{sec:thmvar}.
\end{proof}
{\bf Remark:} The condition $1-Q < z^2 e^{-\frac{z^2}{2}}$ in Eq.~\eqref{varneg} holds for most cases with detailed discussion in Section~\ref{sec:condition}.
Even if this condition does not hold in some rare cases, there is an alternative way to make it attainable: normalizing $z:=\| \bm x - \bm y \|_2/\sigma$ to $z:=\| \bm x - \bm y \|_2/\sqrt{d \sigma^2}$ by a scaling factor $\sqrt{d}$.
This normalization strategy implies that the used Gaussian kernel admits $k(\bm x, \bm y) = \exp(-\| \bm x - \bm y \|^2_2/(d\sigma^2))$, which is quite common in practice and theory.
For example, in SSR \cite{munkhoeva2018quadrature}, the authors directly employ the formulation $k(\bm x, \bm y) = \exp(-\| \bm x - \bm y \|^2_2/d)$.
In fact, this setting is well studied in random matrix theory and high-dimensional statistics, see \cite{el2010spectrum,jacot2020kernel,liang2020just}, and accordingly the used normalization strategy depending $d$ is common and fair.

Here we compare the obtained theoretical results with other representative methods on the estimated variance reduction.

\noindent {\bf Variance of ORF} \cite{Yu2016Orthogonal} is bounded by
\begin{equation*}
\mathbb{V}[\text{ORF}] -  \mathbb{V}[\text{RFF}] \leq \frac{1}{D} \underbrace{\left( \frac{g(z)}{d} - \frac{(d-1)e^{-z^2}z^4}{2d} \right)}_{\triangleq h_{\text{ORF}}(z)} \,,
\end{equation*}
where the function $g$ is
$g(z) = {e^{z^{2}}\left(z^{8}+6 z^{6}+7 z^{4}+z^{2}\right)}/{4}$ $+ {e^{z^{2}} z^{4}\left(z^{6}+2 z^{4}\right)}/{(2 d)}$, at an exponential growth of $z$.

\noindent {\bf Variance of SSR} \cite{munkhoeva2018quadrature} {is bounded by 
\begin{equation}\label{varssr}
\mathbb{V}[\text{SSR}] -  \mathbb{V}[\text{RFF}]  \! \leq \! \frac{1}{D} \underbrace{\left(\! \frac{8d\!+\!12}{d-2} - \frac{(1\!-\!e^{-z^2})^2}{2} \! \right)}_{\triangleq h_{\text{SSR}}(z) > 0} \,,
\end{equation}
with the positive $h_{\text{SSR}}(z)$ satisfying $\lim_{z \rightarrow \infty} h_{\text{SSR}}(z) = 8$.
}

\begin{figure} [t]
	\centering 
	\subfigure[$d=10$]{\label{orfssr}
		\includegraphics[width=0.222\textwidth]{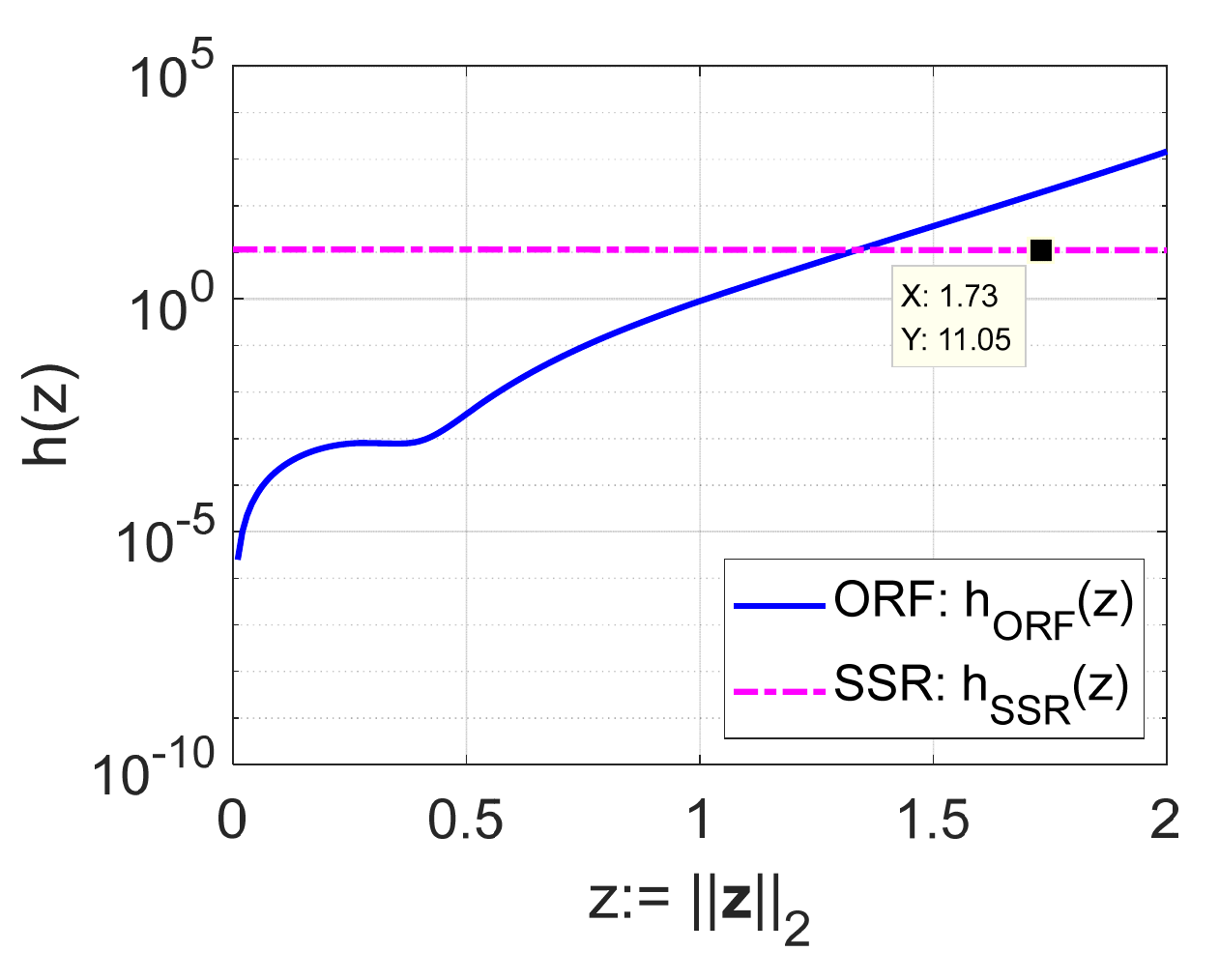}}
	\subfigure[$d=1$]{\label{oursvar}
		\includegraphics[width=0.222\textwidth]{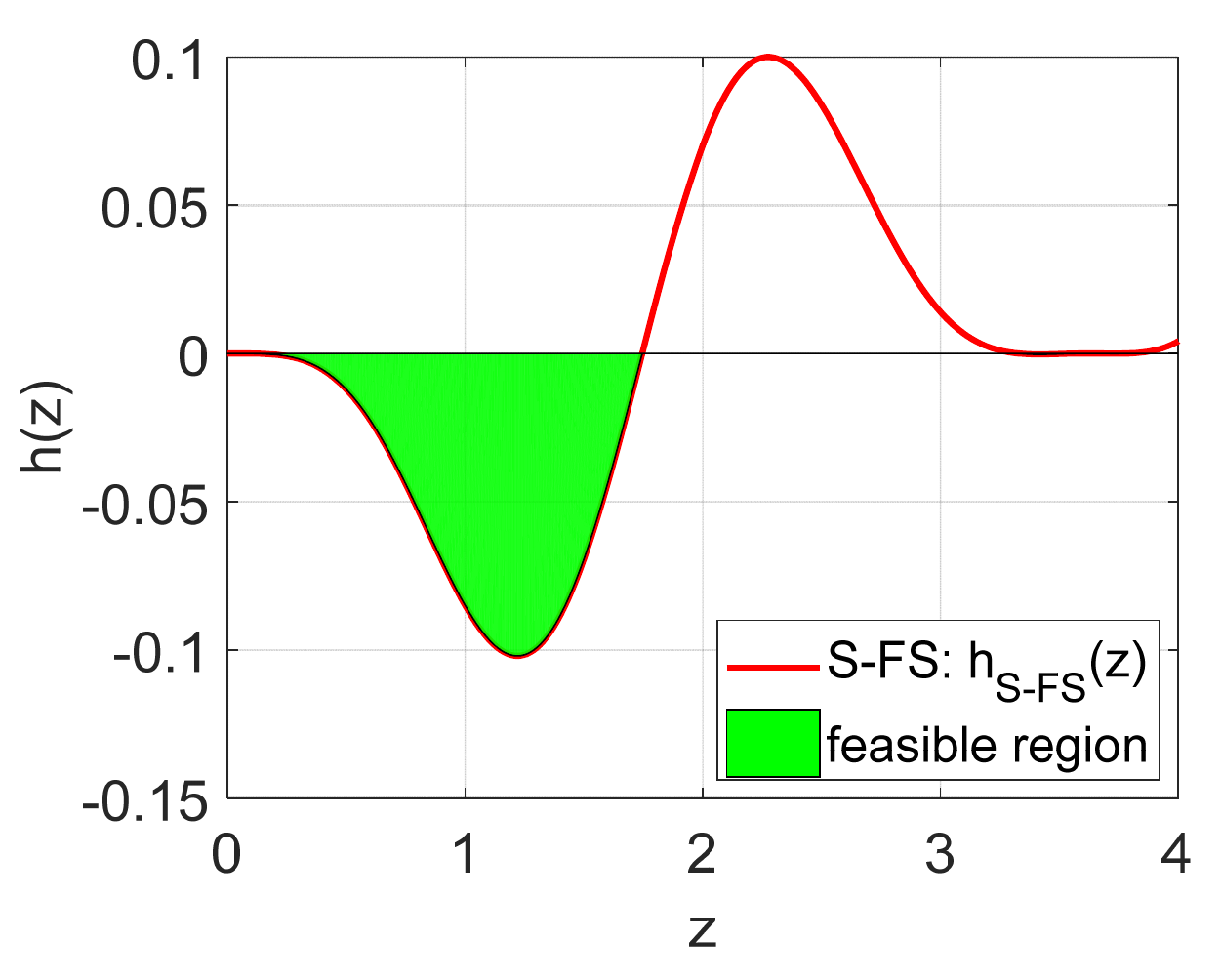}}
	\caption{Comparison of $h(z)$ versus the distance $z:= \| \bm z \|_2$ across ORF, SSR (a) and S-FS (b). Since $h_{\text{S-FS}}(\bm z)$ is no loner a radial function of $\bm z$ due to $Q$ depending on $\bm z$, we just present the univariate case of $h_{\text{S-FS}}(z)$ for intuitive display.}\label{orfssroursvar}
\end{figure}%

For better illustration, we plot the function $h(z)$ including $h_{\text{ORF}}$, $h_{\text{SSR}}$, $h_{\text{S-FS}}$ versus the distance $z:=\| \bm z \|_2$ in Figure~\ref{orfssroursvar} for intuitive explanation.
In our simulation, we set $d=10$ as an example.
It can be found that, 1) $h_{\text{ORF}}$ is positive and thus the variance reduction cannot be demonstrated in theory. More specifically, $h_{\text{ORF}}$ almost increases at an exponential order of $z$, which leads to a quite loose bound for variance estimation. 
2) SSR cannot strictly guarantee $\mathbb{V}[\text{SSR}] < \mathbb{V}[\text{RFF}]$ due to $h_{\text{SSR}}(z) > 0$ in Eq.~\eqref{varssr}.
Instead, our theoretical result in Theorem~\ref{thmvar} admits $\mathbb{V}[\bar{R}_1(f, \bm \omega)] <  \mathbb{V}[\text{RFF}]$ under the condition in Eq.~\eqref{varneg} for variance reduction, as demonstrated by Figure~\ref{oursvar} in the univariate case. Even if this condition does not hold, we still have $\mathbb{V}[\bar{R}_1(f, \bm \omega)] \!-\!  \mathbb{V}[\text{RFF}] \leq {2}/{(Dd)}$ at a certain $\mathcal{O}\left(1/(Dd) \right)$ rate as $h_{\text{S-FS}}(z)$ is bounded.
This is faster than SSR converging at a certain $\mathcal{O}(1/D)$ rate.

\subsection{Discussion on the Condition~\eqref{varneg} in Theorem~\ref{thmvar}}
\label{sec:condition}

{Here we verify that the condition~\eqref{varneg} for $\mathbb{V}[R_1(f, \bm \omega)] -  \mathbb{V}[\text{RFF}] < 0$ in Theorem~\ref{thmvar} holds for most cases.
The description of the used four datasets (\emph{magic04}, \emph{letter}, \emph{ijcnn1}, \emph{covtype}) for numerical validation is deferred to our experiments in Section~\ref{sec:exp}.

Under the Gaussian kernel setting, recall our third-degree D-FS \eqref{q1df}, the condition~\eqref{varneg} is equivalent to
\begin{equation}\label{jzfunc}
\frac{d}{3} - \frac{1}{3} \sum_{i=1}^d \cos(\sqrt{3} \bm e_i^{\!\top} \bm z) - \| \bm z\|_2^2 \exp(-\| \bm z \|_2^2/2) < 0\,.
\end{equation}
For notational simplicity, we denote the left-hand side of the above inequality as $J(\bm z)$.
In the next, we first study the existence of solutions to Eq.~\eqref{jzfunc} and then numerically validate that the condition under these solutions holds for most cases. 

\begin{table}[t]
	\centering
	\caption{The maximum radius of the hyper-ball $\mathcal{S}^d(r)$ under various $d$.}
	\begin{threeparttable}
		\begin{tabular}{cccccccccccccc}
			\toprule
			$d$ & $r_{\max}$ & $d$ & $r_{\max}$  \\
			\midrule
			$10$ (\emph{magic04})  &$1.208$ & $50$ & $1.1837$  \\
			\hline
			$16$ (\emph{letter}) & $1.1964$ & $54$ (\emph{covtype}) & $1.1831$ \\
			\hline
			$20$ & $1.1896$ & $100$ & $1.18$ \\
			\hline
			$22$ (\emph{ijcnn1}) & $1.1909$  & $200$ & $1.1787$
			\\
			\bottomrule
		\end{tabular}
	\end{threeparttable}\label{tabcondition}
\end{table}

\begin{figure}[t]
	\centering
	\subfigure[\emph{magic04}]{
		\includegraphics[width=0.225\textwidth]{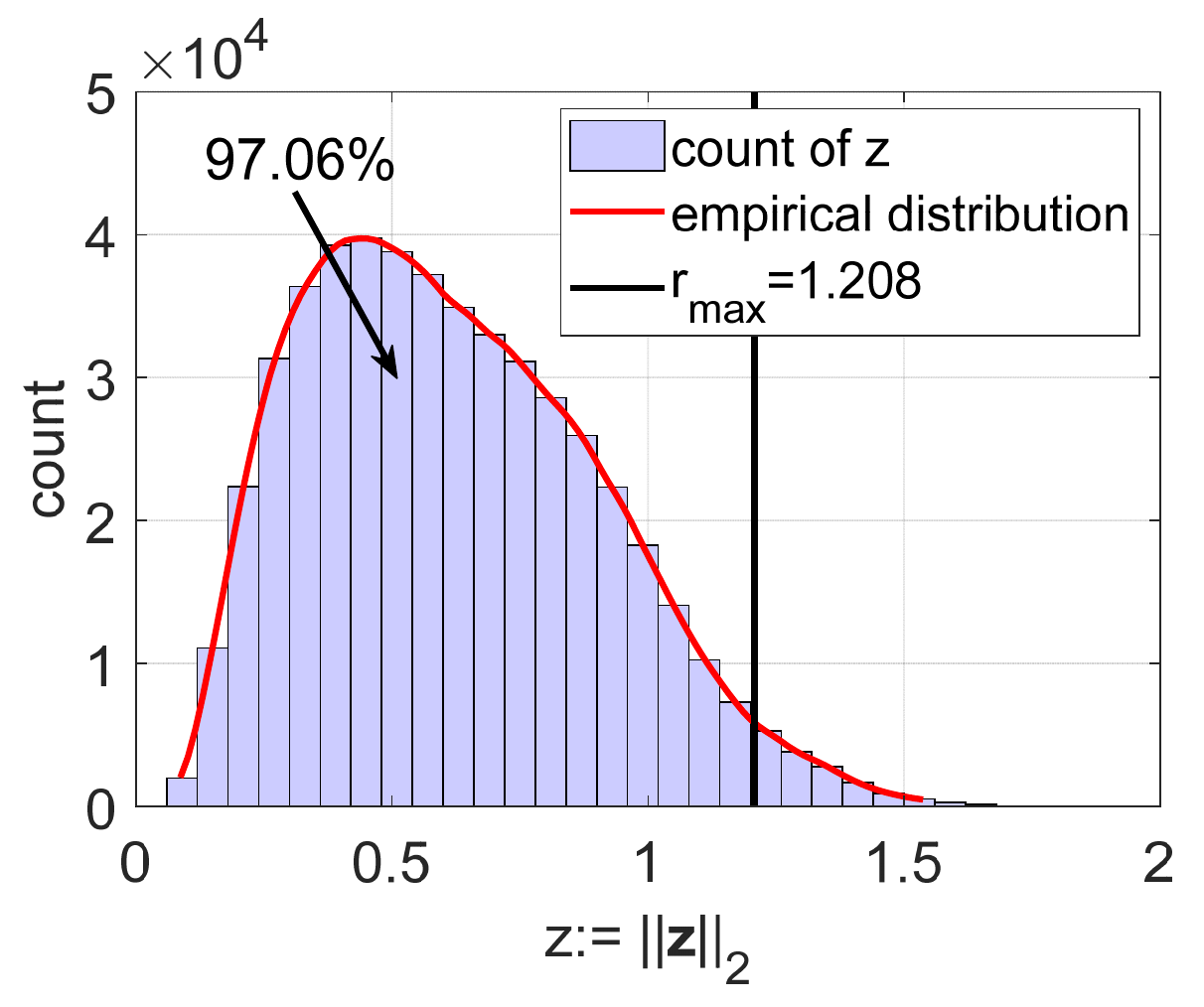}}
	\subfigure[\emph{letter}]{
		\hspace{0.01cm}
		\includegraphics[width=0.225\textwidth]{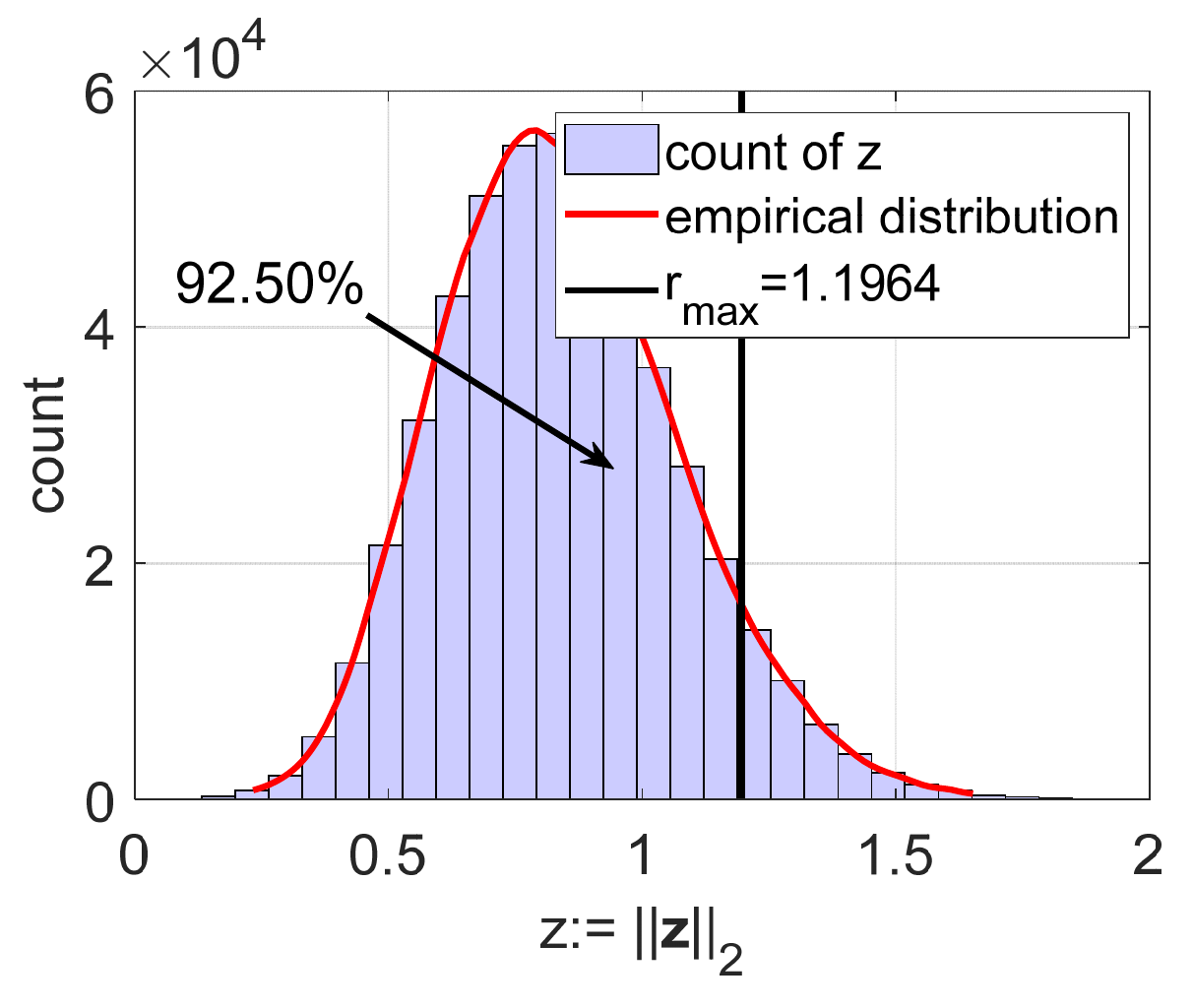}}
	\subfigure[\emph{ijcnn1}]{
		\hspace{0.01cm}
		\includegraphics[width=0.225\textwidth]{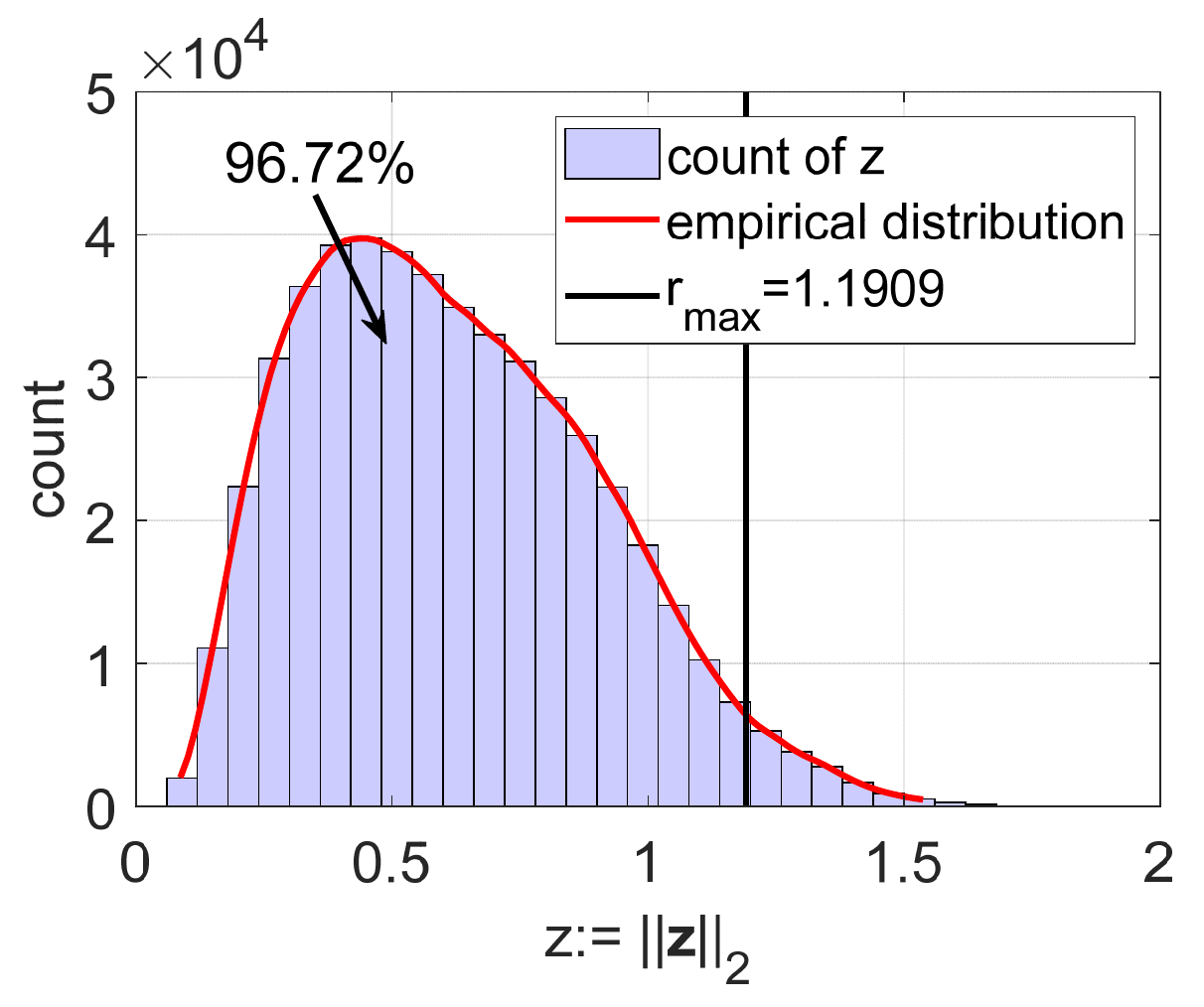}}
	\subfigure[\emph{covtype}]{
		\hspace{0.01cm}
		\includegraphics[width=0.225\textwidth]{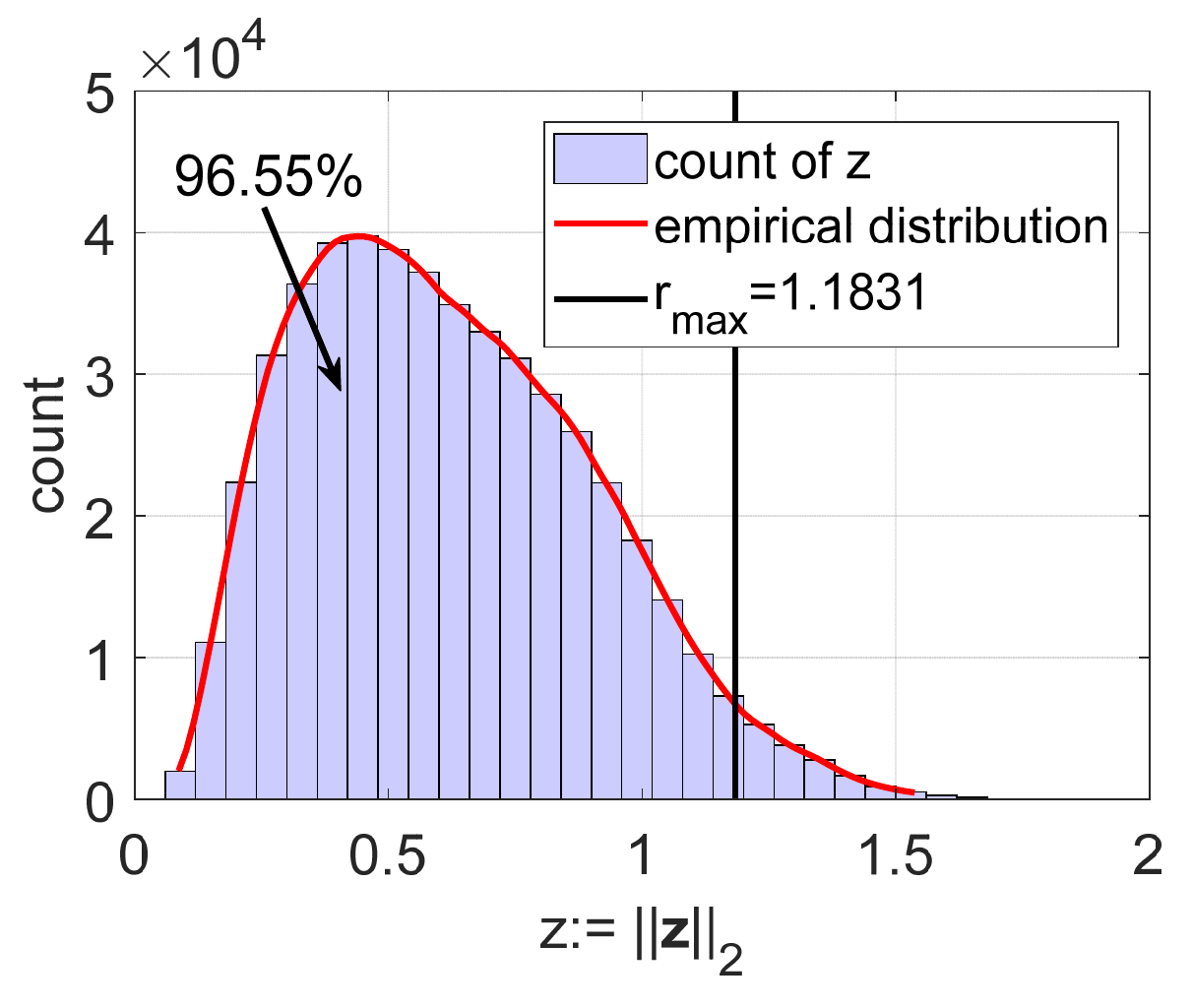}}
	\caption{Empirical distribution of $z$ in four datasets used in this paper.}\label{appcon}
	\vspace{-0.05cm}
\end{figure}

\subsubsection{Existence}
We consider a simple case: finding a $d$-dimensional Euclidean ball $\mathcal{S}^d(r)=\{ \bm z \in \mathbb{R}^d: \| \bm z \|_2 \leq r \}$ as the feasible region, such that all points in $\mathcal{S}^d(r)$ admit $J(\bm z) < 0$.
As a result, our target is transformed to maximize $r$ by solving a one-dimensional optimization problem
\begin{equation*}
	\max~r~\,,~\mbox{s.t.}~~
	\frac{1}{3} - \frac{1}{3} \cos\left(\frac{\sqrt{3}r}{\sqrt{d}}\right) - \frac{r^2}{d} \exp\left(\frac{r^2}{2} \right) < 0\,.
\end{equation*}
After numerical calculation, the maximum radius $r_{\max}$ under different $d$ is reported in Table~\ref{tabcondition}.
That means, given $d$, there exists a hyper-ball $\mathcal{S}^d(r_{\max})$ such that any vector $\bm z \in \mathbb{R}^d$ with $z := \| \bm z \|_2 \leq r_{\max}$ admits the condition~\eqref{varneg}.

\subsubsection{Numerical validation}

Here we numerically validate that the obtained $r_{\max}$ in Table~\ref{tabcondition} holds for most cases in four datasets used in this paper.

In our numerical simulation, on each dataset, we randomly select 1,000 data points $\{ \bm x_i \}_{i=1}^{1000}$ to compute the distance $z_{ij} = \| \bm x_i - \bm x_j \|_2$ with $1\leq i,j \leq 1000$, and then construct a histogram with 30 bins for counting $z_{ij}$.\footnote{The diagonal elements $z_{ii}=0$ are not counted and non-diagonal elements are counted only once.}
Figure~\ref{appcon} shows the histogram for counting $z_{ij}$ and the fitted empirical distribution of $z$ on these four datasets.
Observe that all the datasets admit $z_{ij}<2$, which shows the consistency with \cite{Yu2016Orthogonal} (see Figure 2(c) in their paper).
Further, we also plot $r_{\max}$ in Table~\ref{tabcondition} on each dataset (see the black line) in Figure~\ref{appcon}, and find that, over 90\% of $\{ z_{ij} \}_{i,j=1}^{1000}$ satisfy $z:= \| \bm z \|_2 \leq r_{\max}$. That means, our condition~\eqref{varneg} holds for most cases and thus is fair and attainable.
}

\section{Unifying Framework for Quadrature Methods}
\label{sec:relation}
In this section, we investigate the relations among third-degree rules, including SGQ \cite{dao2017gaussian}, SSR \cite{munkhoeva2018quadrature}, and our deterministic/stochastic rules, i.e., D-FS/S-FS.
Subsequently, we cast them in our unifying framework for kernel approximation.

\subsection{Relations to SGQ}
The sparse grids used in \cite{dao2017gaussian} are based on the Smolyak rule \cite{heiss2008likelihood} which can be approximated by a sequence of nested univariate quadrature rules in a tensor product fashion
\begin{equation}\label{sgq}
I_d(f) \!\approx \! A_{d, L}(f)\!=\!\sum_{q=0}^{L-1} \sum_{\bm{i} \in \mathcal{C}_{q}^{d}} \! \left(\Delta_{i_{1}} \otimes \cdots \otimes \Delta_{i_{d}} \right) \!(f) \,,
\end{equation}
with the index vector $\bm i = [i_1, i_2, \cdots, i_d]$.
The set $\mathcal{C}_{q}^{d}=\left\{\bm{i} \in \mathbb{N}^{d}: \sum_{j=1}^{d} i_{j}=d+q\right\}$ determines the possible accuracy level $i_j$ for each univariate quadrature and the nonnegative $q$ prescribes the range of the accuracy level $i_j$ in each dimension.
$V_{i_j}$ is the univariate quadrature rule with the accuracy level $i_j \in \bm i$, which generates the difference
$\Delta_i(f) = V_i(f) - V_{i-1}(f)$, $\forall i \in \mathbb{N}$.
This rule is a weighted sum of product rules with different combinations of accuracy levels $\bm i$.

To study the relationship between SGQ and D-FS, we construct the third-degree SGQ in Eq.~\eqref{sgq} using the symmetric univariate quadrature point set $\left\{-\hat{p}_{1}, 0, \hat{p}_{1}\right\}$ and the weights $(\hat{a}_1,\hat{a}_0,\hat{a}_1)$, then the integration $I_d(f)$ can be approximated by SGQ
\begin{equation*}\label{sgqthird}
I_{d}(f)\! \approx\! \left(1\!-\!d\!+\!d \hat{a}_{0}\right) f(\bm{0})+\hat{a}_{1} \sum_{j=1}^{d}\!\big[f\left(\hat{p}_{1} \bm{e}_{j}\right)\!+\!f\left(-\hat{p}_{1} \bm{e}_{j}\right)\!\big]\,.
\end{equation*}
If the nodes and their associated weights are chosen by the following scheme
\begin{equation*}
\hat{a}_{0} := 1 - \frac{1}{\lambda_1^2},~\hat{p}_1 := \lambda_1 ,~  \hat{a}_{1} = \frac{1}{2\lambda_1^2} \,,
\end{equation*}
then the third-degree SGQ is equivalent to D-FS in Eq.~\eqref{q1df}, as shown in Figure~\ref{framework}.

\subsection{Relations to SSR}
\label{sec:relationssr}
The key step in SSR \cite{genz1998stochastic} is a change of variable from $\bm \omega \in \mathbb{R}^d$ to a radius $r$ and direction vector $\bm a \in \mathbb{R}^d$.
Let $\bm \omega = r \bm a$ with $\bm a^{\!\top} \bm a = 1$ and $r \in [0,\infty)$, we have
\begin{equation*}\label{srq}
\begin{split}
I_d(f) 
& = \frac{(2 \pi)^{-\frac{d}{2}}}{2} \int_{U_{d}} \int_{-\infty}^{\infty} |r|^{d-1} e^{-\frac{r^{2}}{2}} f(r \bm a) \mathrm{d} \tau(\bm a) \mathrm{d} r \\
& \approx \! f(\bm{0})\! \left(1 \!-\! \frac{d}{\rho^{2}}\right) + \sum_{j=1}^{d} \frac{f\!\left(-\rho \bm Q \bm{e}_{j}\right)+f\left(\rho \bm Q \bm{e}_{j}\right)}{2 \rho^{2}} \,,
\end{split}
\end{equation*}
where $\bm Q$ is a random orthogonal matrix, $\tau(\cdot)$ is the spherical surface measure or the area element on $U_d$, and $\rho \sim \chi(d+2)$.
SSR includes the following two stochastic integration rules: one is stochastic radial rule for approximating the infinite range integral  $\int_{-\infty}^{\infty}  e^{-\frac{r^{2}}{2}} |r|^{d-1} f(r) \mathrm{d} r $; the other is stochastic spherical rule for a surface integral over $U_d$
\begin{equation}\label{ssr3}
	I_{\bm Q, U_{d}}(f)=\frac{|U_{d}|}{2 d} \sum_{j=1}^{d}\left[f\left(\bm Q \bm{e}_{j}\right)+f\left(-\bm Q \bm{e}_{j}\right)\right] \,,
\end{equation}
where $|U_{d}| = 2 \sqrt{\pi^d} / \Gamma(d/2)$ is the surface area of the unit sphere with the Gamma function $\Gamma$.

\begin{table}
	\centering
	\caption{Relationship between typical kernel approximation methods.}
	\begin{threeparttable}
		\begin{tabular}{cccccccccccccc}
			\toprule
			Methods &Parameters in Eq.~\eqref{tris}  \\
			\midrule
			SSR  &$\beta:=d$ \\
			\hline
			$M^{(1,d)}(f)$ in Eq.~\eqref{m1df} & $\rho:=\lambda_1^2$ and $\bm Q:= \bm I$
			\\
			\hline
			ORF & $\beta:=d$ and $\rho \sim \chi(d)$
			\\
			\hline
			$Q^{(1,d)}(f)$ in Eq.~\eqref{q1df} & $\rho:=\lambda_1^2$, $\bm Q:= \bm I$, and $\beta:=d$
			\\
			\hline
			\multirow{2}{1cm}{SGQ} & \multirow{1}{4cm}{$\rho:=\lambda_1^2$, $\bm Q:= \bm I$, $\beta:=d$} \\
			&$\{ \hat{a}_0, \hat{p}_0, \hat{a}_1 \} \leftarrow \lambda_1 $ \\
			\bottomrule
		\end{tabular}
	\end{threeparttable}\label{tabrelation}
\end{table}

Here we present the following theorem that states the relationship between the third-degree stochastic spherical rule and the third-degree D-FS. 
\begin{theorem}\label{thirdproj}
	The third-degree stochastic spherical integration rule~\eqref{ssr3} can be obtained by the random orthogonal projection of D-FS in Eq.~\eqref{q1df}.
\end{theorem}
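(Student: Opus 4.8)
The plan is to rotate the nodes of D-FS by a random orthogonal matrix $\bm Q$ and then read off its directional component, which I will show coincides with the spherical rule~\eqref{ssr3}. I would begin from the orthogonal invariance of the Gaussian measure: since $\bm \omega \sim \mathcal{N}(\bm 0, \bm I_d)$ gives $\bm Q \bm \omega \sim \mathcal{N}(\bm 0, \bm I_d)$, we have $I_d(f) = \int_{\mathbb{R}^d} f(\bm Q \bm \omega)\,\mu(\mathrm{d}\bm \omega)$ for every orthogonal $\bm Q$. Applying $\bm Q$ to each node of $Q^{(1,d)}(f)$ therefore yields a rotated rule that remains exact for all polynomials of total degree at most three. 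As $\bm Q \bm 0 = \bm 0$, the center term $a_0^{(1,d)} f(\bm 0)$ is untouched, while the nonzero nodes $\pm \lambda_1 \bm e_i$ become $\pm \lambda_1 \bm Q \bm e_i$, all lying on the sphere of radius $\lambda_1$ in the directions $\pm \bm Q \bm e_j \in U_d$.

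Next I would separate these rotated nodes via the spherical--radial decomposition recalled in Section~\ref{sec:relationssr}. The magnitude $\lambda_1$ is absorbed by the stochastic radial rule acting on $\int_{-\infty}^{\infty} |r|^{d-1} e^{-r^2/2} f(r)\,\mathrm{d}r$, leaving the unit-norm directions $\pm \bm Q \bm e_j$ as the surface nodes over $U_d$. These are exactly the evaluation points of $I_{\bm Q, U_d}$ in~\eqref{ssr3}, so the two node sets match one-for-one.

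It remains to match the weights. Both the projected D-FS and the spherical rule place equal weight on their $2d$ symmetric directional nodes, so in each case the common weight is pinned down by exactness on constants. For the spherical rule, $\int_{U_d} 1\,\mathrm{d}\tau(\bm a) = |U_d|$ forces the weight $|U_d|/(2d)$ appearing in~\eqref{ssr3}; for D-FS, the zeroth-moment condition $a_0^{(1,d)} + 2d\,a_1^{(1,d)} = 1$ fixes $a_1^{(1,d)} = 1/(2\lambda_1^2)$. The main obstacle I anticipate is the careful bookkeeping of this factorization: one must verify that when the Gaussian integral is split into its radial and surface parts, the surface weight inherited from $a_1^{(1,d)}$ is precisely the multiple of $|U_d|$ dictated by the radial moment $\int_{-\infty}^{\infty}|r|^{d-1}e^{-r^2/2}\,\mathrm{d}r$, so that the equal-weight structures on the shared node set $\{\pm \bm Q \bm e_j\}$ agree exactly and the third-degree exactness on $\mathbb{R}^d$ descends to third-degree exactness on $U_d$.
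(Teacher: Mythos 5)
Your overall route --- rotate the D-FS nodes by a random orthogonal matrix $\bm Q$, pass to the spherical--radial decomposition, and match weights on the shared node set $\{\pm \bm Q\bm e_j\}$ --- is exactly the route the paper takes, so this is not a different proof; the question is whether you actually complete it, and you do not. The weight-matching step, which you yourself flag as ``the main obstacle,'' is the entire content of the theorem, and your argument for it is circular: you deduce that the projected rule carries weight $|U_d|/(2d)$ from the requirement that it integrate constants over $U_d$ exactly, but the exactness of the projected rule on the sphere is precisely what has to be \emph{proved}, not assumed. What the paper supplies at this point is Lemma~\ref{theorem41} (Theorem 4.1 of the cited work of Jia et al.), which gives the quantitative weight-transfer formula: after the change of variables that converts the Gaussian density $e^{-\|\bm\omega\|_2^2/2}$ into the weight $e^{-\bm\omega^{\!\top}\bm\omega}$ (so the D-FS nodes become $\pm\frac{\lambda_1}{\sqrt{2}}\bm e_i$ with weights $\bar a_j=\frac{\pi^{d/2}}{2\lambda_1^2}$), the induced spherical weight for degree $M=2$ is
\begin{equation*}
a_{s,j} \;=\; \frac{\bar a_j\,\|\bar{\bm\gamma}_j\|_2^{M}}{\Gamma\!\left(d/2+M/2\right)/2}
\;=\; \frac{\frac{\pi^{d/2}}{2\lambda_1^2}\cdot\frac{\lambda_1^2}{2}}{\frac{d}{2}\Gamma\!\left(\frac{d}{2}\right)/2}
\;=\;\frac{|U_d|}{2d}\,,
\end{equation*}
as in Eq.~\eqref{orthweight}. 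Note two things your sketch misses: the radial moment in the denominator depends on the polynomial degree $M$, so the induced spherical weight is not obviously degree-independent, and the node magnitude enters through $\|\bar{\bm\gamma}_j\|_2^{M}$ rather than being ``absorbed by the radial rule''; the nontrivial fact is that the factor $\lambda_1^2$ coming from the node radius cancels against the $1/\lambda_1^2$ in the D-FS weight (itself a consequence of second-moment exactness), leaving a $\lambda_1$-free answer. Without this computation, or an equivalent explicit splitting of the Gaussian integral into radial and angular parts for degree-$0$ and degree-$2$ homogeneous polynomials, the identification with Eq.~\eqref{ssr3} is asserted rather than established.

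A smaller but real error: you claim the zeroth-moment condition $a_0^{(1,d)}+2d\,a_1^{(1,d)}=1$ ``fixes'' $a_1^{(1,d)}=1/(2\lambda_1^2)$. It does not --- that is one linear equation in two unknowns. The value $a_1^{(1,d)}=1/(2\lambda_1^2)$ comes from exactness on second-degree monomials ($2a_1^{(1,d)}\lambda_1^2=\mathbb{E}[\omega_i^2]=1$), and the zeroth moment then determines $a_0^{(1,d)}=1-d/\lambda_1^2$. This matters here because it is exactly the second-moment relation, not the zeroth, that makes the $\lambda_1$-cancellation in the displayed weight computation go through.
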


\begin{proof}
	Refer to Appendix~\ref{sec:proofthirdproj}.
\end{proof}
Accordingly, SSR can be obtained by D-FS in Eq.~\eqref{q1df} with the following two randomized steps.
1) random projection: according to Theorem~\ref{thirdproj}, by projecting D-FS to the spherical surface of $U_d$ with a uniform random orthogonal matrix $\bm Q$, we can obtain the third-degree stochastic spherical rule.
2) random generator: the deterministic \emph{generator} $\lambda_1$ in Eq.~\eqref{q1df} by Gaussian quadrature is substituted by a random variable $\rho$ with $\rho \sim \chi(d+2)$.
By doing so, we can transform D-FS to SSR, as shown in Figure~\ref{framework}.

\subsection{Unifying Framework}
\label{sec:unfiframework}
Apart from the relations between our deterministic rule and SSR, here we also study the relationship between S-FS and SSR.
On the one hand, in Eq.~\eqref{rmf}, if we only consider $R_1(f, \bm \omega) := f(\bm \omega)$, S-FS degenerates to RFF with the standard Monte-Carlo sampling scheme.
On the other hand, RFF (also ORF) can be regarded as spacial cases of SSR as demonstrated by
\cite{munkhoeva2018quadrature}.
Furthermore, if we only consider $R_1(f, \bm \omega) := M^{(1,d)}(f, \bm \omega)$ in Eq.~\eqref{rmf}, after the above two stochastic operations (random projection and random \emph{generator}), it is a triple-stochastic rule with the following formulation
\begin{equation}\label{tris}
I_d(f) \! \approx \! f(\bm{0})\! \left(1 \!-\! \frac{\beta}{\rho^{2}}\right) + \frac{\beta}{d}\sum_{j=1}^{d} \frac{f\!\left(-\rho \bm Q \bm{e}_{j}\right)+f\left(\rho \bm Q \bm{e}_{j}\right)}{2 \rho^{2}} \,,
\end{equation}
with $\beta \sim \chi(d)$ and $\rho \sim \chi(d+2)$.
Clearly, this rule is also an unbiased estimator of $I_d(f)$. 
Finally, we summarize the relations between D-FS/S-FS, SGQ, SSR, ORF under the unifying framework in Table~\ref{tabrelation}.

\begin{table}[t] 
		\centering
		\caption{Dataset statistics and the number of \emph{nodes} in fifth-degree rules.} 
		\label{tablarge}
		\begin{tabular}{cccc|cccccccccc}
			\toprule
			\multirow{2}{1.5cm}{\centering{datasets}}& \multirow{2}{1cm}{\centering{$d$}} &\multirow{2}{1cm}{\centering{\#training}} &\multirow{2}{1cm}{\centering{\#test}} &\multicolumn{2}{c}{\#nodes $N$} \cr
			\cmidrule(lr){5-6}
			& & & & SGQ & Ours  \\
			\midrule
			\emph{magic04} &10 &9,510 &9,510 & 221 & 201  \\
			\hline
			\emph{letter} &16 &12,000 &6,000 & 545 & 513 \\
			\hline
			\emph{ijcnn1} &22 &49,990 &91,701 & 1013 & 969 \\
			\hline
			\emph{covtype} &54 &290,506 &290,506 & 5941 & 5833 \\
			\bottomrule
		\end{tabular}
\end{table}

\section{Empirical Results}
\label{sec:exp}

In this section, we empirically compare our deterministic/stochastic rules, D-FS and S-FS, with several representative approaches for kernel approximation, and then incorporate them into the kernel ridge regression (KRR) for classification on several benchmark datasets.
Given nodes and weights in Eq.~\eqref{fsirwa}, our algorithm is straightforward to be implemented for the feature mapping in Eq.~\eqref{determap} by our deterministic rule and Eq.~\eqref{feamapf} by our stochastic rule.
We implement them in MATLAB and carry out on a PC with Intel$^\circledR$ i7-8700K CPU (3.70 GHz) and 64 GB RAM.
The source code of our implementation can be found in \url{http://www.lfhsgre.org}.

\begin{figure*}[!htb]
	\centering
	
	\subfigure{
		\includegraphics[width=0.22\textwidth]{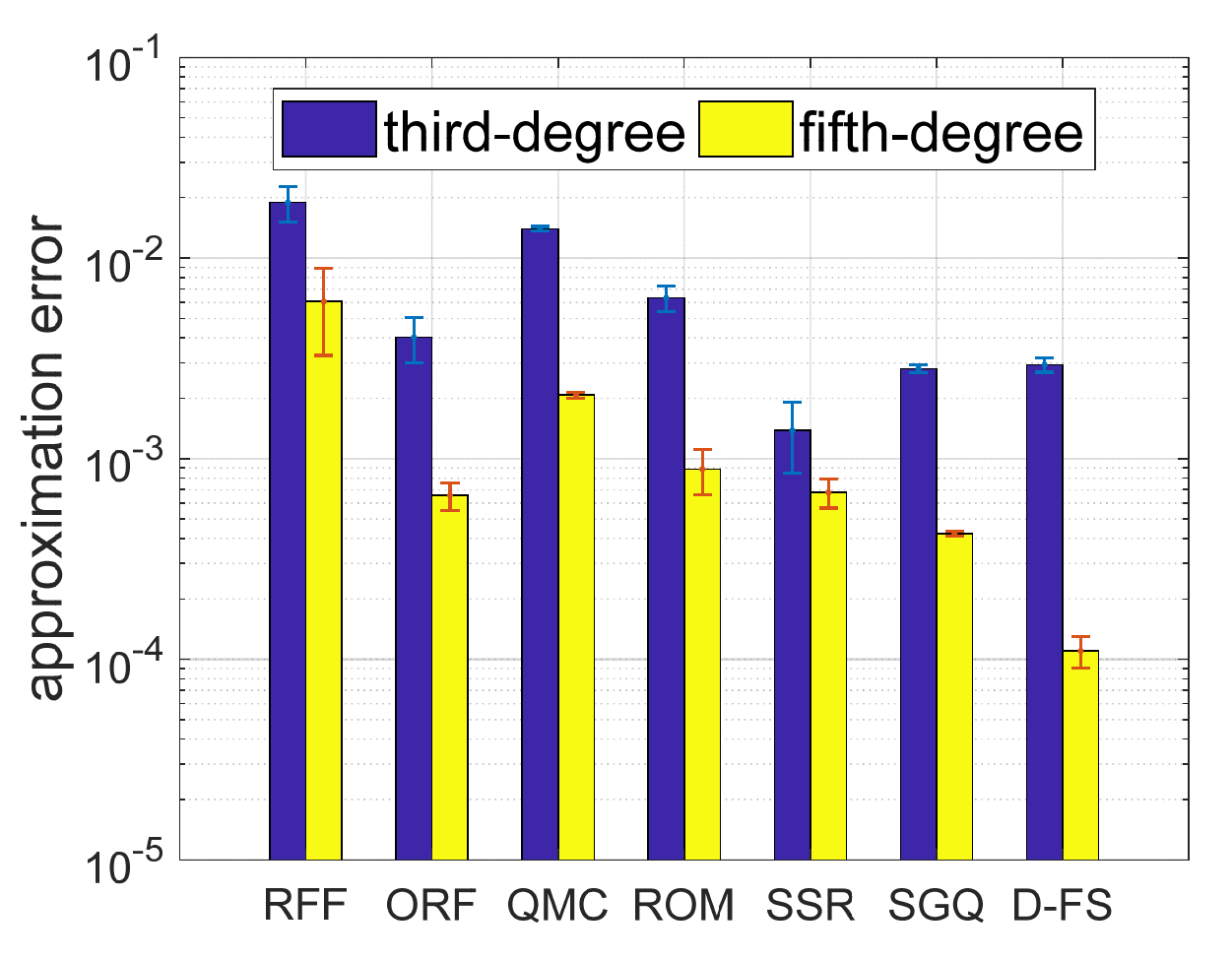}}
	\subfigure{
		\includegraphics[width=0.22\textwidth]{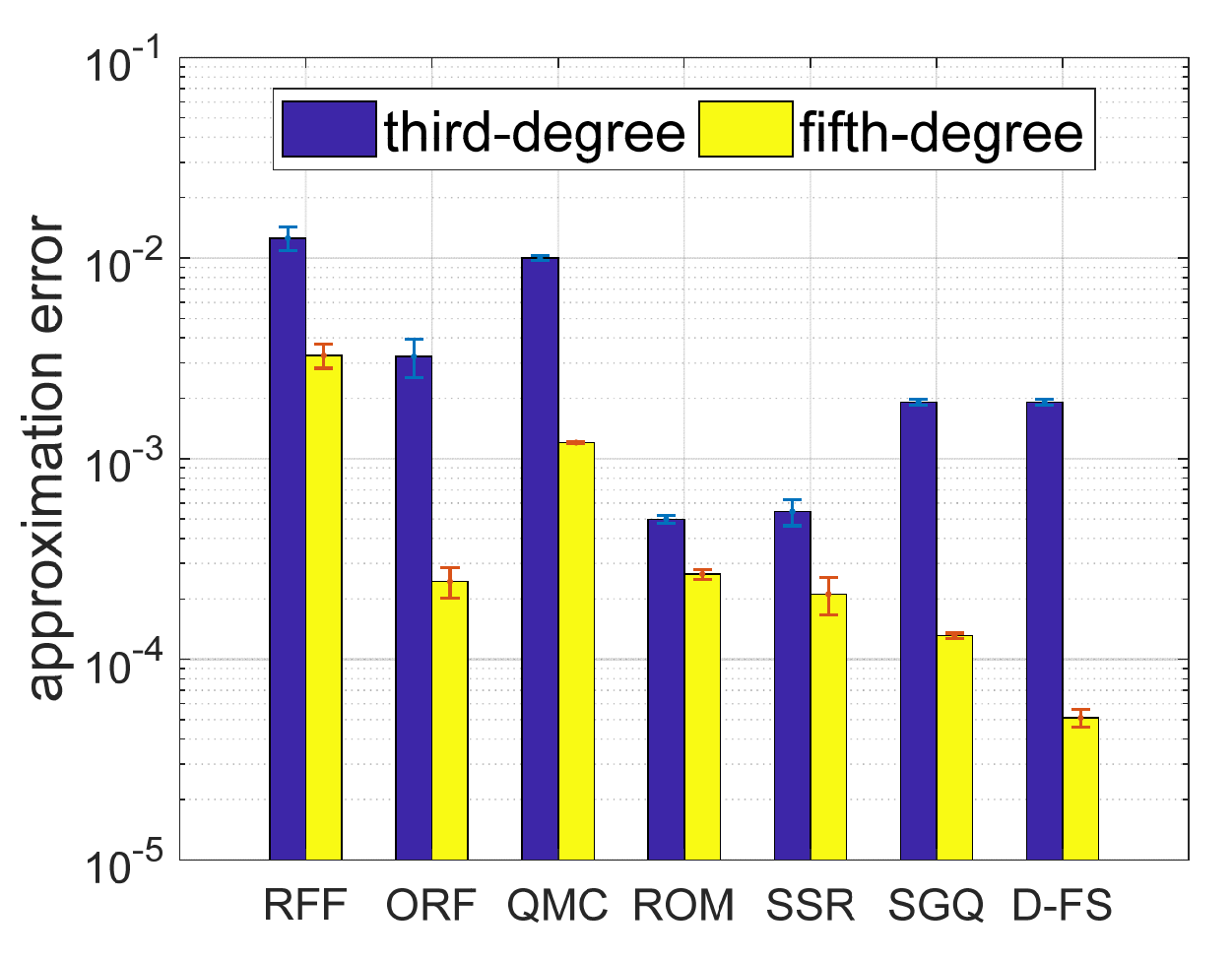}}
	\subfigure{
		\includegraphics[width=0.22\textwidth]{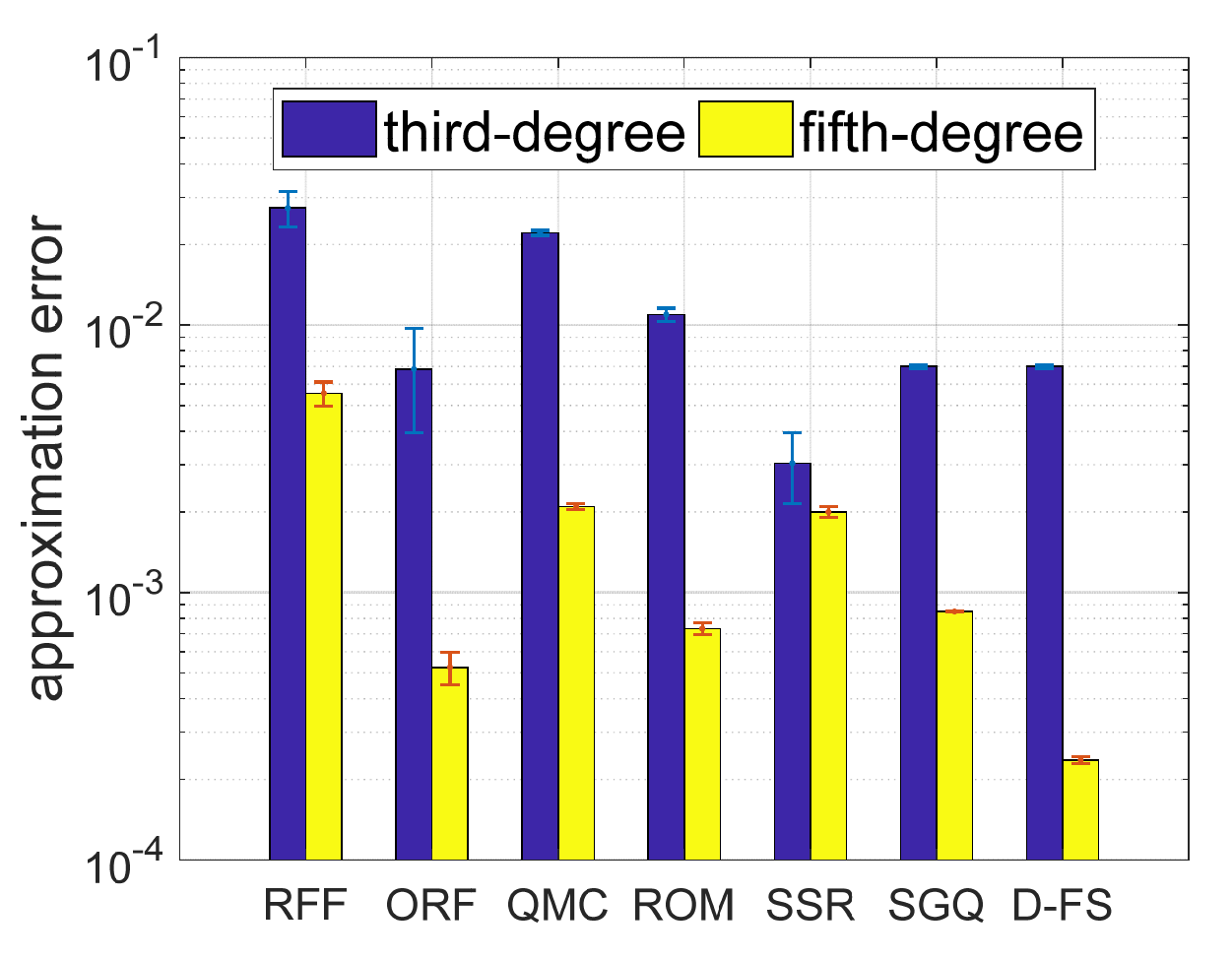}}
	\subfigure{
		\includegraphics[width=0.22\textwidth]{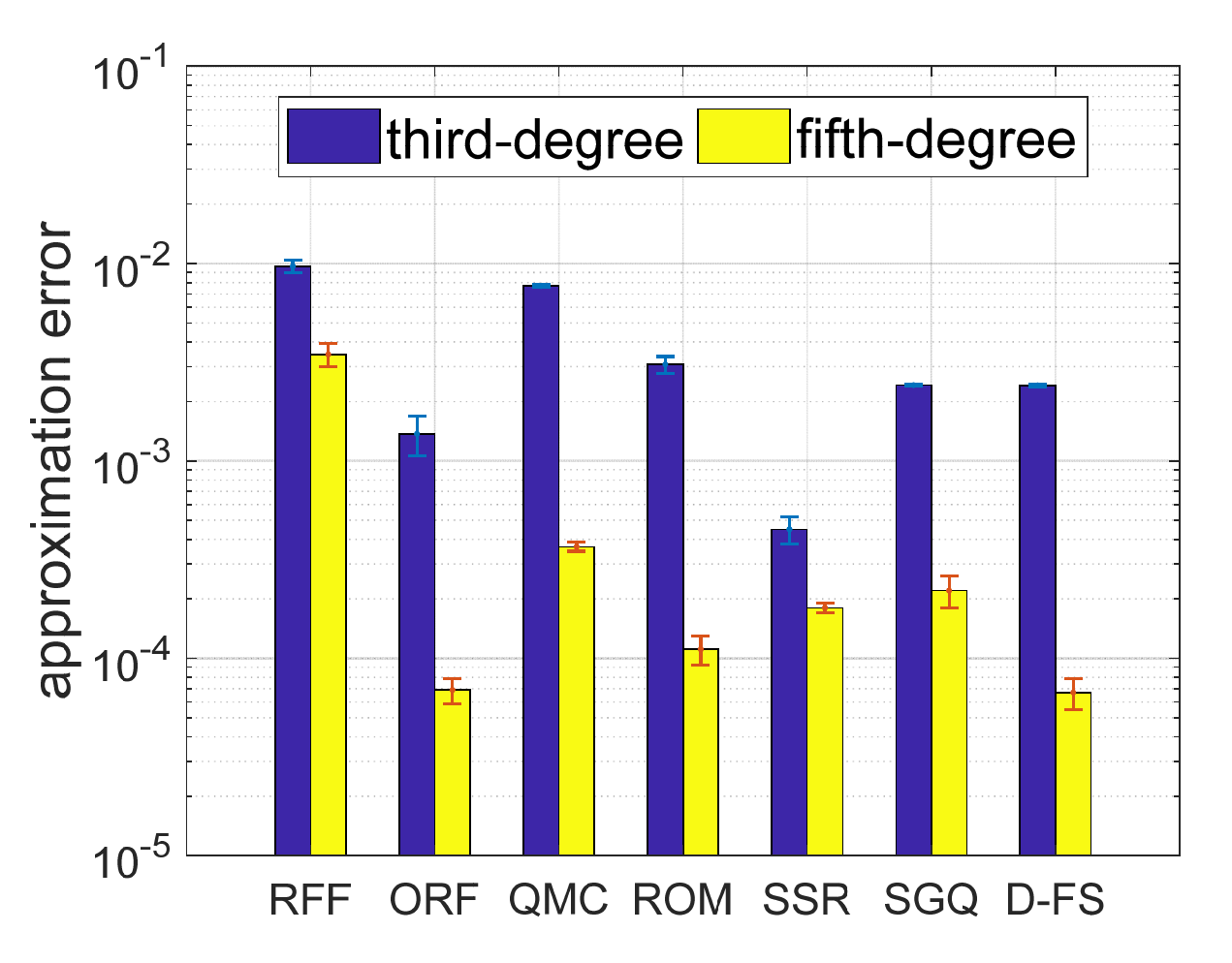}}
	
	\subfigure{
		\includegraphics[width=0.22\textwidth]{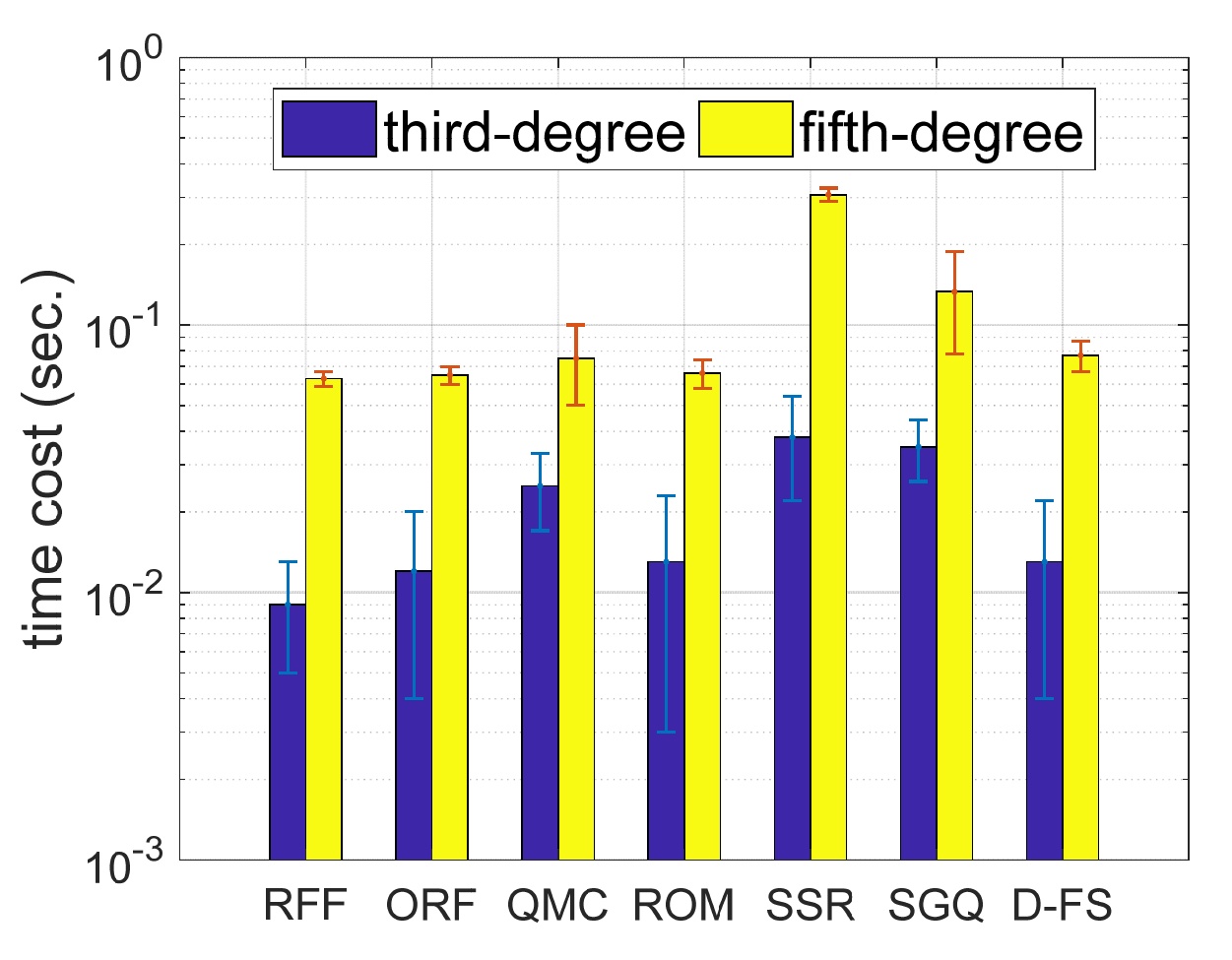}}
	\subfigure{
		\includegraphics[width=0.22\textwidth]{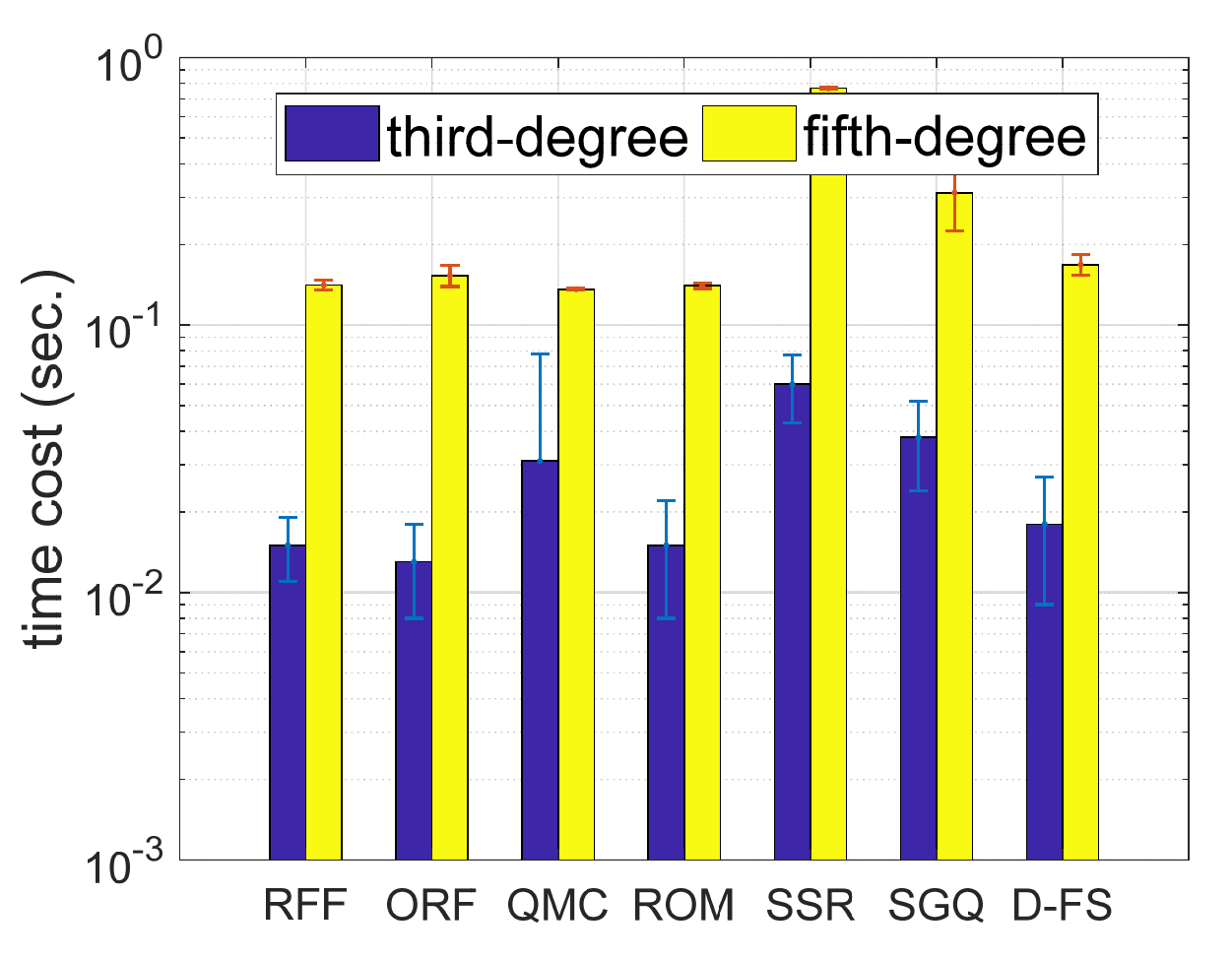}}
	\subfigure{
		\includegraphics[width=0.22\textwidth]{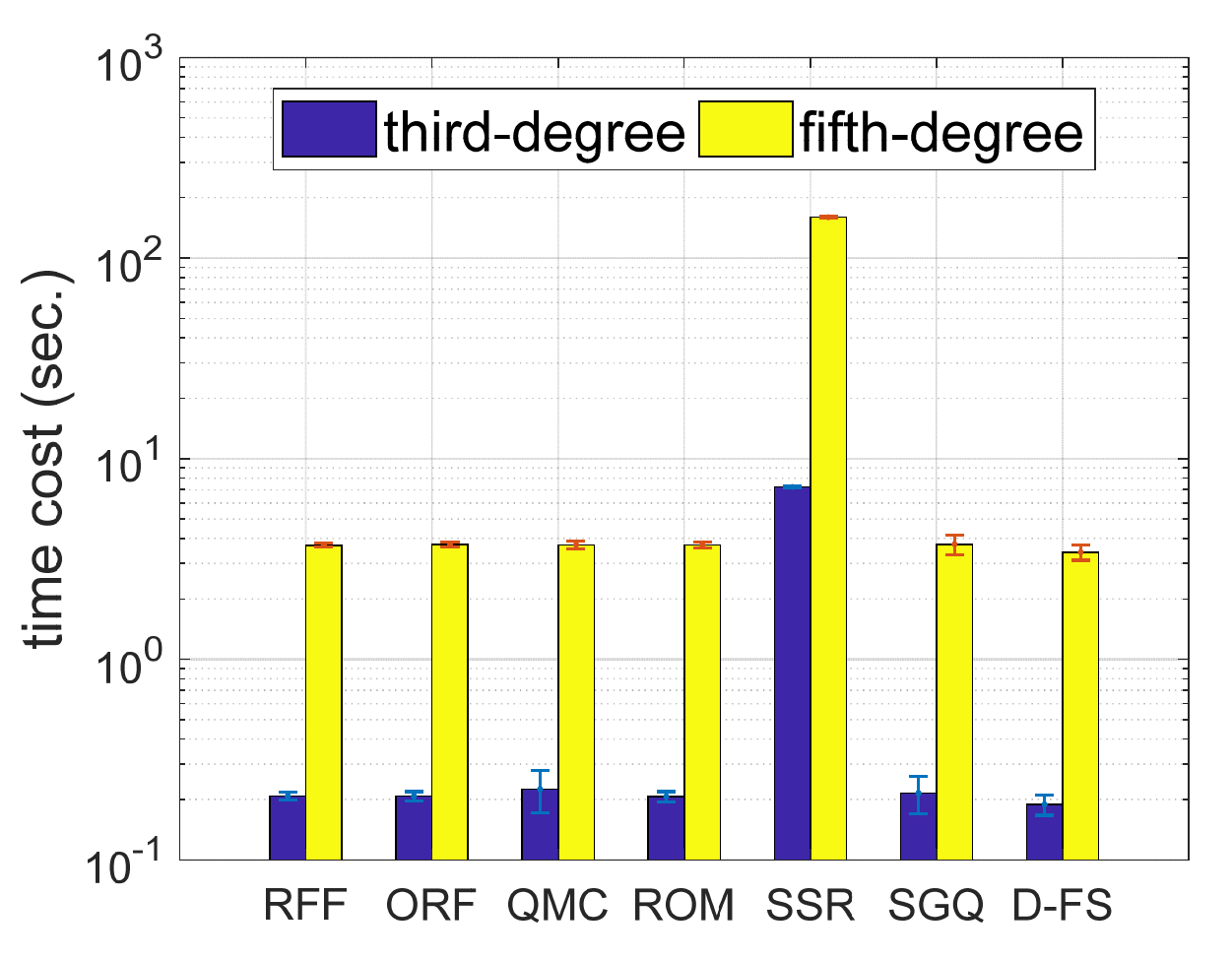}}
	\subfigure{
		\includegraphics[width=0.22\textwidth]{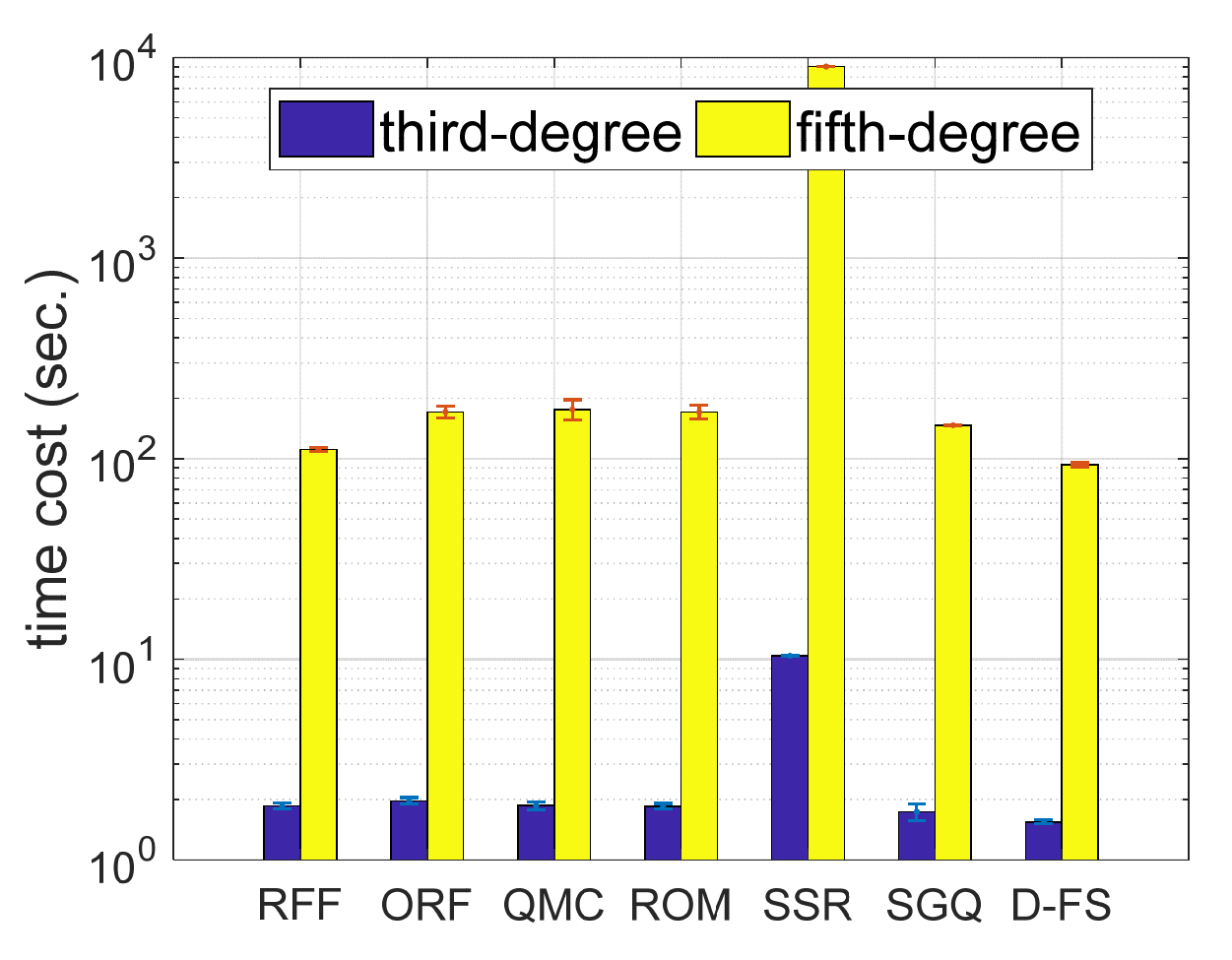}}
	
	\subfigure[\emph{magic04}]{
		\includegraphics[width=0.22\textwidth]{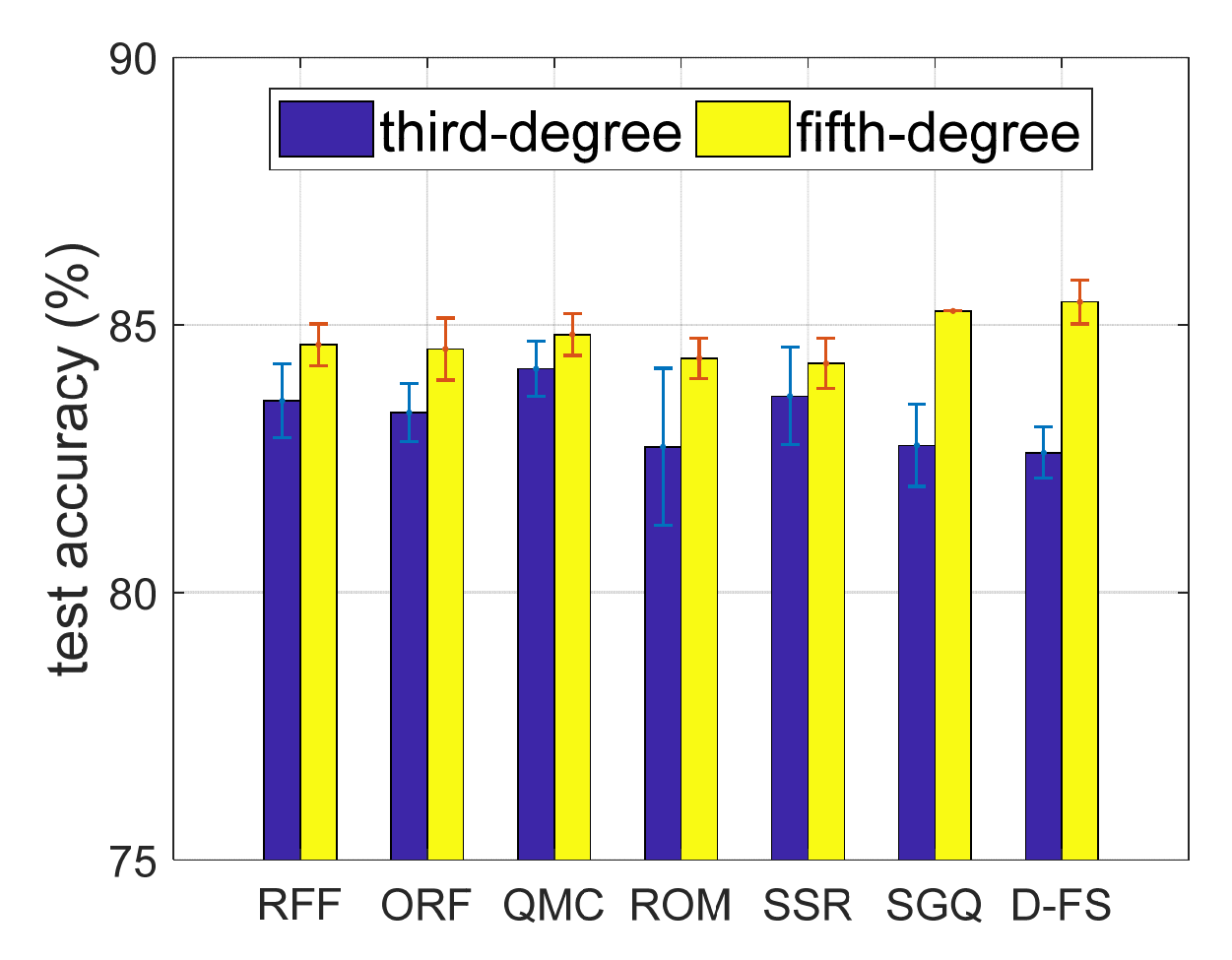}}
	\subfigure[\emph{letter}]{
		\includegraphics[width=0.22\textwidth]{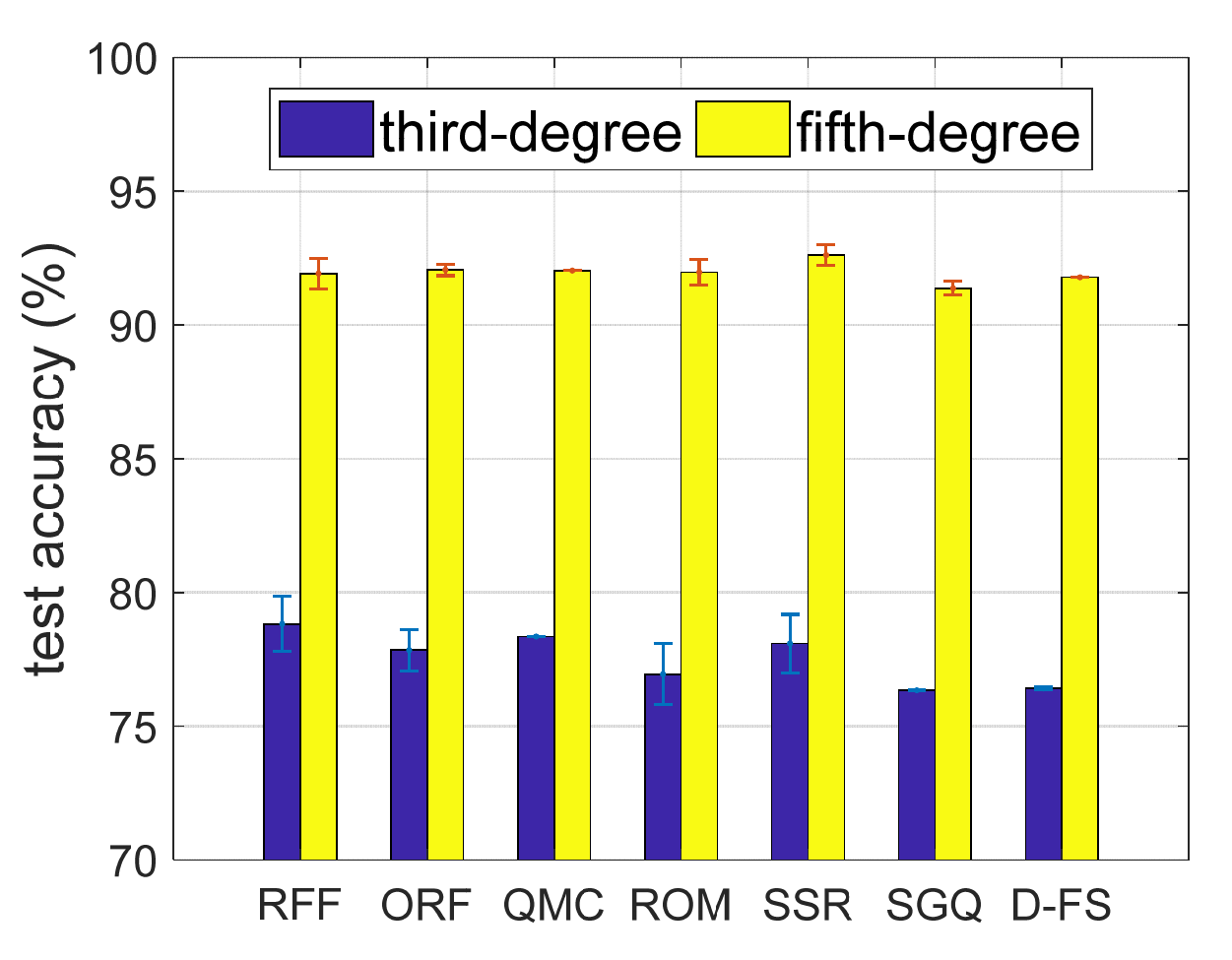}}
	\subfigure[\emph{ijcnn1}]{
		\includegraphics[width=0.22\textwidth]{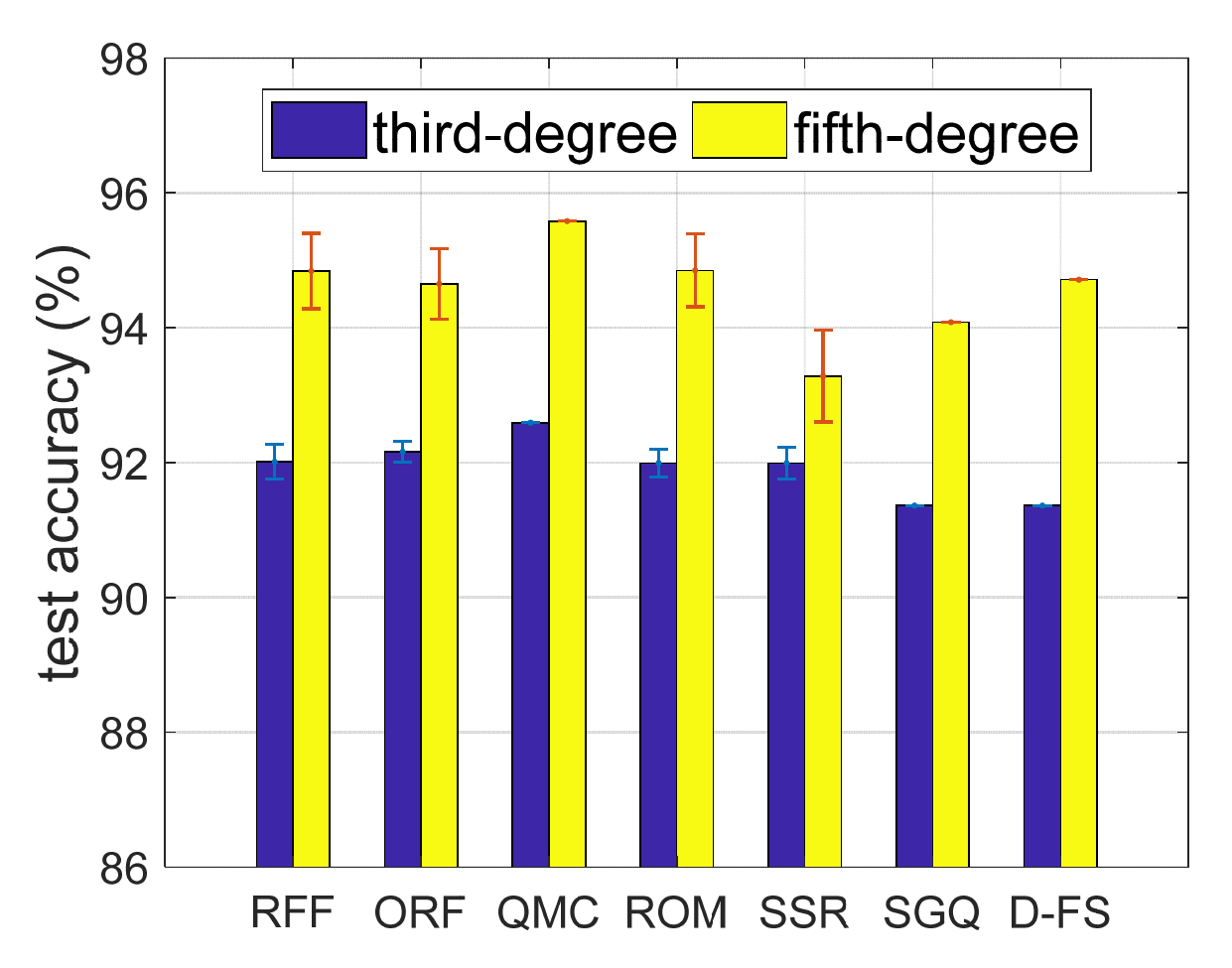}}
	\subfigure[\emph{covtype}]{
		\includegraphics[width=0.22\textwidth]{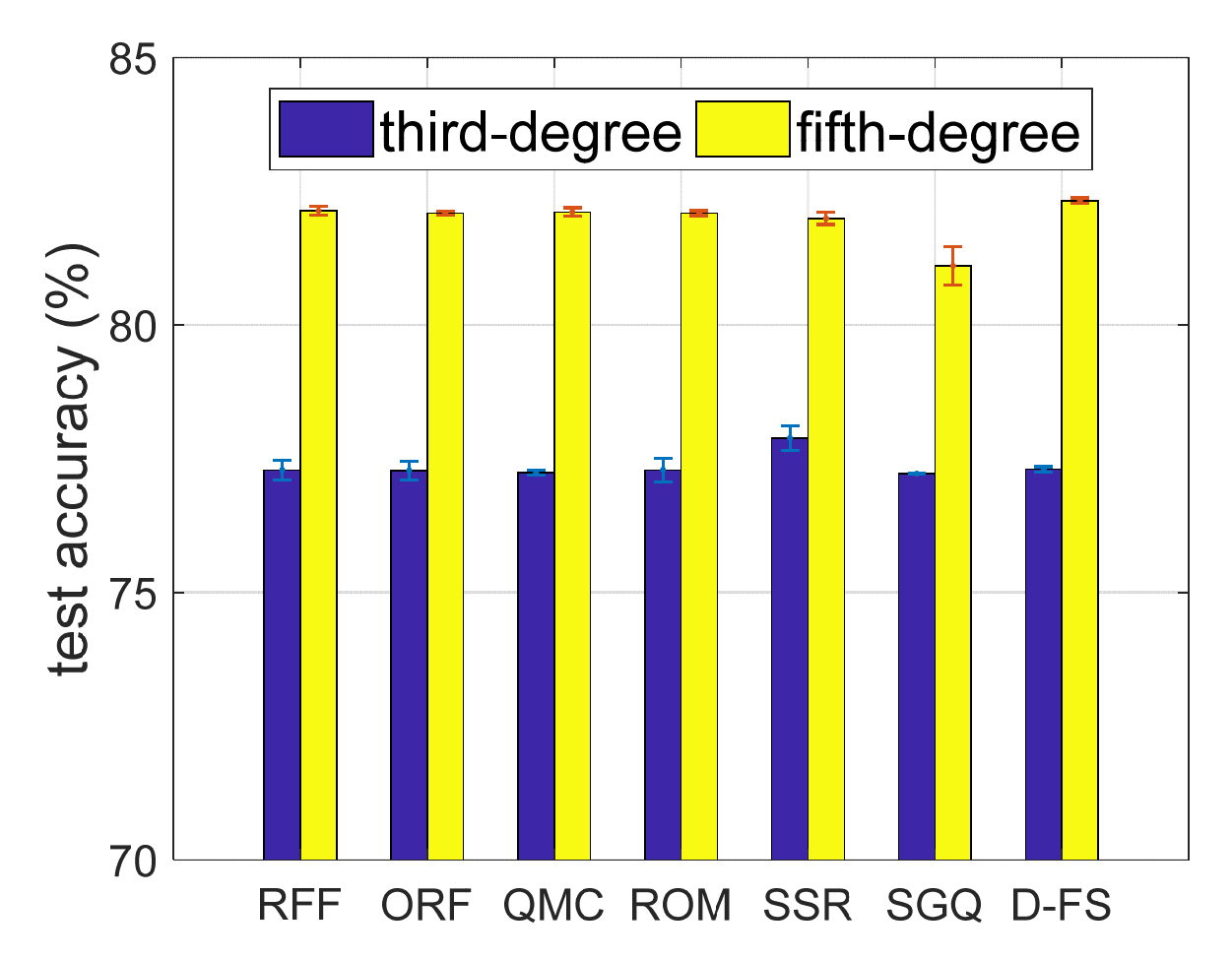}}
	
	\caption{Results on the Gaussian kernel in terms of approximation error (top), time cost (middle), and test accuracy (bottom).}\label{figappgauss}
\end{figure*}

\subsection{Experimental Settings}
{\bf Kernel:} According to the integral representation~\eqref{originte}, we choose the popular Gaussian kernel and the first-order arc-cosine kernel for experimental validation.
Here we use the following formulation of the Gaussian kernel
\begin{equation}\label{gaussnew}
	k(\bm x, \bm y) = \exp \left(-\frac{\| \bm x - \bm y \|_2^2}{2d \sigma^2}\right)\,,
\end{equation}
where the feature dimension $d$ is introduced into the kernel width for scaling as suggested by the remark in Theorem~\ref{thmvar}.
The parameter $\sigma^2$ is tuned via 5-fold inner cross validation over a grid of $\{ 0.1,0.5,1,5,10 \}$.
The first-order arc-cosine kernel \cite{cho2009kernel} used in this paper is given by
\begin{equation*}
k(\bm x, \bm y) = \frac{1}{\pi} \| \bm x \|_2 \| \bm y \|_2 \left(\sin \theta + (\pi - \theta) \cos \theta \right) \,,
\end{equation*}
with $\theta = \cos^{-1} \left(\frac{\bm x^{\!\top} \bm y}{\| \bm x \|_2 \| \bm y \|_2} \right)$.

\noindent{\bf Datasets:}
We consider four typical classification datasets including \emph{magic04}, \emph{letter}, \emph{ijcnn1}, and \emph{covtype}; see Table~\ref{tablarge} for an overview.
These datasets can be downloaded from \url{https://www.csie.ntu.edu.tw/~cjlin/libsvmtools/datasets/} or the UCI Machine Learning Repository\footnote{\url{https://archive.ics.uci.edu/ml/datasets.html.}}.
{The \emph{ijcnn1} dataset by the provider has been already scaled to $[0,1]^d$ by the winner's transformation \cite{chang2001ijcnn}. The remaining three datasets rescale each attribute/feature to $[0,1]$ by the min-max normalization.}
Regarding to the training/test partition, it has been pre-given on the \emph{letter} and \emph{ijcnn1} datasets.
For the remaining two datasets, we randomly pick half of the data for training and the rest for test.

\noindent {\bf Compared methods:}
We compare the developed D-FS/S-FS with the following algorithms:
\begin{itemize}
	\item RFF/MC \cite{rahimi2007random}: The transformation matrix $\bm W_{\text{RFF}}$ is constructed by the standard Monte Carlo sampling scheme with $W_{ij} \sim \mathcal{N}(0,1/(d\sigma^2))$ in Eq.~\eqref{gaussnew} for Gaussian kernel approximation and $W_{ij} \sim \mathcal{N}(0,1)$ for the first-order arc-cosine kernel approximation.
	\item ORF \cite{Yu2016Orthogonal}: The transformation matrix $\bm W_{\text{ORF}}$ is constructed by a random orthogonal matrix with $\bm W = \bm \Lambda \bm Q$, where $\bm \Lambda$ is a diagonal matrix with $\Lambda_{ii} \sim \chi(d)$ and $\bm Q$ is obtained from the QR decomposition of $\bm W_{\text{RFF}}$. Note that, this approach can be applied to the arc-cosine kernel in practice but lacks theoretical guarantees.
	\item ROM \cite{choromanski2017unreasonable}: The transformation matrix $\bm W_{\text{ROM}}$ is constructed by a series of structural random orthogonal matrices with $\bm W = c\prod_{i=1}^t \bm H \bm \Lambda_i$, where $\bm H$ is a normalized Hadamard matrix and $\bm \Lambda_i$ is the Rademacher matrix with $\mathbb{P}(\Lambda_{ii} = \pm 1) = 1/2$. Here $c$ is chosen as $\sqrt{2/\sigma^2}$ for Gaussian kernel approximation and $\sqrt{d}$ for arc-cosine kernel approximation.
	\item QMC \cite{Avron2016Quasi}: The transformation matrix $\bm W_{\text{QMC}}$ is constructed by a deterministic low-discrepancy Halton sequence.
	\item GQ/SGQ \cite{dao2017gaussian}: These two algorithms are deterministic quadrature methods. GQ generates nodes and weights along each dimension and thus the dimension of the obtained feature mapping can be manually adjusted. However, the feature dimension generated by SGQ is directly fixed if $d$ is given. Accordingly, we compare SGQ with D-FS in Section~\ref{sec:expdeter} and compare GQ with S-FS in Section~\ref{sec:exp:sto}. For fair comparison, we set the \emph{generator} vector $\bm \lambda = [0, \sqrt{3}]^{\!\top}$ in GQ, SGQ and D-FS/S-FS to be the same. 
	\item SSR \cite{munkhoeva2018quadrature}: The feature mapping is constructed by the third-degree stochastic spherical-radial rule with random orthogonal matrices obtained by butterfly matrices \cite{genz1998methods}.
\end{itemize}

\noindent{\bf Evaluation metrics:} We evaluate the performance of all the compared algorithms in terms of approximation error, time cost, and test accuracy.
The used kernel approximation measure here is the relative error in Frobenius form ${\| \bm K - \hat{\bm K}\|_{\mathrm{F}}}/{\| \bm K \|_{\mathrm{F}}}$ on a randomly selected subset with 1,000 samples.
We record the time cost of each algorithm on generating feature mappings.
For prediction (binary classification), we directly use the closed-form formula of KRR \cite{suykens2002least} and the sign function to output a binary label.
The regularization parameter in KRR is tuned via 5-fold inner cross validation over a grid of $\{ 0.0001, 0.001,0.01,0.1,0.5,1,10 \}$.
All experiments are repeated 10 trials.

\begin{figure*}[!htb]
	\centering
	
	\subfigure{
		\includegraphics[width=0.223\textwidth]{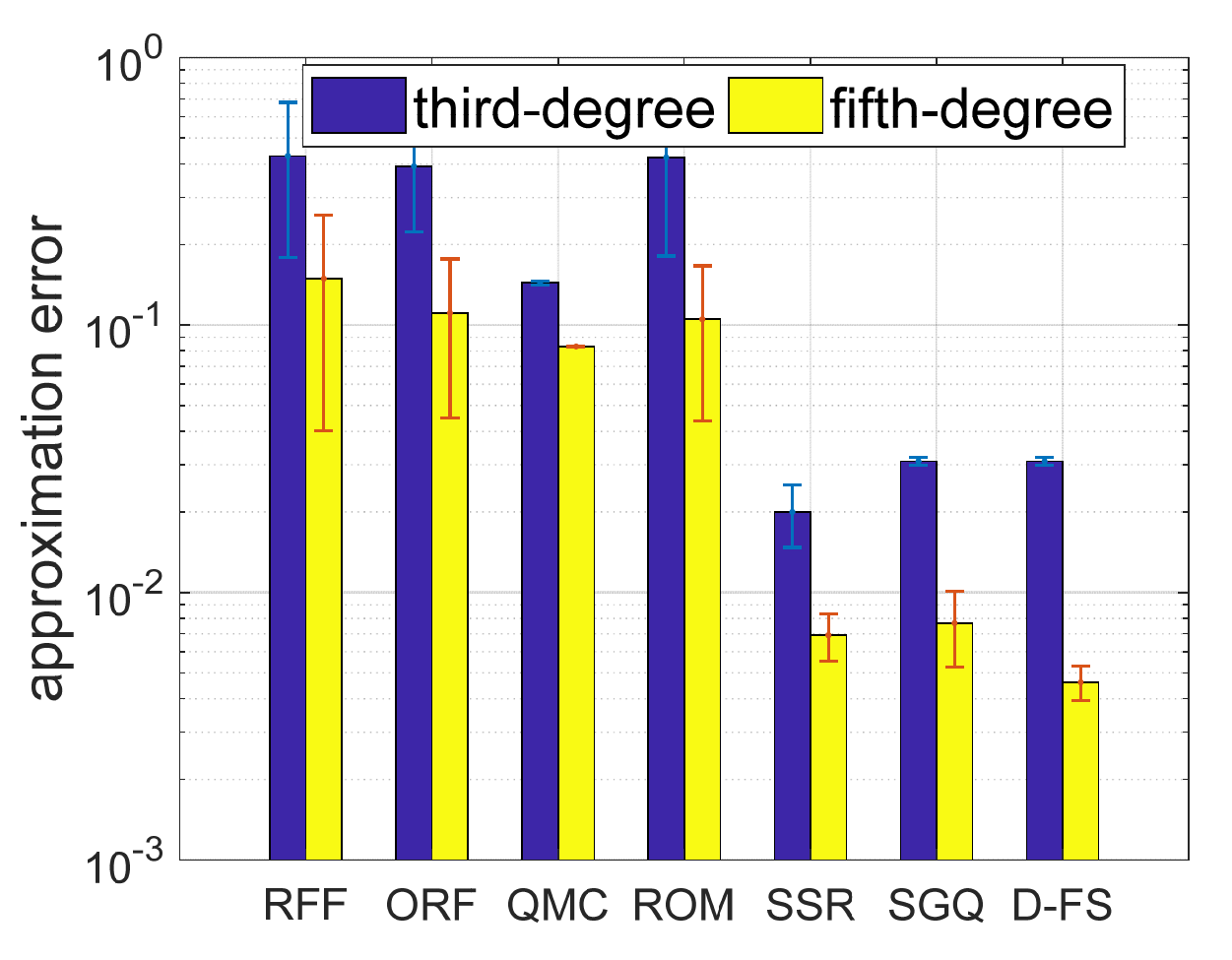}}
	\subfigure{
		\includegraphics[width=0.22\textwidth]{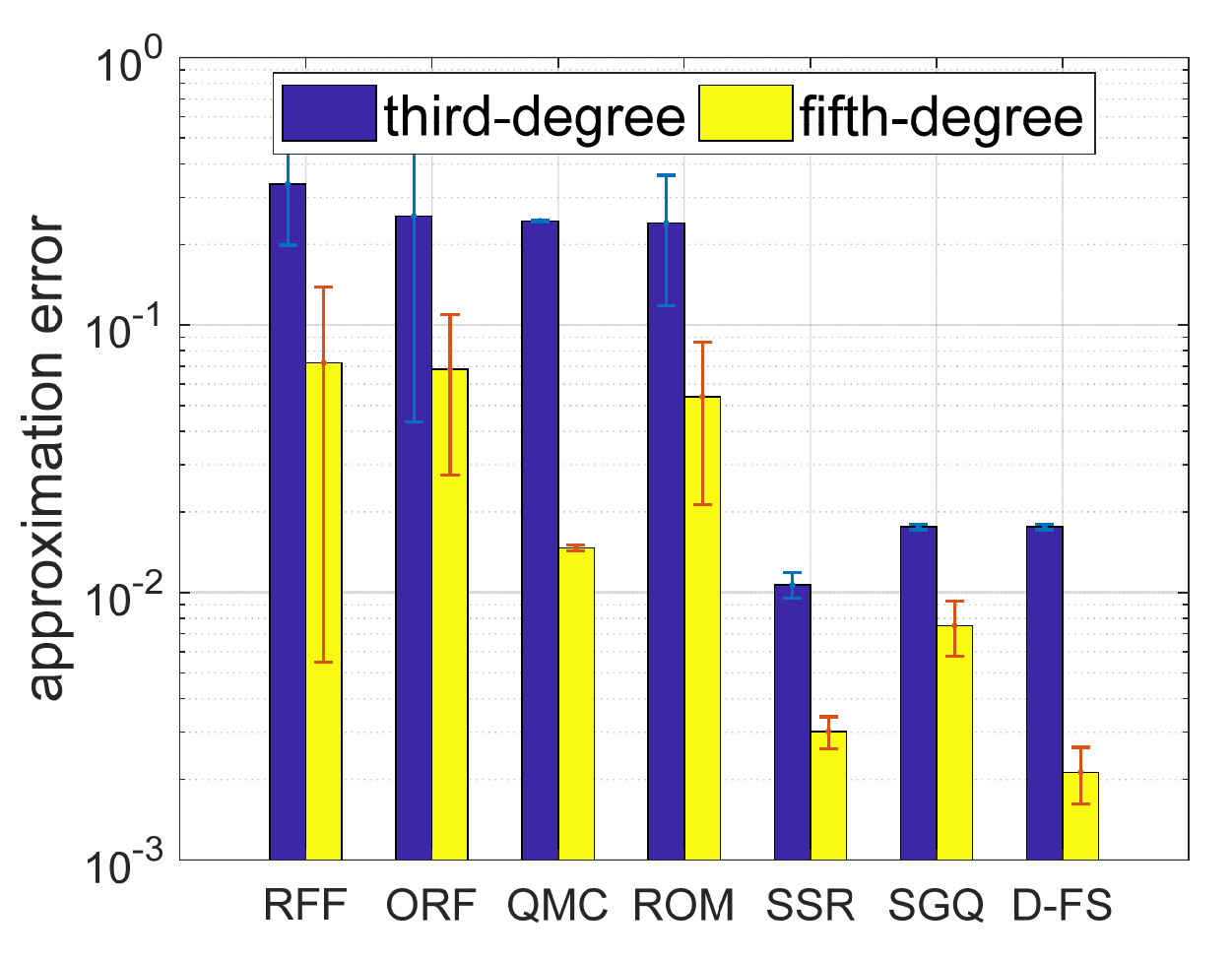}}
	\subfigure{
		\includegraphics[width=0.22\textwidth]{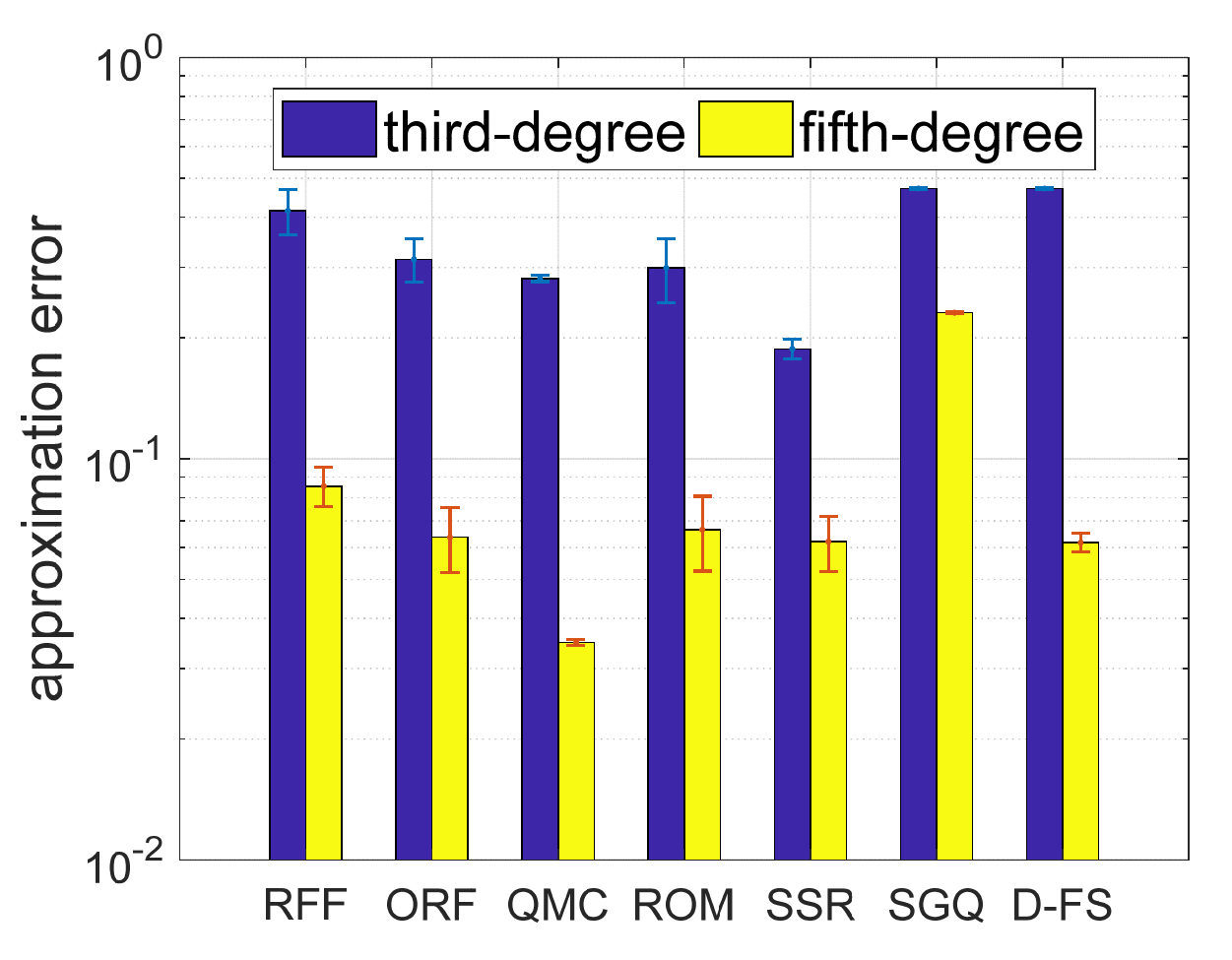}}
	\subfigure{
		\includegraphics[width=0.22\textwidth]{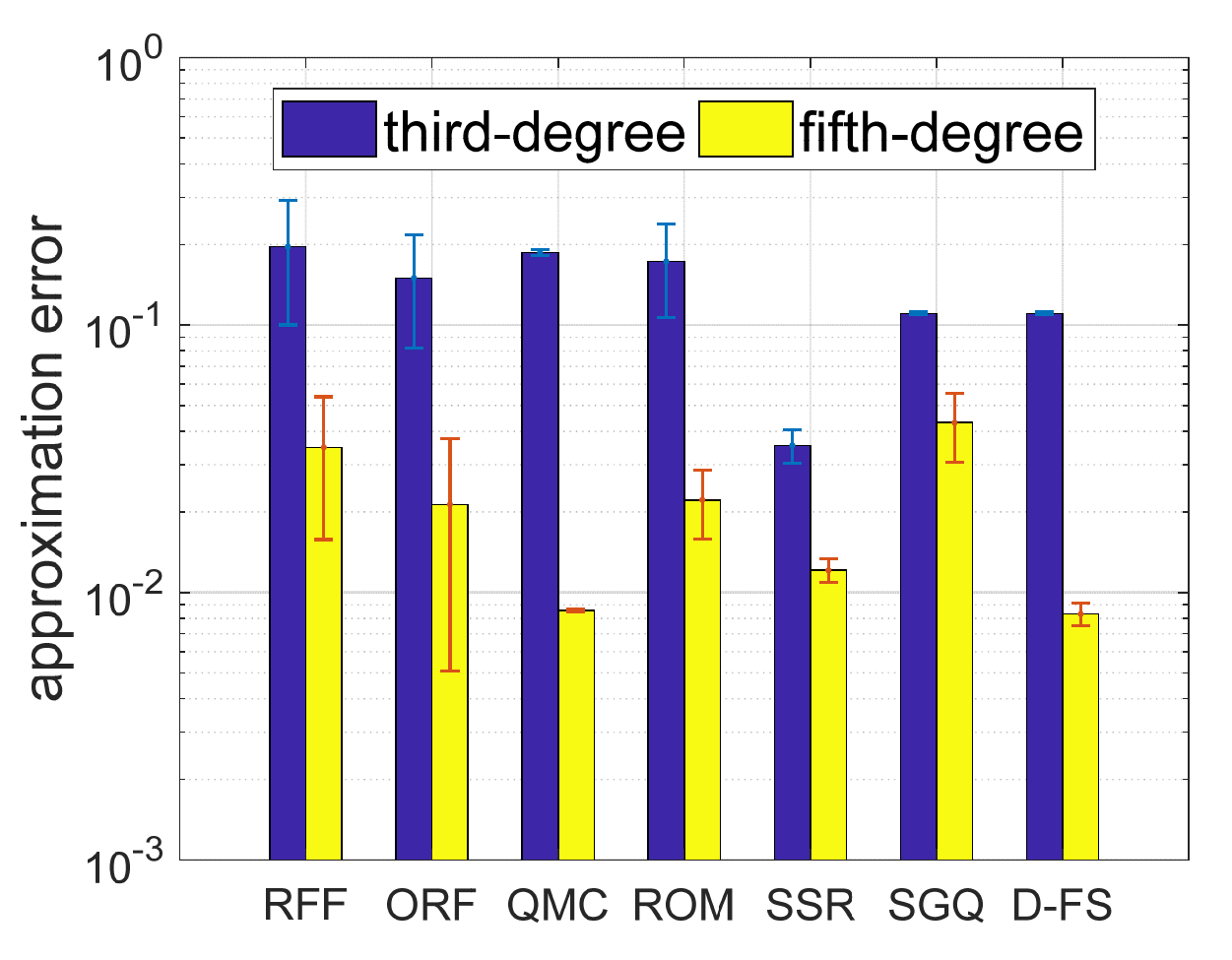}}

	\subfigure{
		\includegraphics[width=0.22\textwidth]{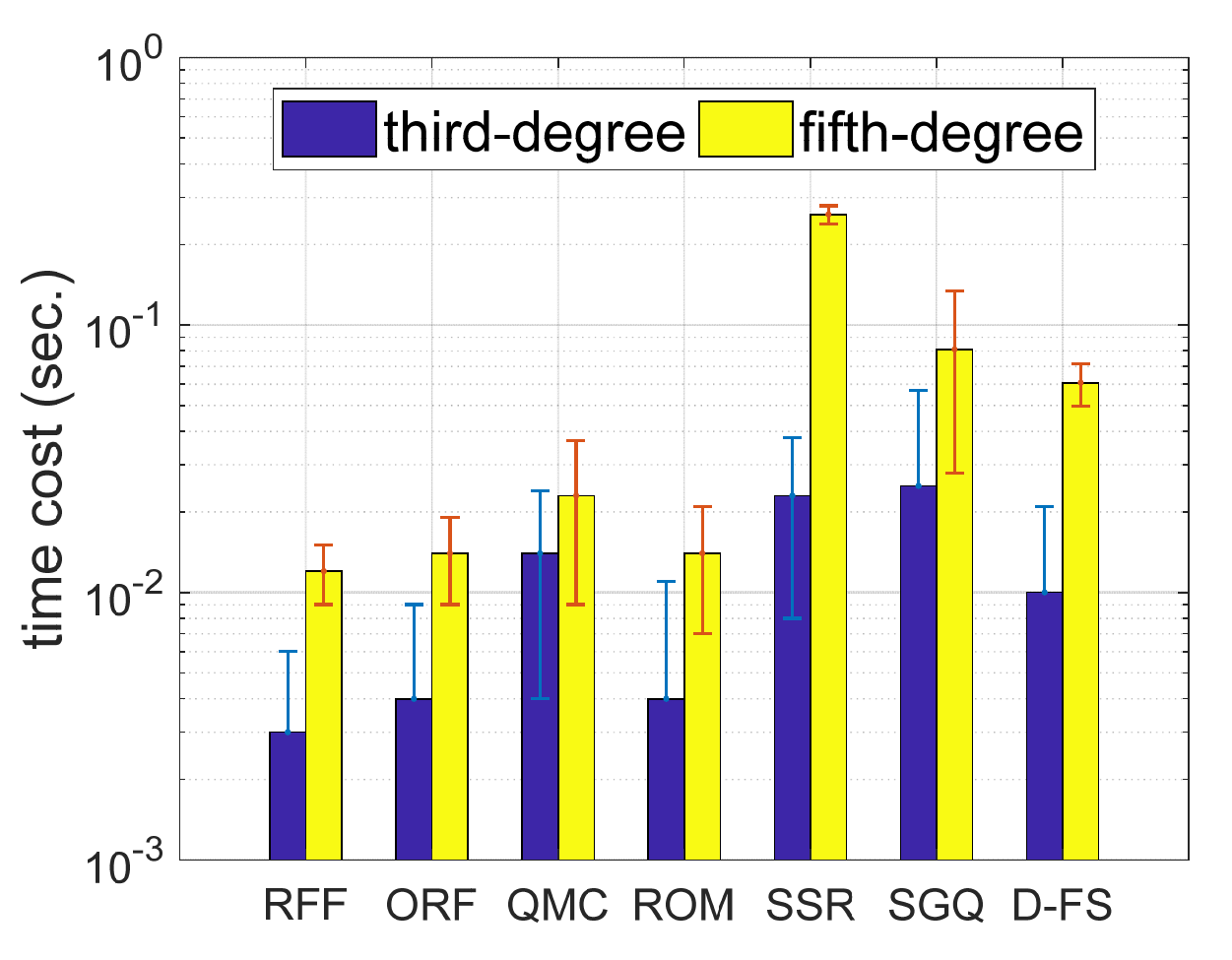}}
	\subfigure{
		\includegraphics[width=0.22\textwidth]{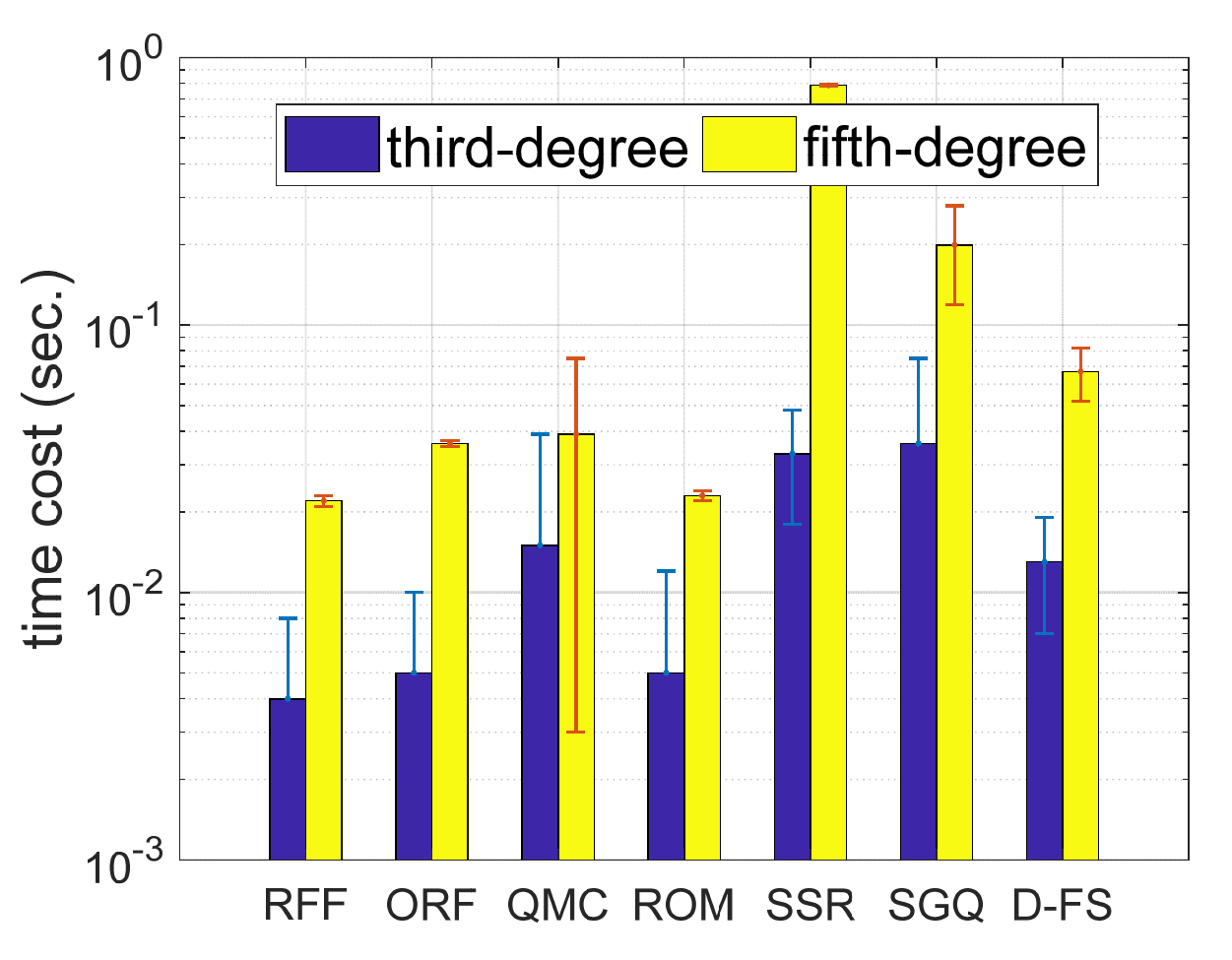}}
	\subfigure{
		\includegraphics[width=0.22\textwidth]{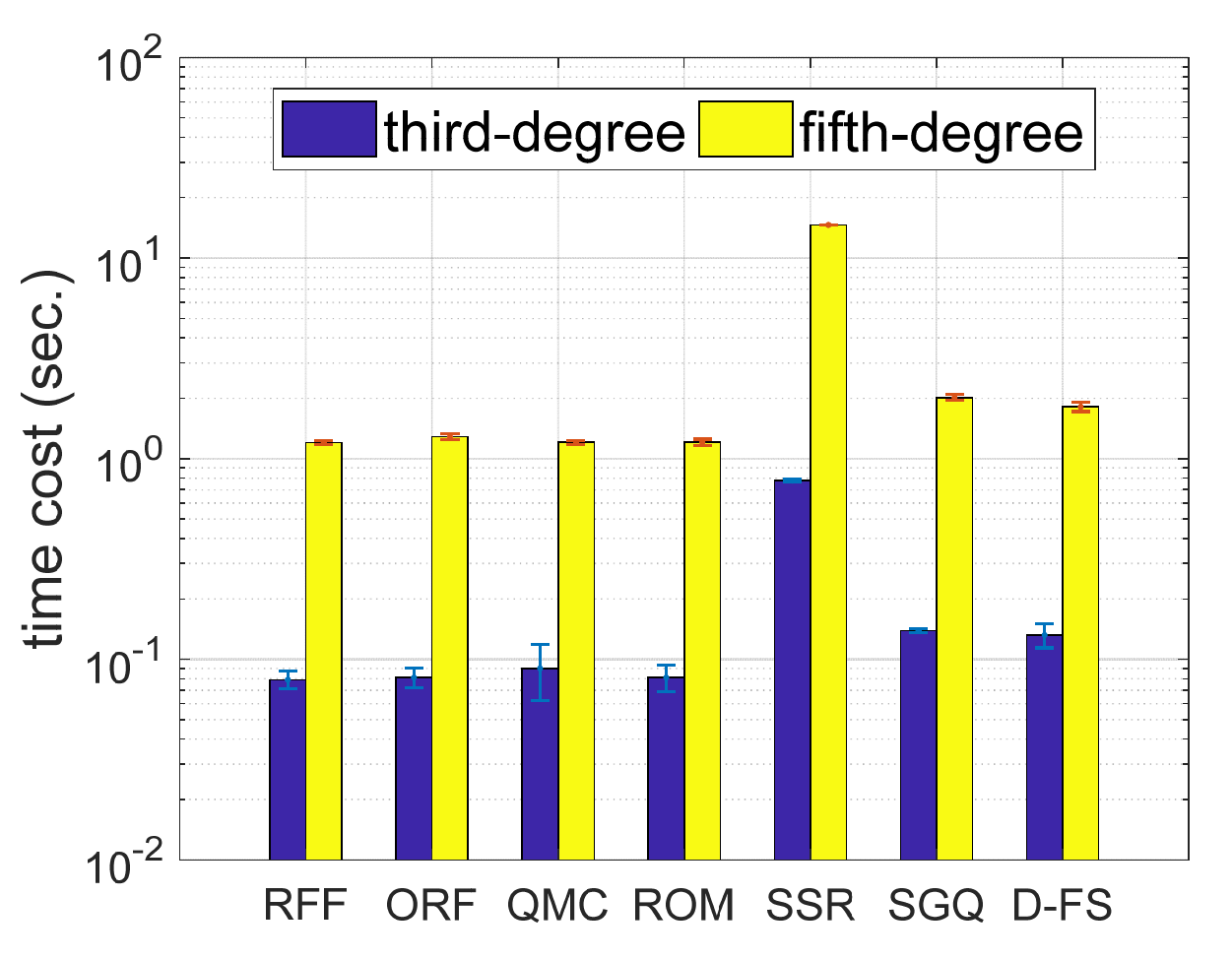}}
	\subfigure{
		\includegraphics[width=0.22\textwidth]{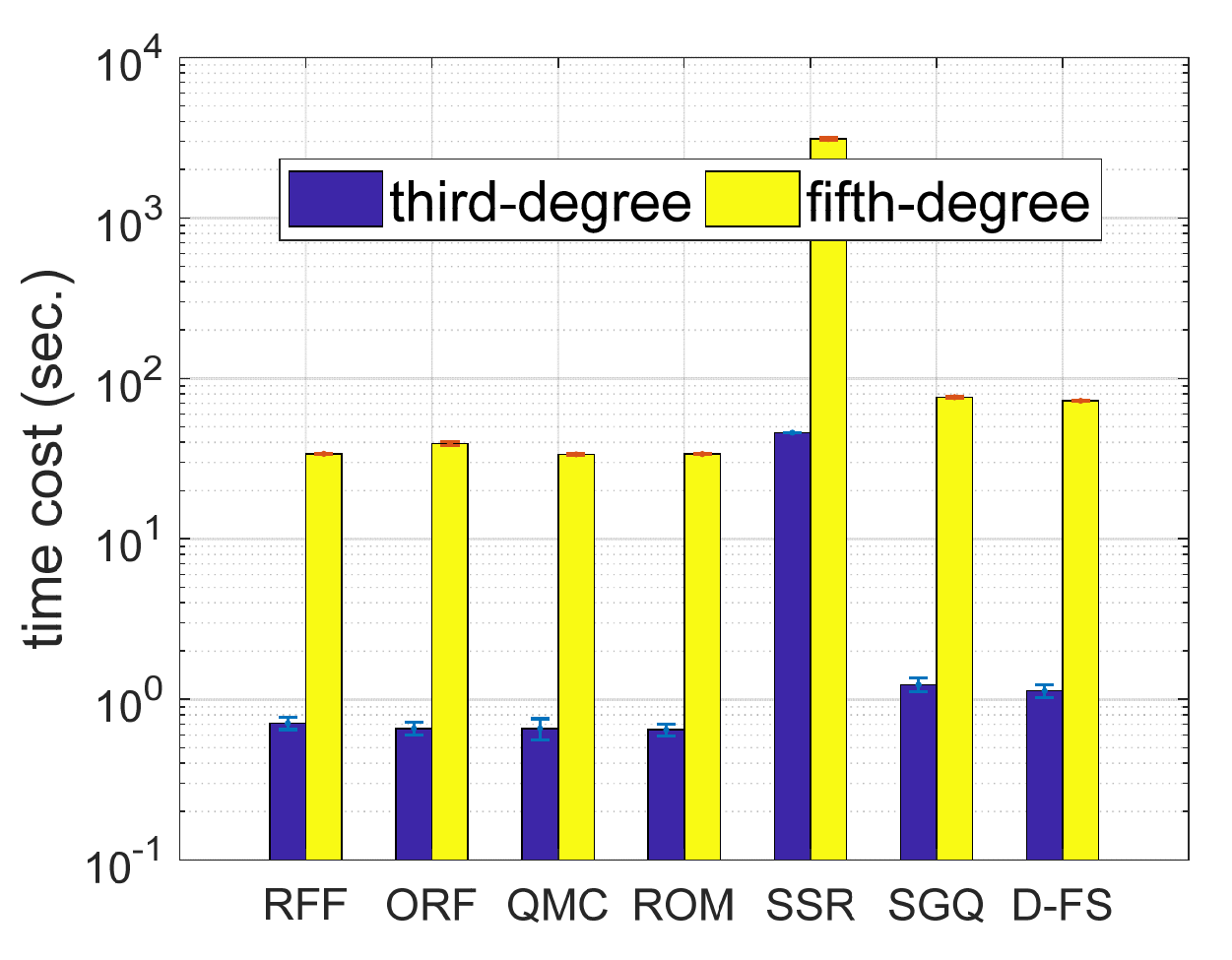}}
	
	\subfigure[\emph{magic04}]{
		\includegraphics[width=0.22\textwidth]{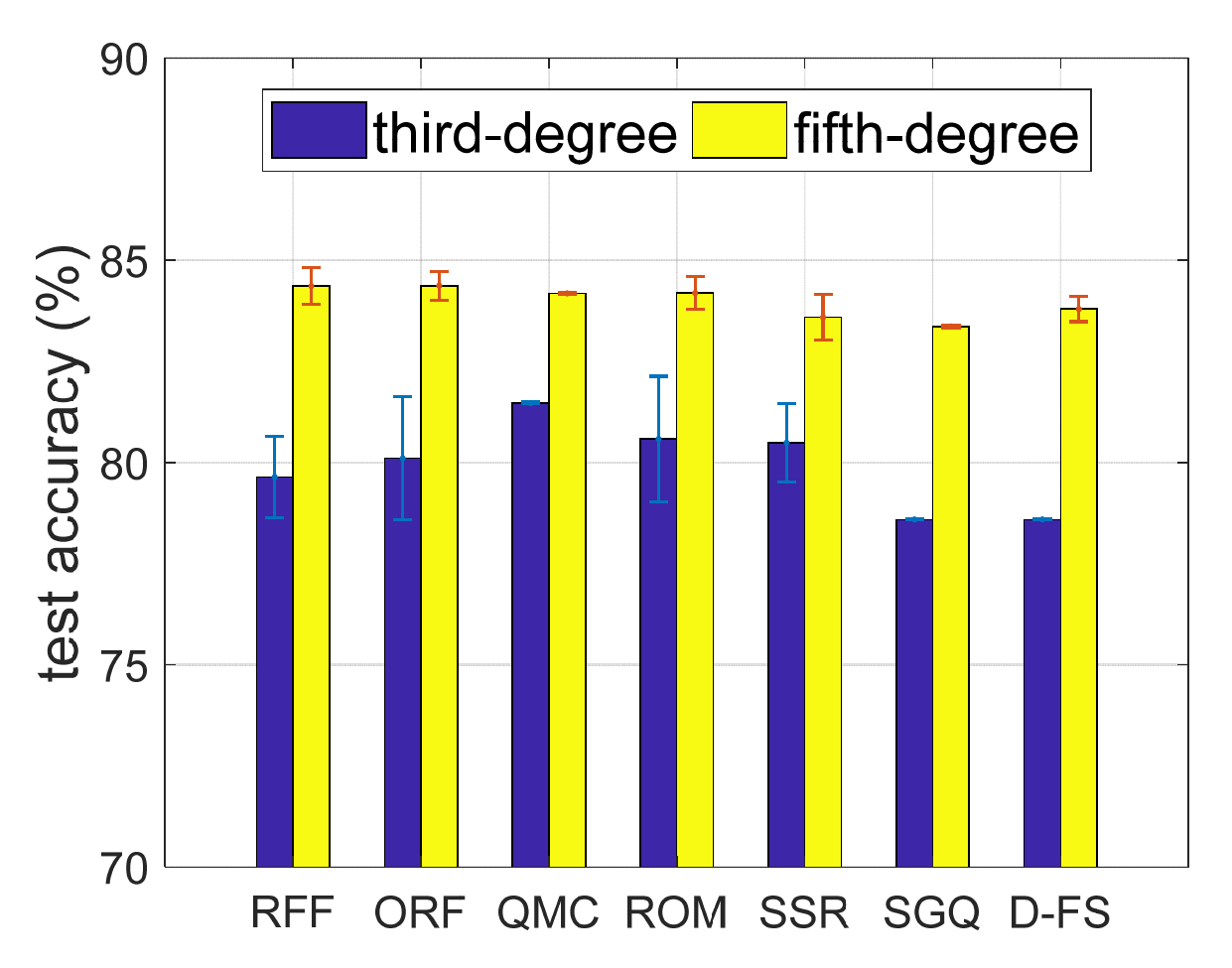}}
	\subfigure[\emph{letter}]{
		\includegraphics[width=0.22\textwidth]{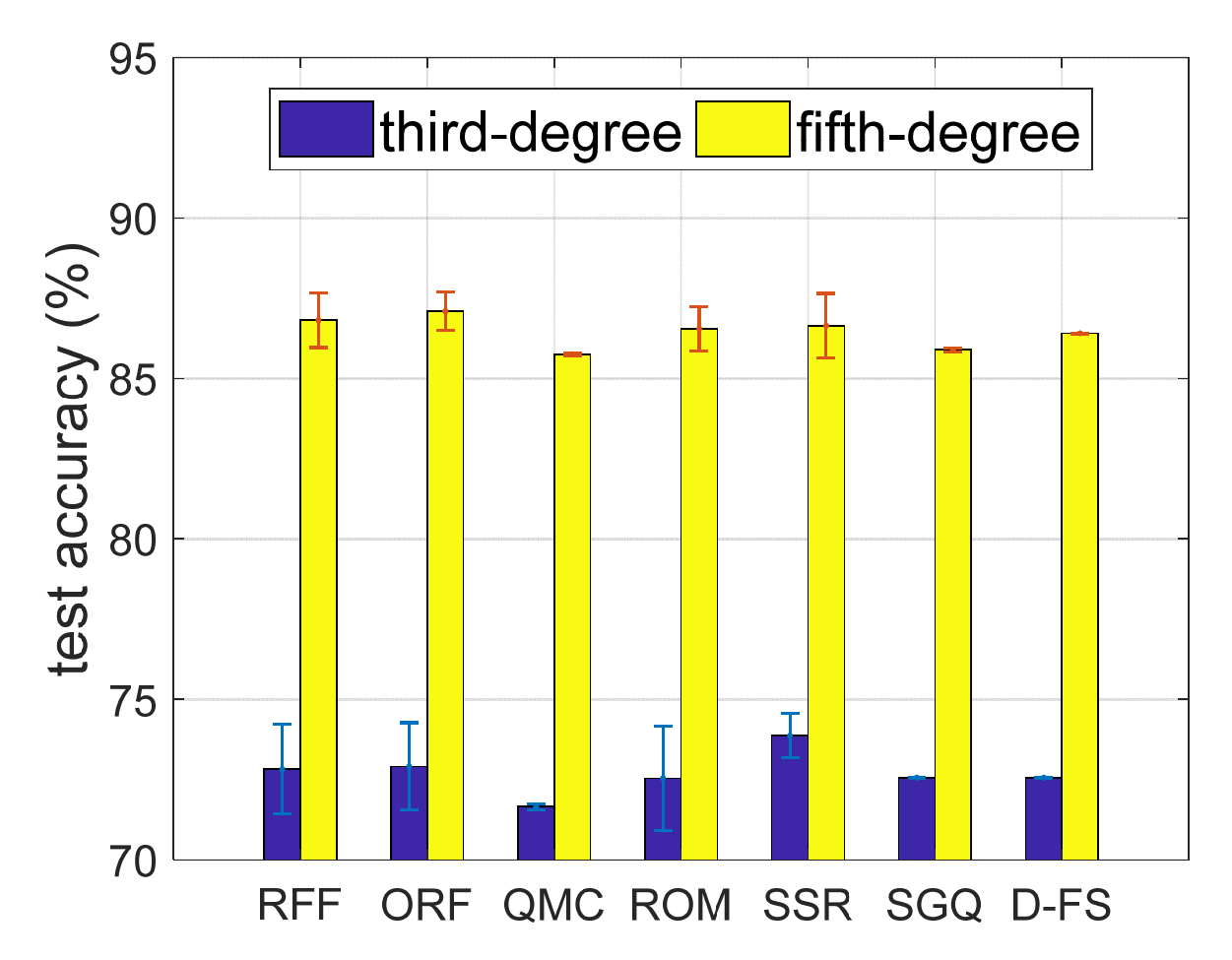}}
	\subfigure[\emph{ijcnn1}]{
		\includegraphics[width=0.22\textwidth]{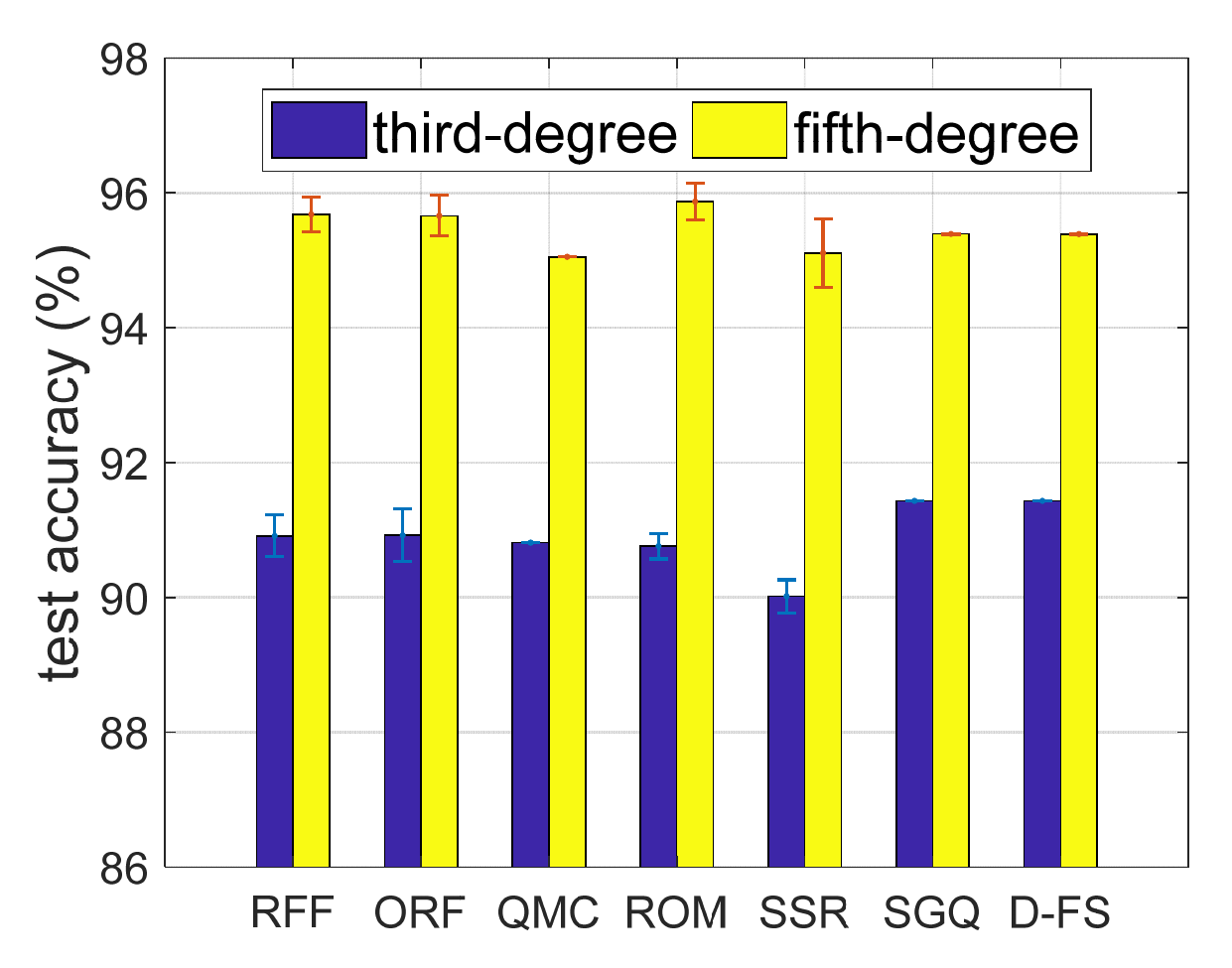}}
	\subfigure[\emph{covtype}]{
		\includegraphics[width=0.22\textwidth]{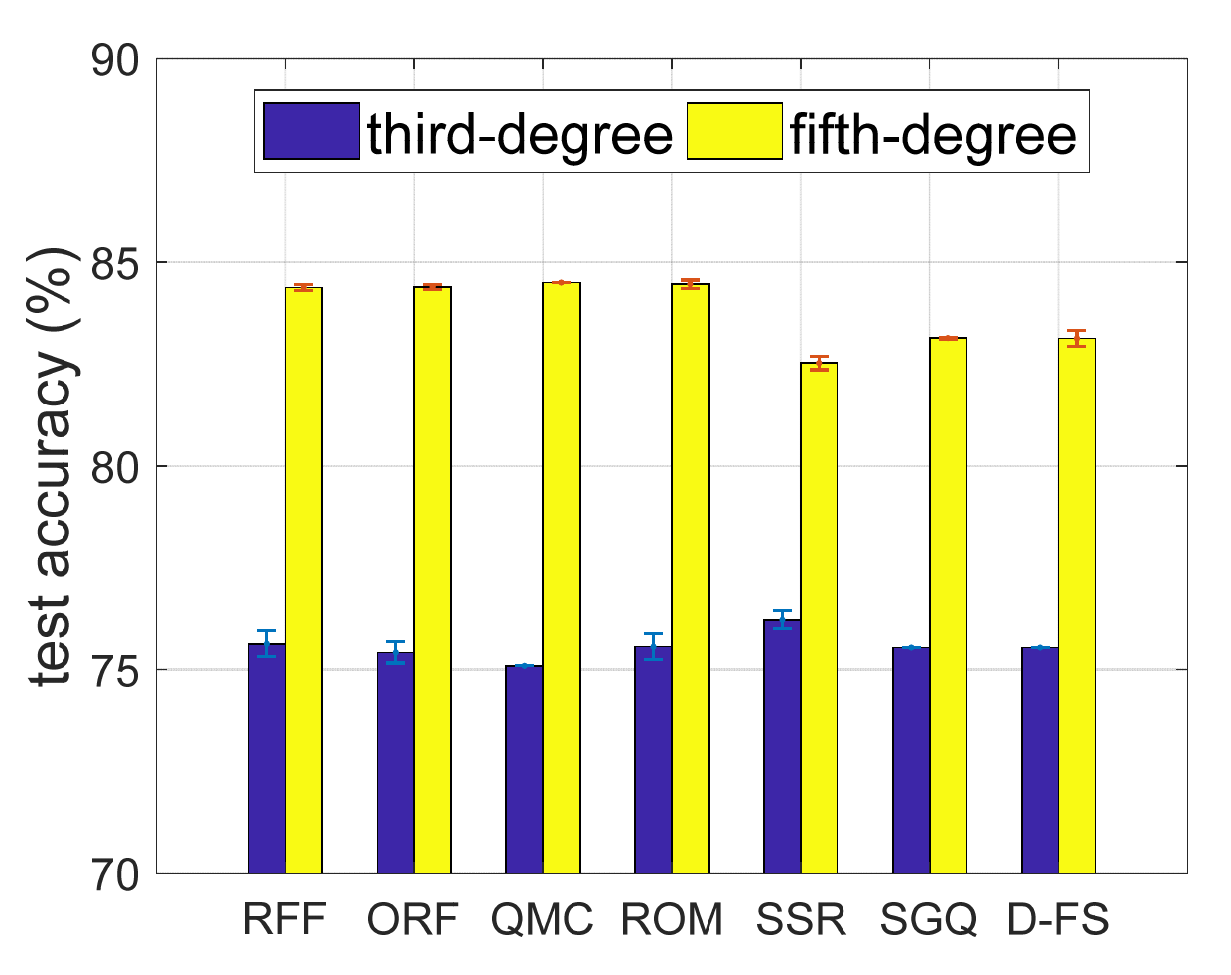}}
	
	\caption{Results on the first-order arc-cosine kernel in terms of approximation error (top), time cost (middle), and test accuracy (bottom).}\label{figapparccos}
\end{figure*}

\subsection{Evaluation for Deterministic Rules}
\label{sec:expdeter}
Our deterministic rules D-FS generate the fixed-size feature mapping $\Phi(\bm x) \in \mathbb{R}^N$ when $d$ is given, e.g., $N=2d+1$ in our third-degree rule ($m=1$) and $N=1+2d^2$ in our fifth-degree rule ($m=2$).
In this case, we consider a deterministic setting, in which RFF, ORF, ROM, QMC and SSR are conducted under the same feature dimension $N$ with our third/fifth-degree rules for fair comparison.
Note that SGQ generates the same feature dimension $N=2d+1$ with D-FS in the third-degree rules but outputs the larger one $N=1+2d^2 + 2d$ in the fifth rule, see in Table~\ref{tablarge}. 

\noindent {\bf Results on Gaussian kernel:}
Figure~\ref{figappgauss} shows approximation error, time cost, and test accuracy (mean$\pm$std.) of all the compared algorithms across the Gaussian kernel in terms of the third-degree rules (see the blue bar) and the fifth-degree rules (see the yellow bar), respectively.
We find that, our third-degree D-FS decreases the approximation error of RFF and QMC, achieves a comparable performance with ORF and ROM, but is slightly inferior to SSR. 
Besides, the third-degree SGQ performs the same with D-FS in terms of the approximation error as the generated nodes in these two algorithms are almost the same due to the same \emph{generator} used.
Nevertheless, our fifth-degree D-FS not only requires smaller $N$ than SGQ, but also achieves the best approximation quality (with noticeable reduction) of all the compared algorithms.

In terms of time cost on generating the feature mapping, there is no distinct difference between our third/fifth-degree D-FS and RFF. Interestingly, our fifth-degree D-FS is more efficient than quadrature methods SSR and SGQ.
For prediction, most algorithms achieve the similar test accuracy on these datasets.
Good kernel approximation quality cannot guarantee the final good prediction, which still remains an open question in theory.
The reason may be that the approximated kernel is not necessarily optimal for prediction, as discussed by \cite{avron2017random,zhang2019f,liu2020survey}.
Nevertheless, for the design of kernel approximation, it is reasonable to pursue small approximation errors.

\noindent {\bf Results on arc-cosine kernel:}
Figure~\ref{figapparccos} shows the related results across the first-order arc-cosine kernel. The trends of the compared algorithms are analogous to those across the Gaussian kernel in Figure~\ref{figappgauss}.
Generally, the approximation error of each algorithm on the arc-cosine kernel is larger than that of Gaussian kernel.
The reason may be that the integrand $f$ for the Gaussian kernel corresponds to trigonometric functions that are infinitely differentiable; while $f$ for the first-order arc-cosine kernel is actually a ReLU function that is non-differentiable.
In fact, as we discussed in the introduction, the differentiable property on the integrand significantly affects the approximation performance in Monte Carlo sampling, QMC, and quadrature methods.

Based on the above results, we conclude that our deterministic third/fifth-degree rules are quite efficient to achieve promising performance on the approximation quality, and comparable results on classification accuracy.
\begin{figure}[t]
	\centering
	\subfigure[approximation error]{
		\includegraphics[width=0.22\textwidth]{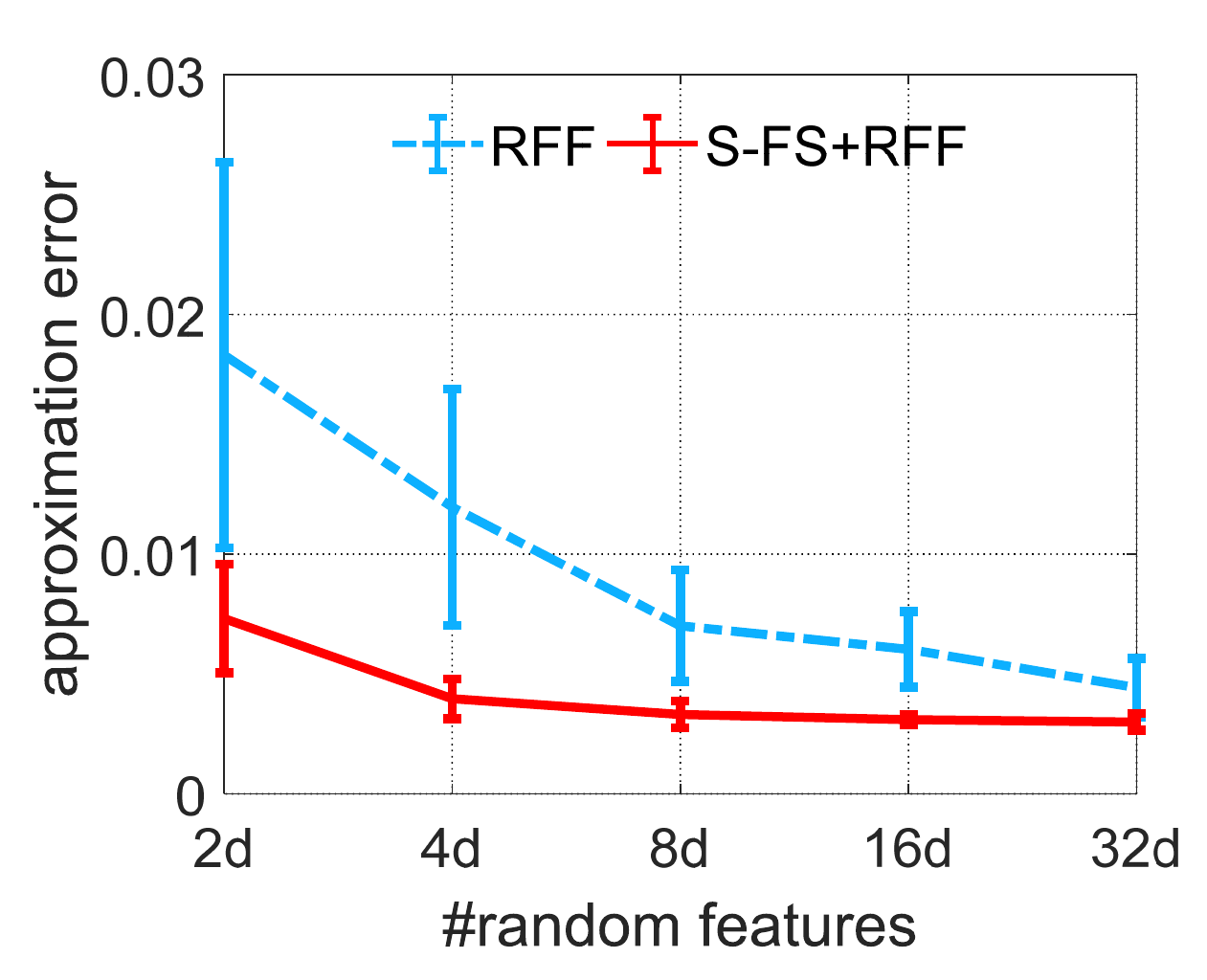}}
	\subfigure[time cost]{
		\includegraphics[width=0.22\textwidth]{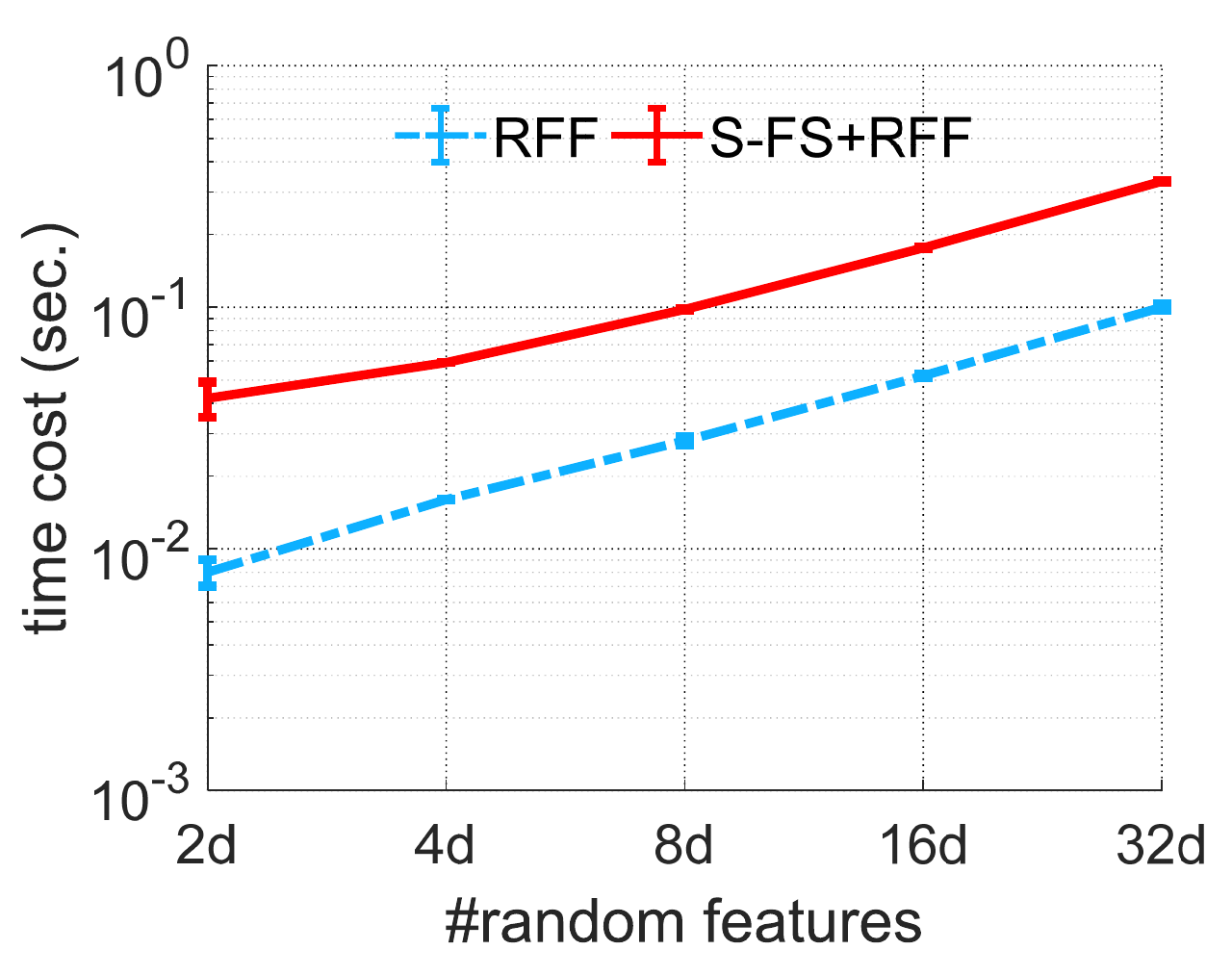}}
	\caption{Benefits of our S-FS rule in Eq.~\eqref{feamapf} against RFF across the Gaussian kernel on the \emph{magic04} data set.}\label{fvarious}
\end{figure}

\begin{figure*}[!htb]
	\centering
	
	\subfigure{
		\includegraphics[width=0.22\textwidth]{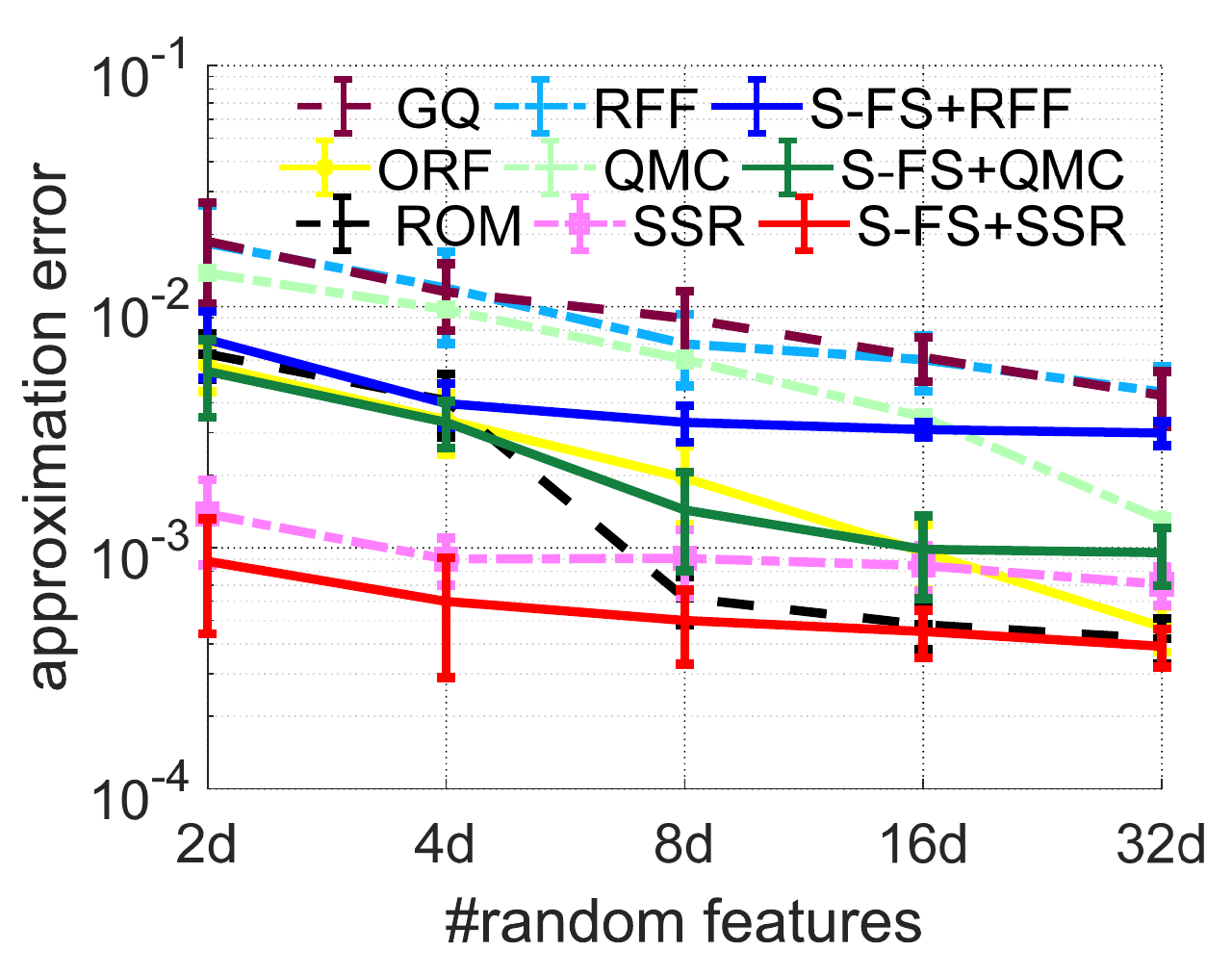}}
	\subfigure{
		\includegraphics[width=0.22\textwidth]{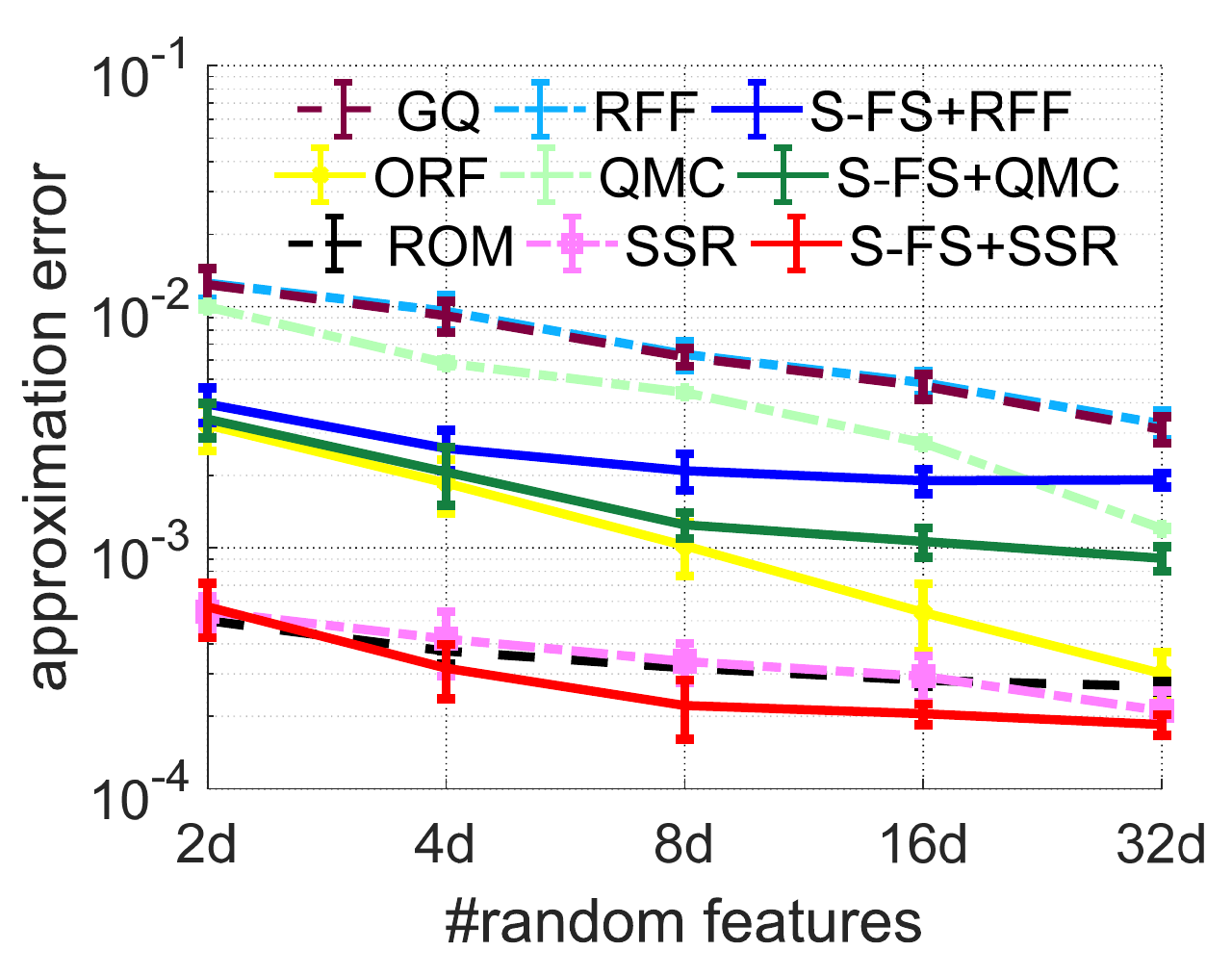}}
	\subfigure{
		\includegraphics[width=0.22\textwidth]{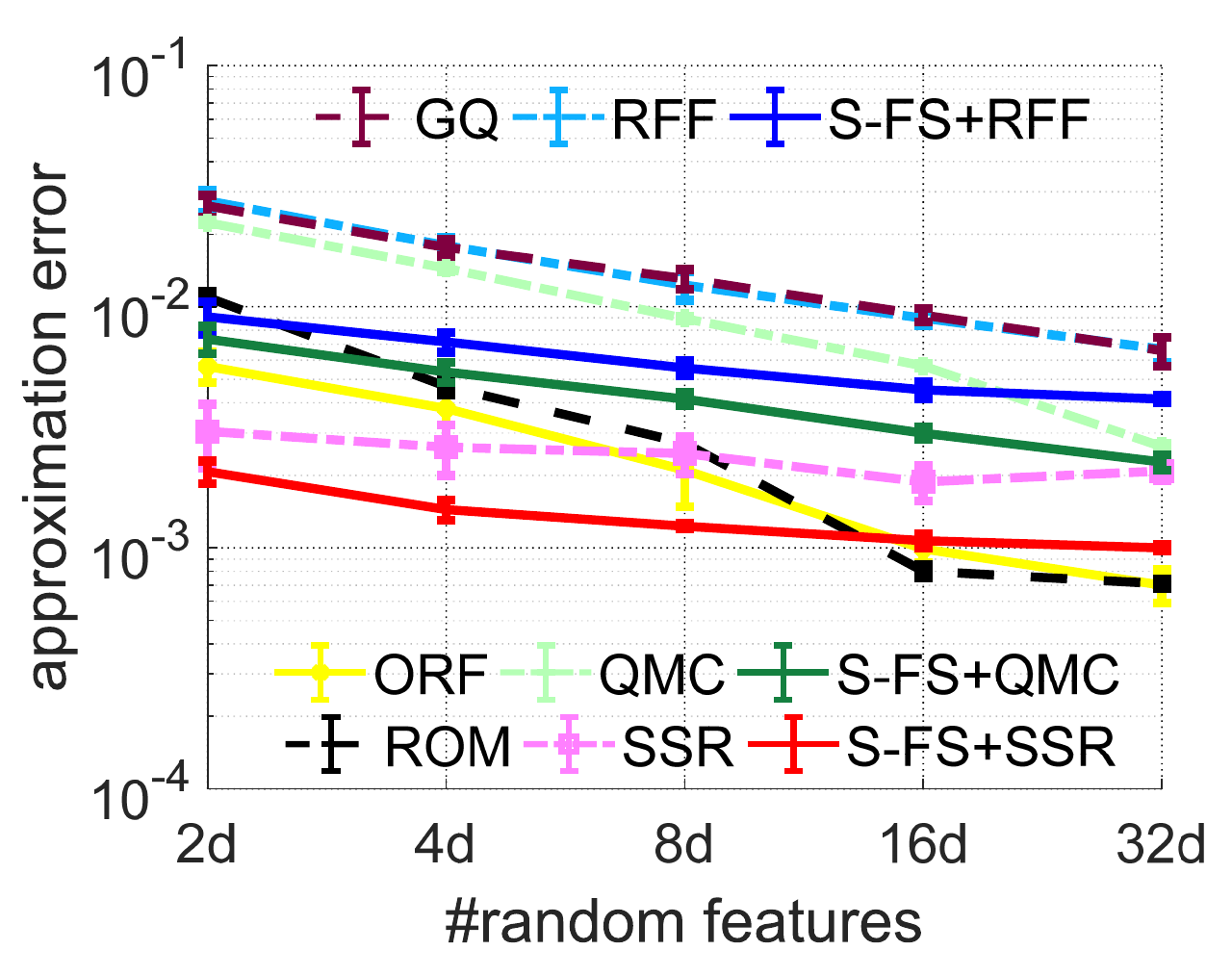}}
	\subfigure{
		\includegraphics[width=0.22\textwidth]{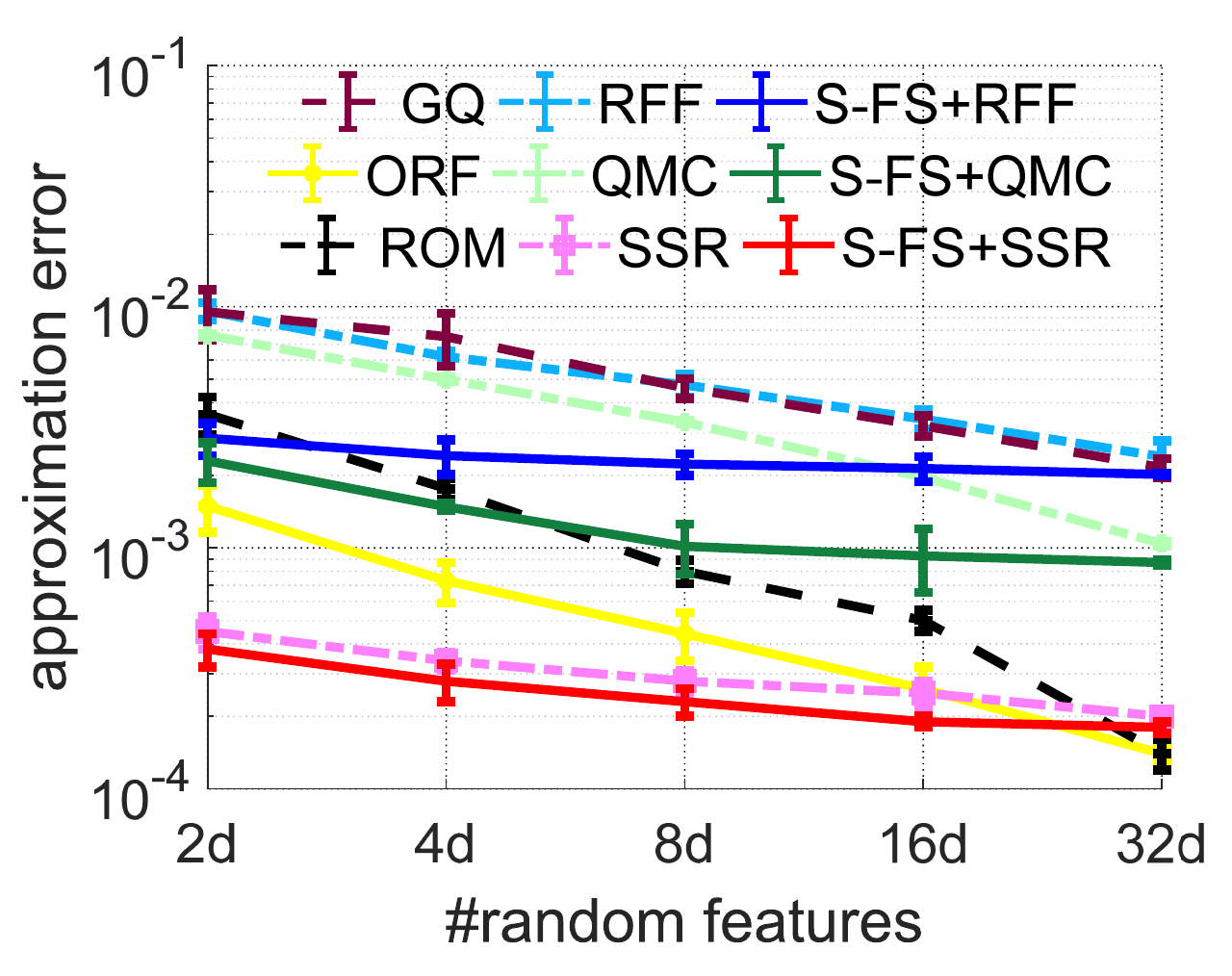}}
	
	\subfigure{
		\includegraphics[width=0.22\textwidth]{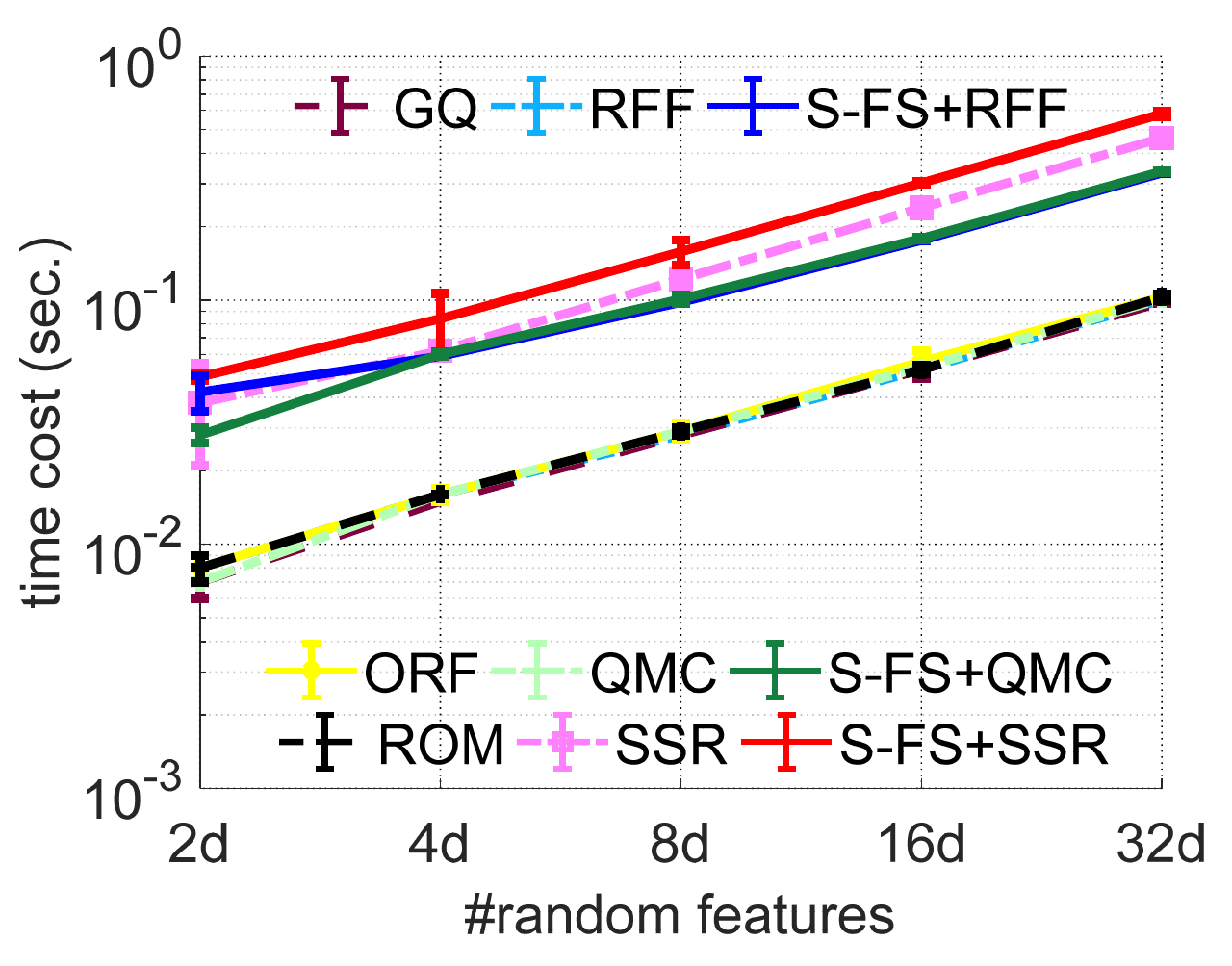}}
	\subfigure{
		\includegraphics[width=0.22\textwidth]{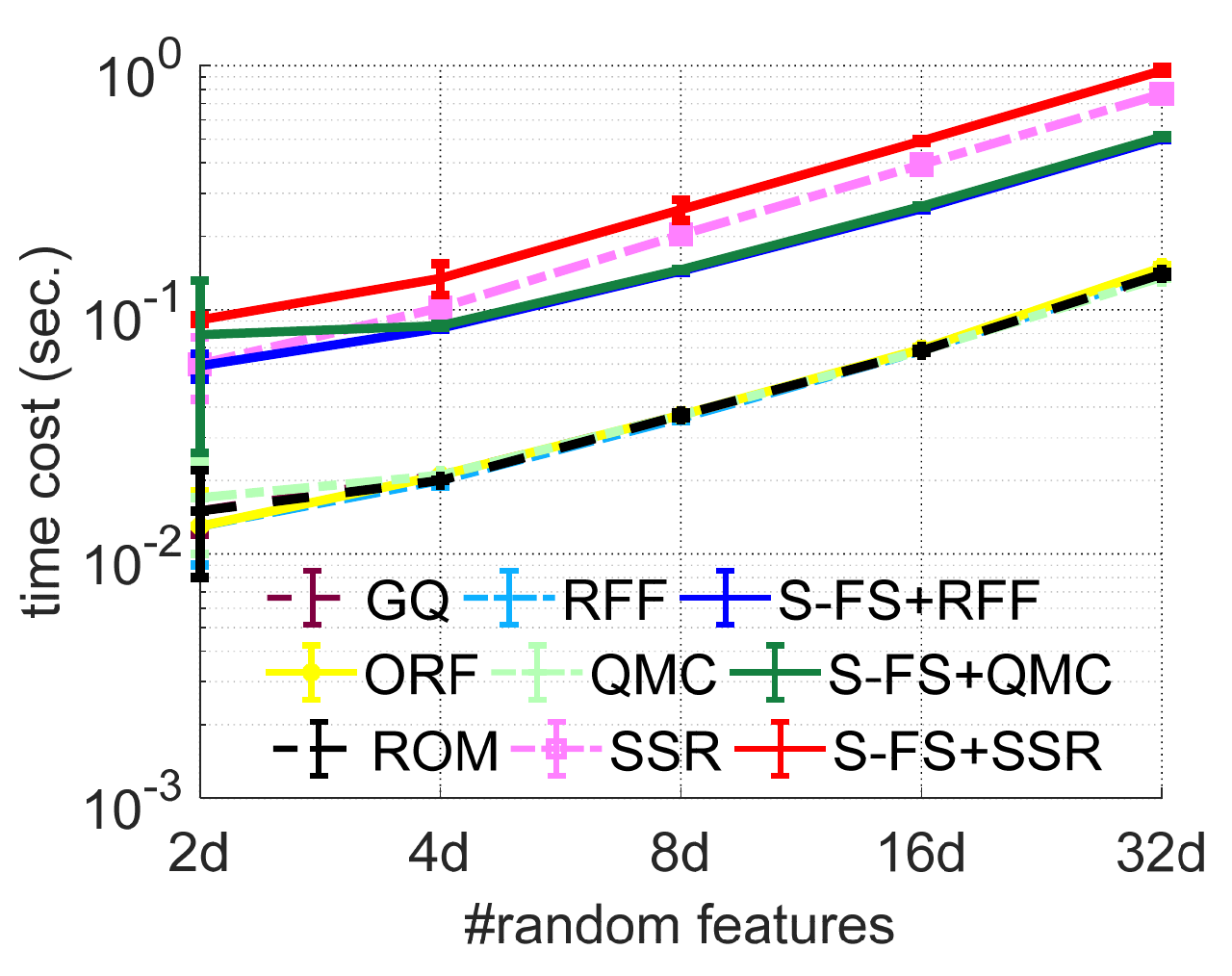}}
	\subfigure{
		\includegraphics[width=0.22\textwidth]{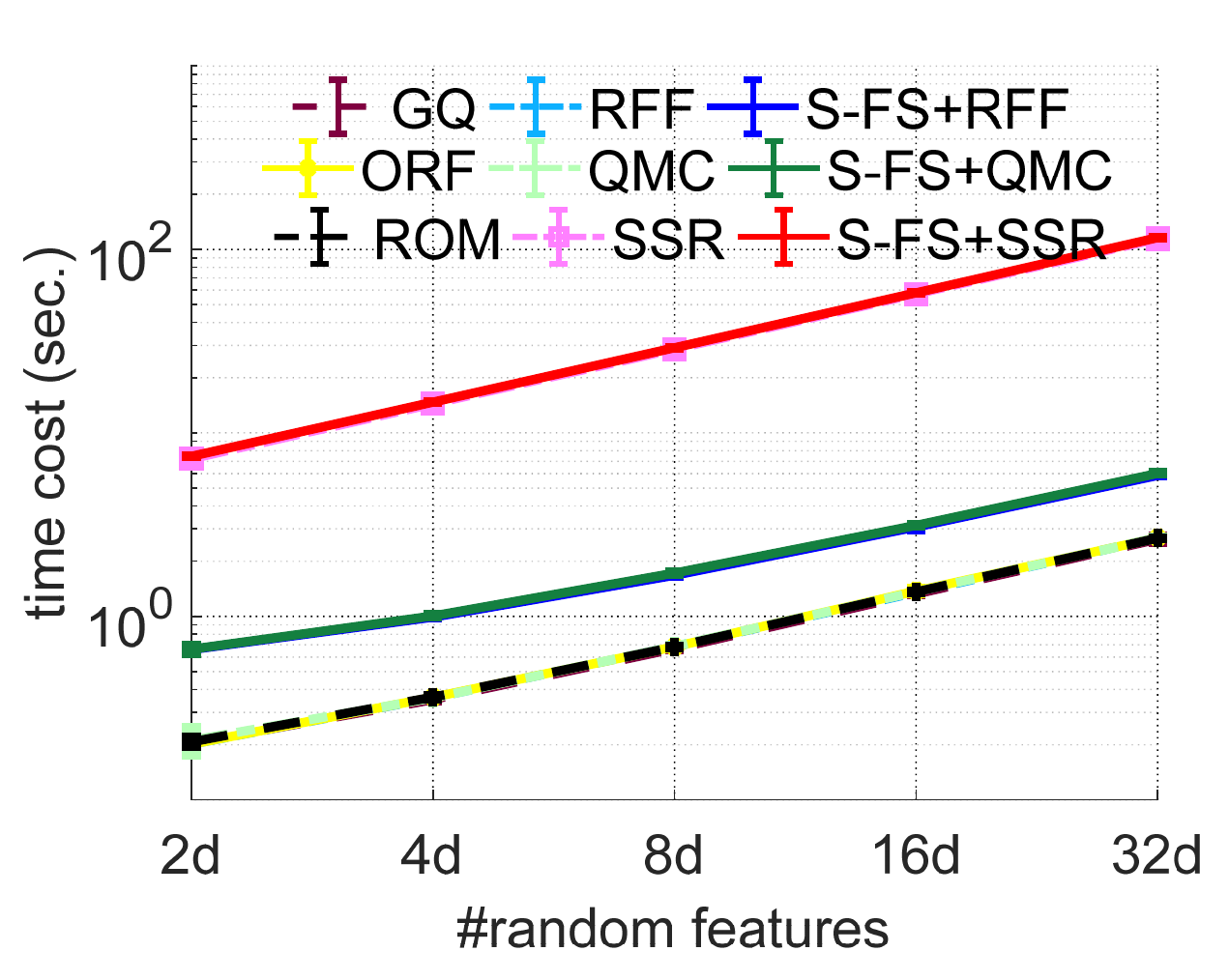}}
	\subfigure{
		\includegraphics[width=0.22\textwidth]{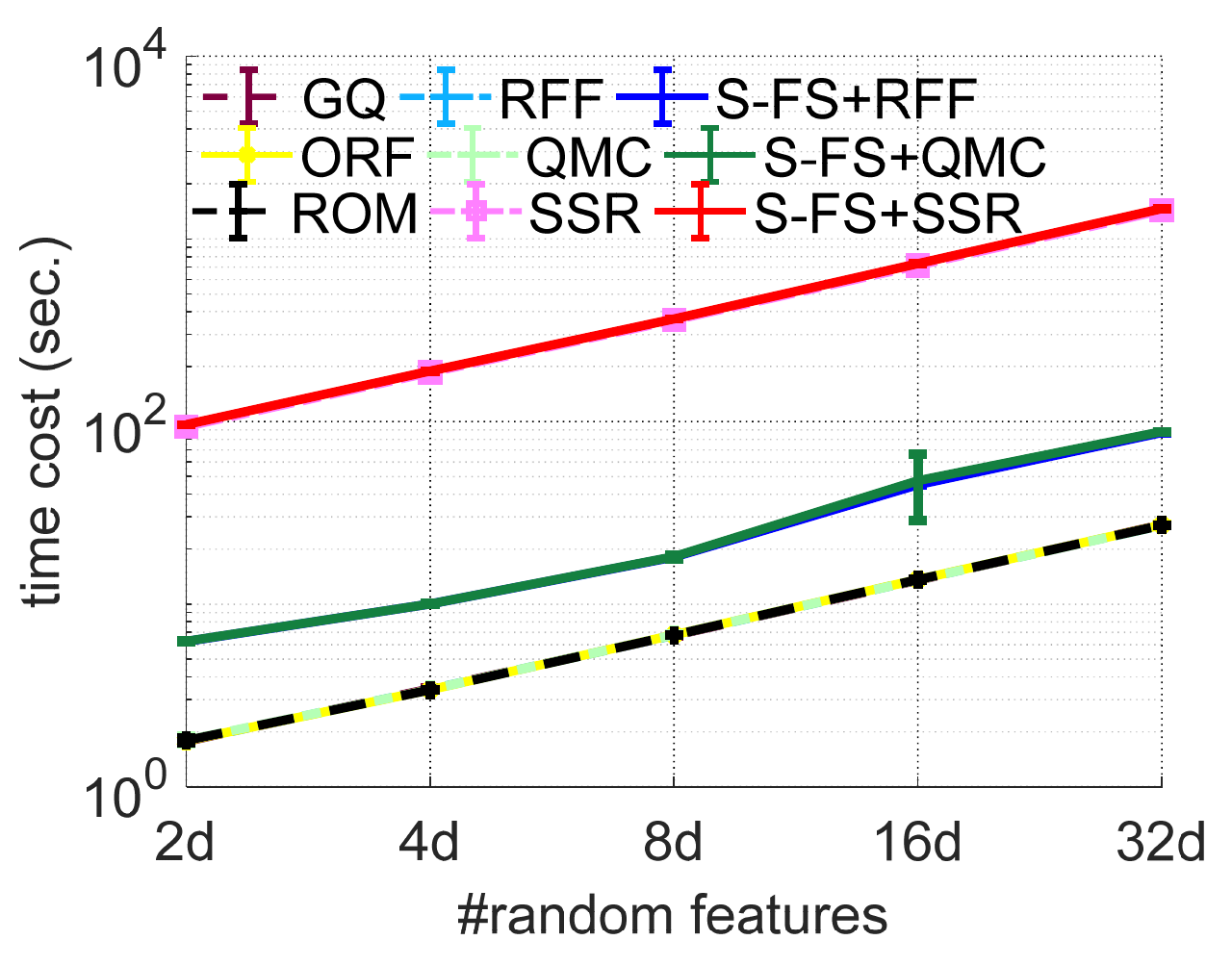}}
	
	\subfigure[\emph{magic04}]{
		\includegraphics[width=0.22\textwidth]{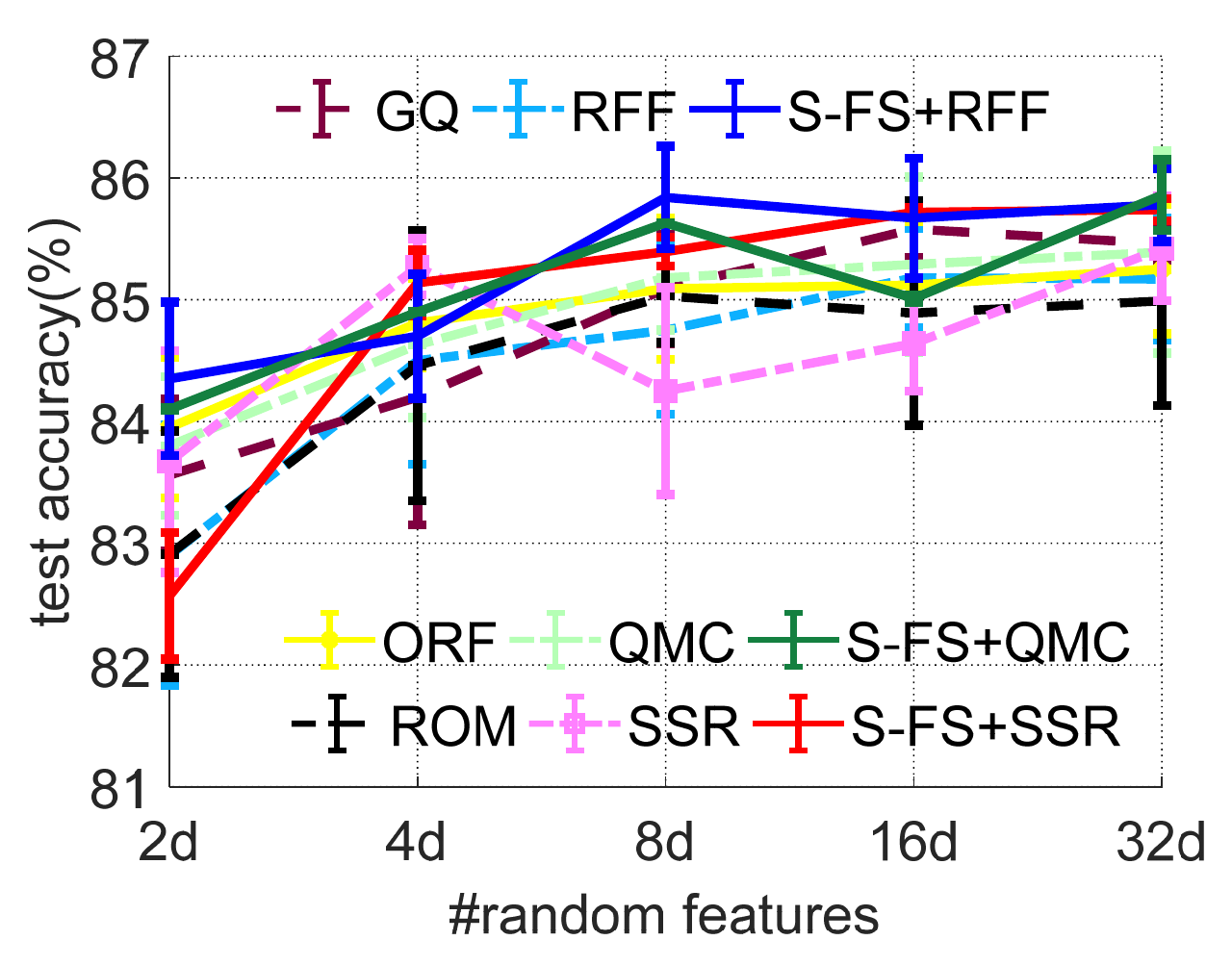}}
	\subfigure[\emph{letter}]{
		\includegraphics[width=0.22\textwidth]{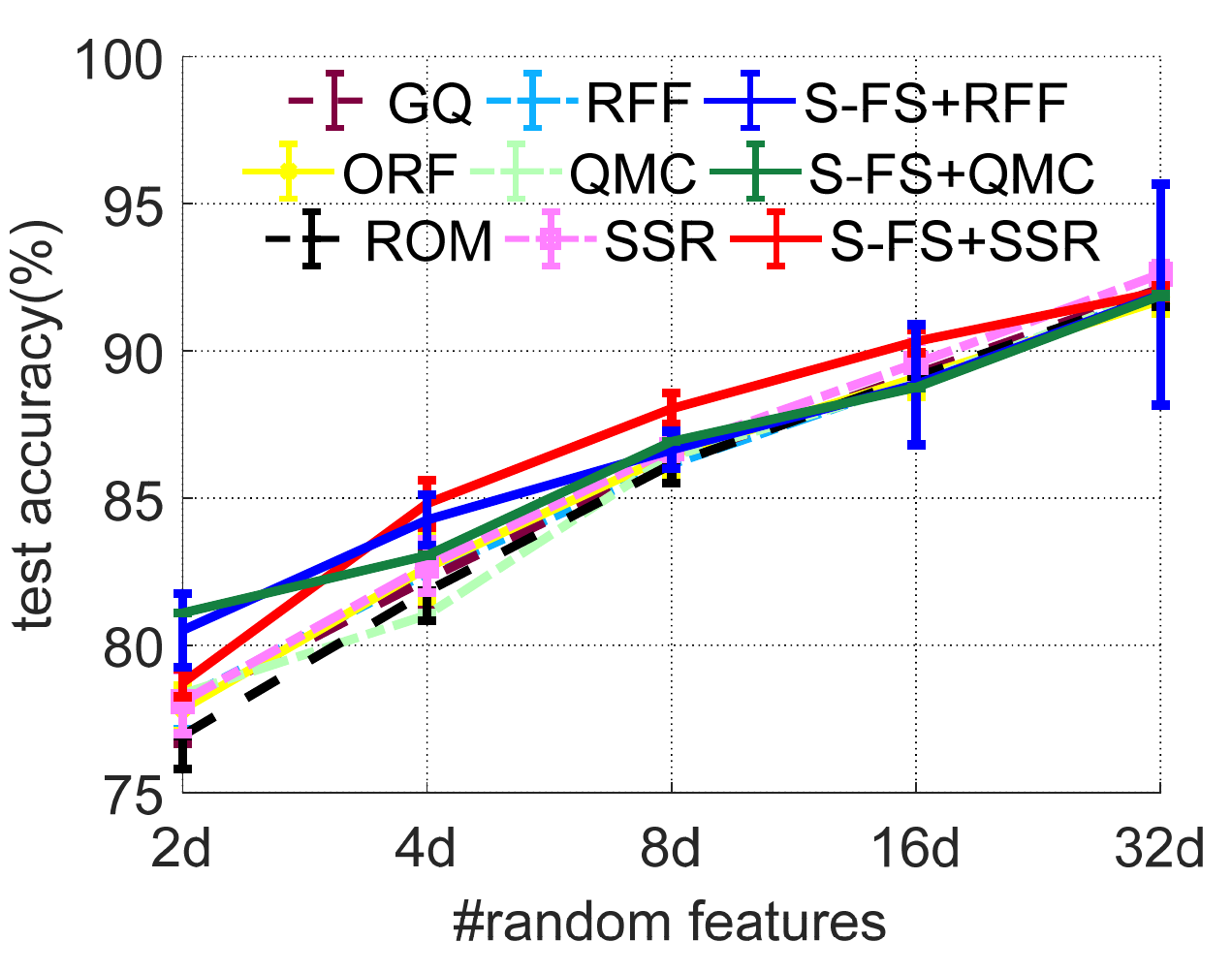}}
	\subfigure[\emph{ijcnn1}]{
		\includegraphics[width=0.22\textwidth]{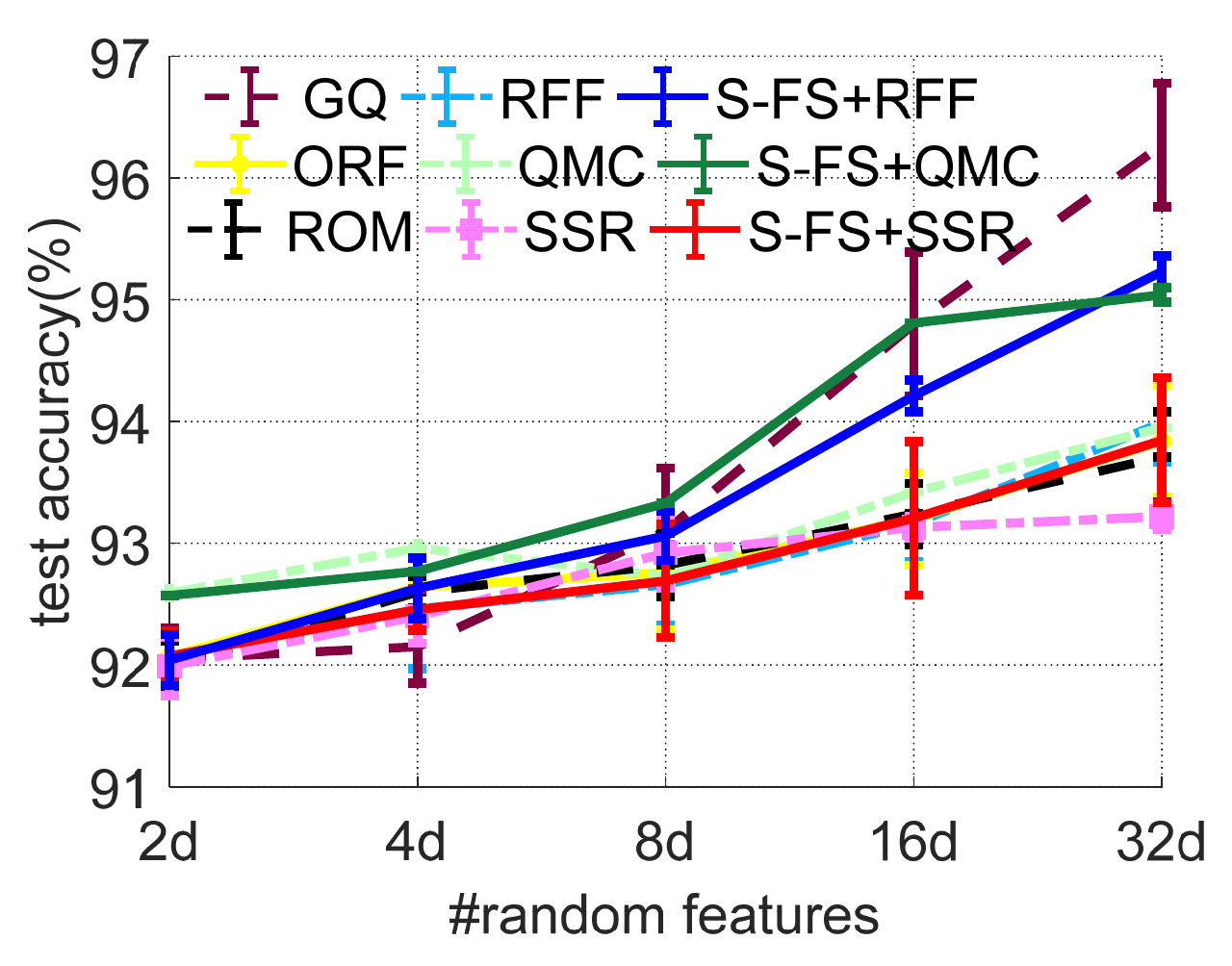}}
	\subfigure[\emph{covtype}]{
		\includegraphics[width=0.22\textwidth]{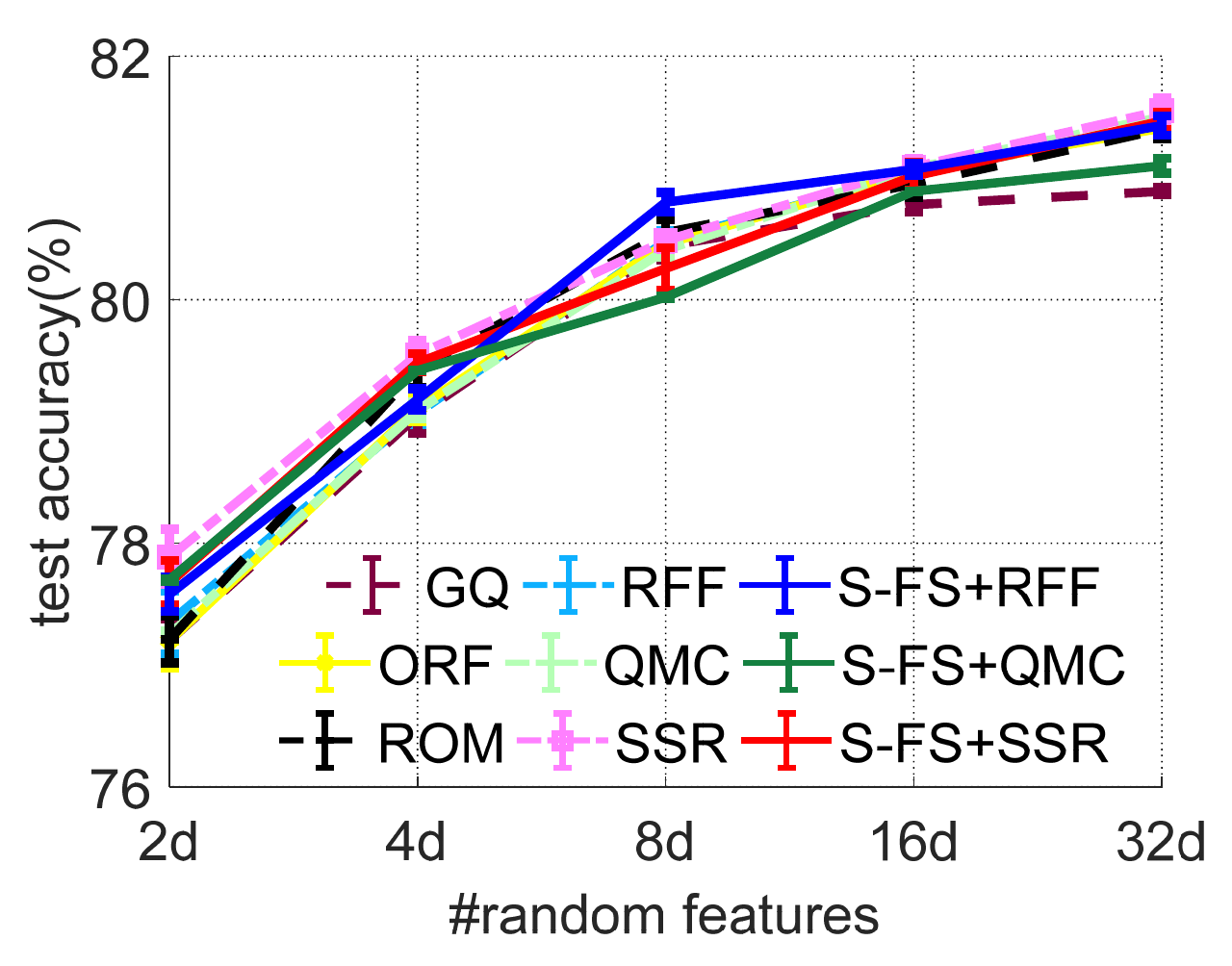}}
	\caption{Kernel approximation (top), time cost (middle), and test accuracy (bottom) across the Gaussian kernel.}\label{fapp}
\end{figure*}

\begin{figure*}[!htb]
	\centering
	
	\subfigure{
		\includegraphics[width=0.22\textwidth]{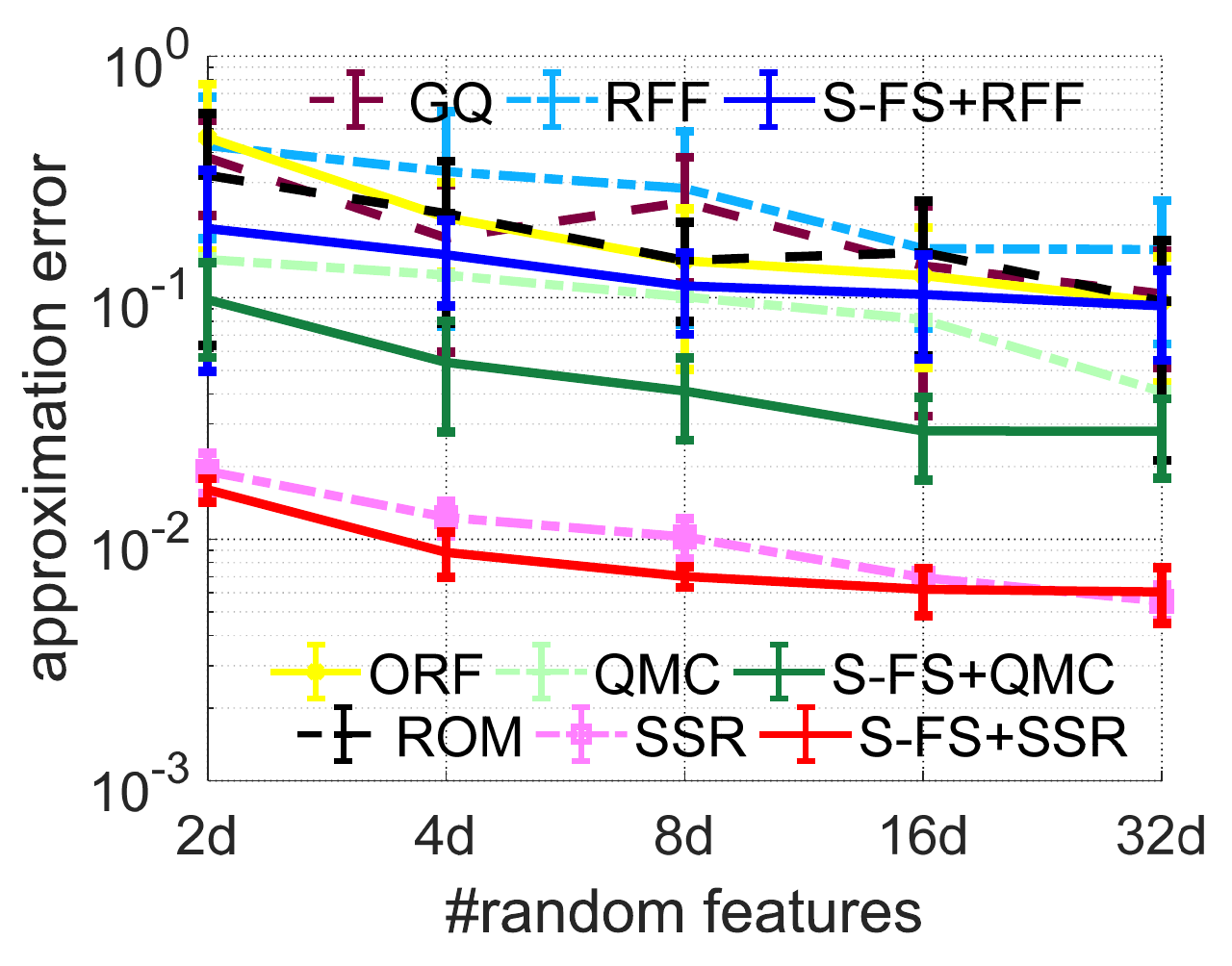}}
	\subfigure{
		\includegraphics[width=0.22\textwidth]{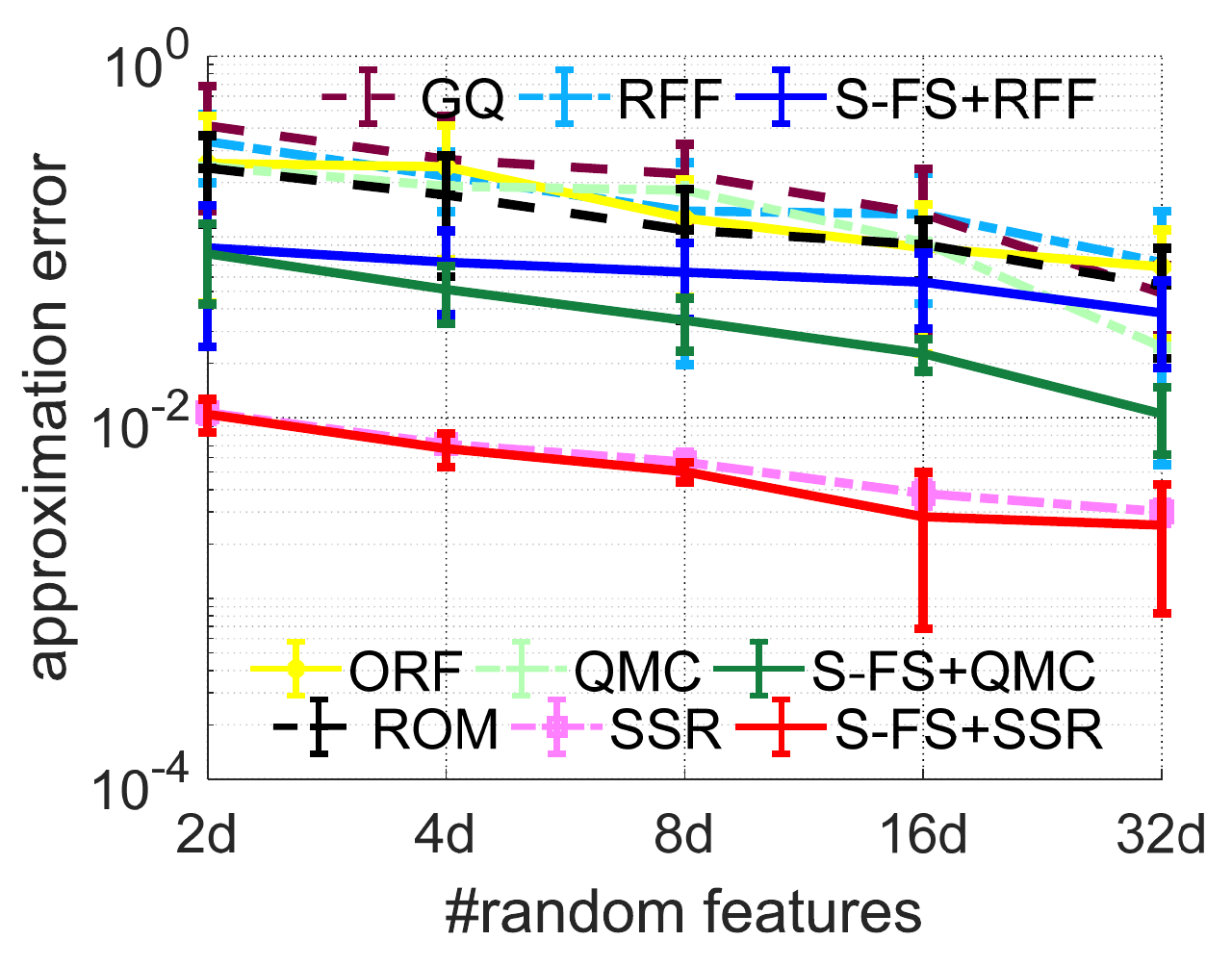}}
	\subfigure{
		\includegraphics[width=0.22\textwidth]{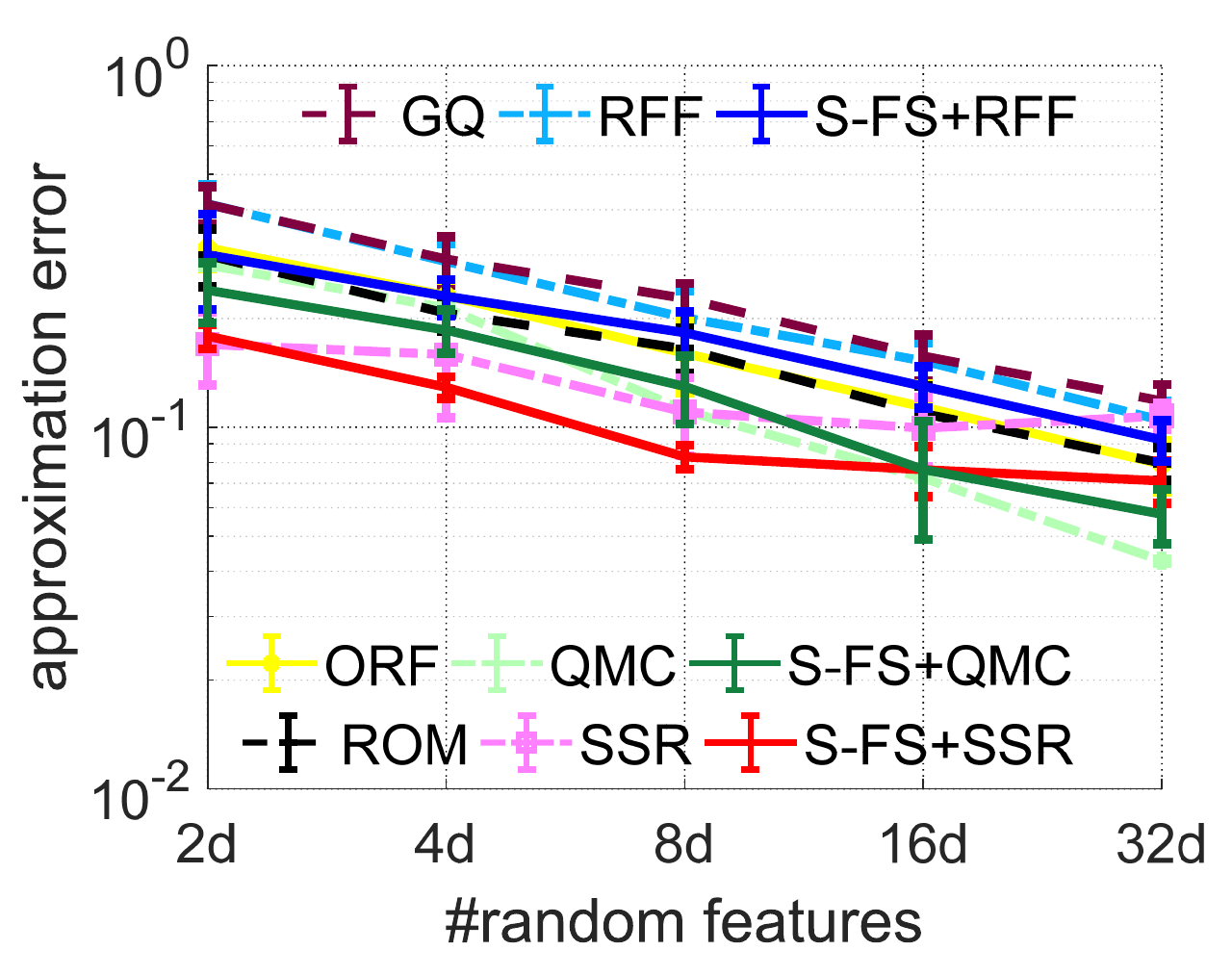}}
	\subfigure{
		\includegraphics[width=0.22\textwidth]{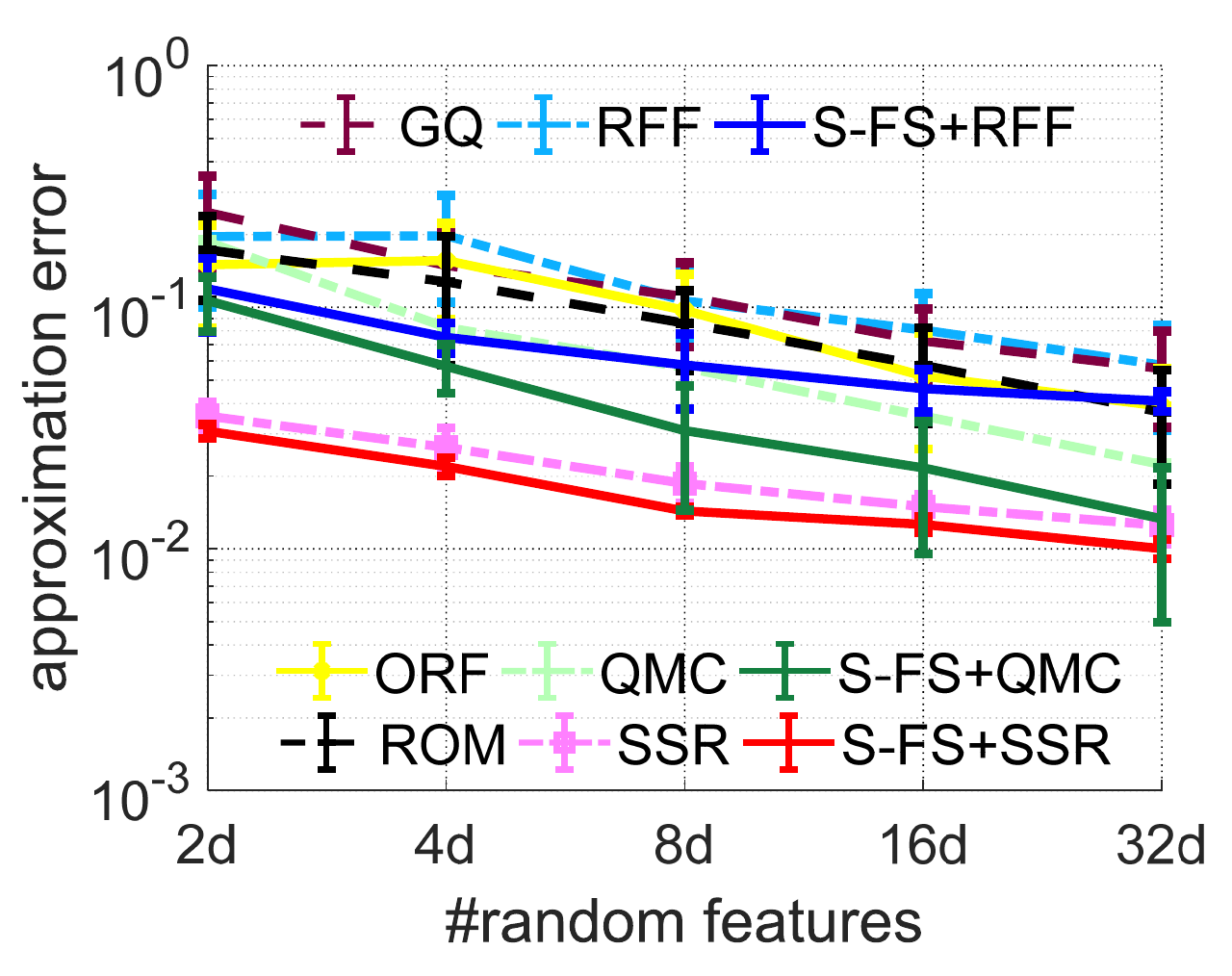}}
	
	\subfigure[\emph{magic04}]{
		\includegraphics[width=0.22\textwidth]{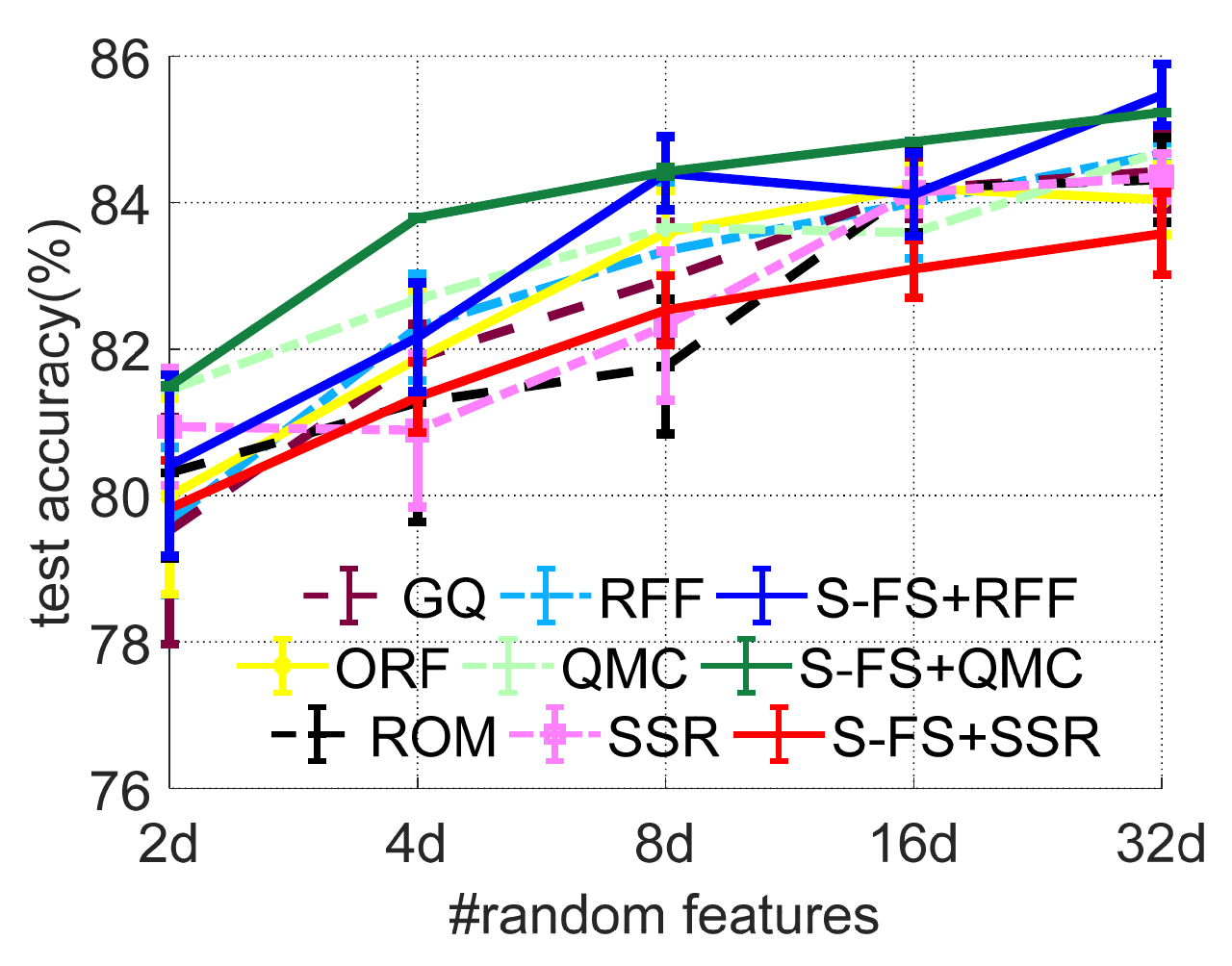}}
	\subfigure[\emph{letter}]{
		\includegraphics[width=0.22\textwidth]{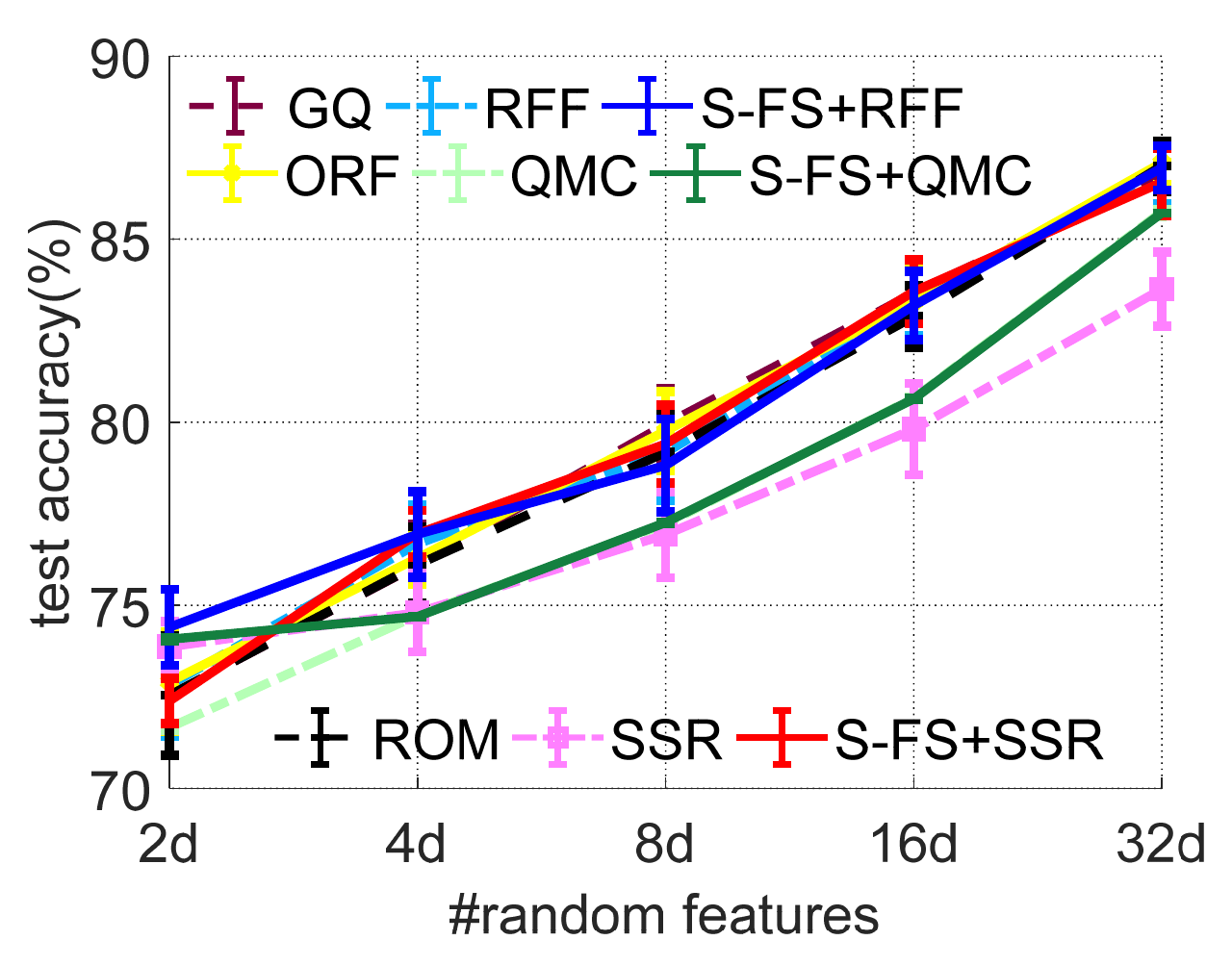}}
	\subfigure[\emph{ijcnn1}]{
		\includegraphics[width=0.22\textwidth]{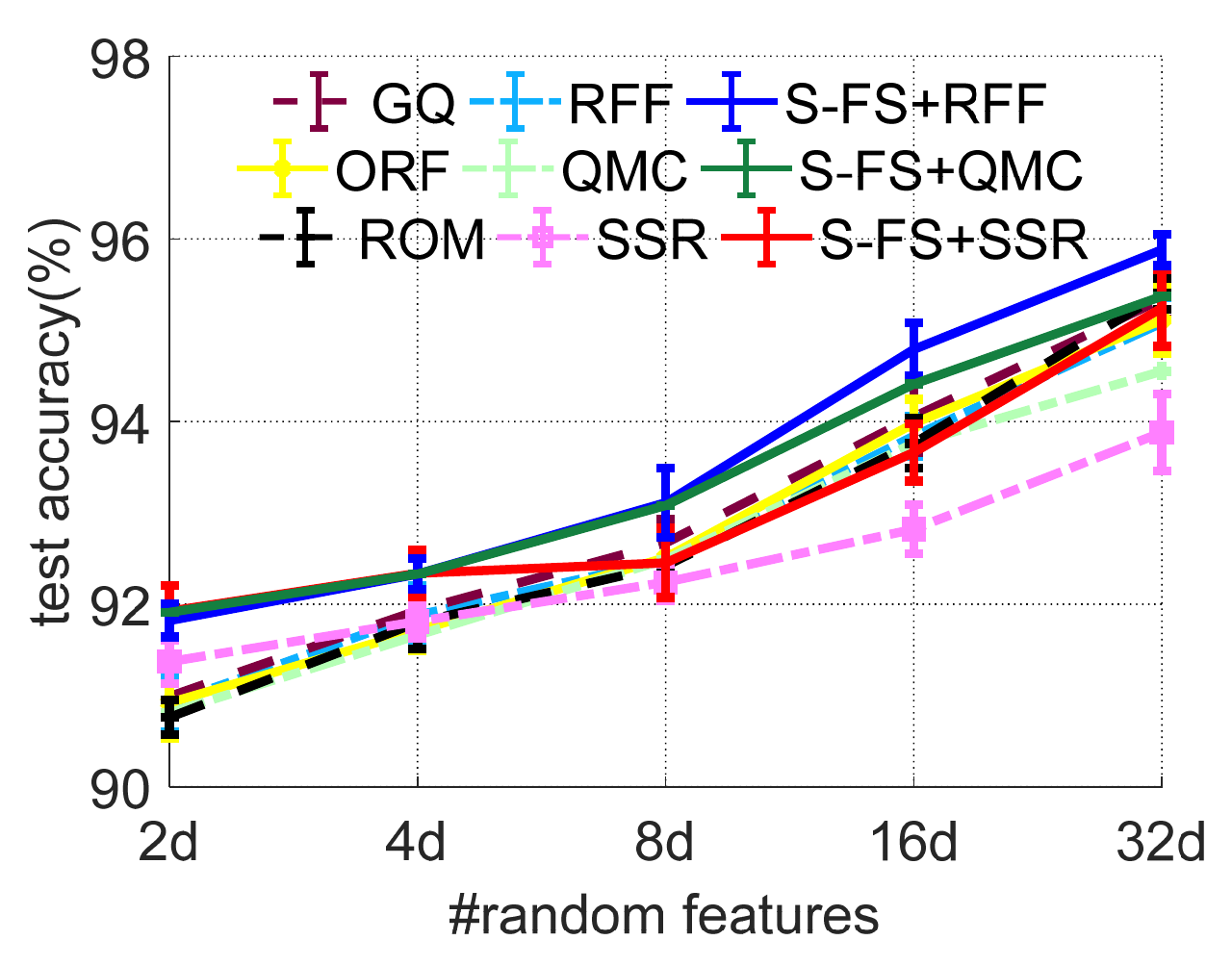}}
	\subfigure[\emph{covtype}]{
		\includegraphics[width=0.22\textwidth]{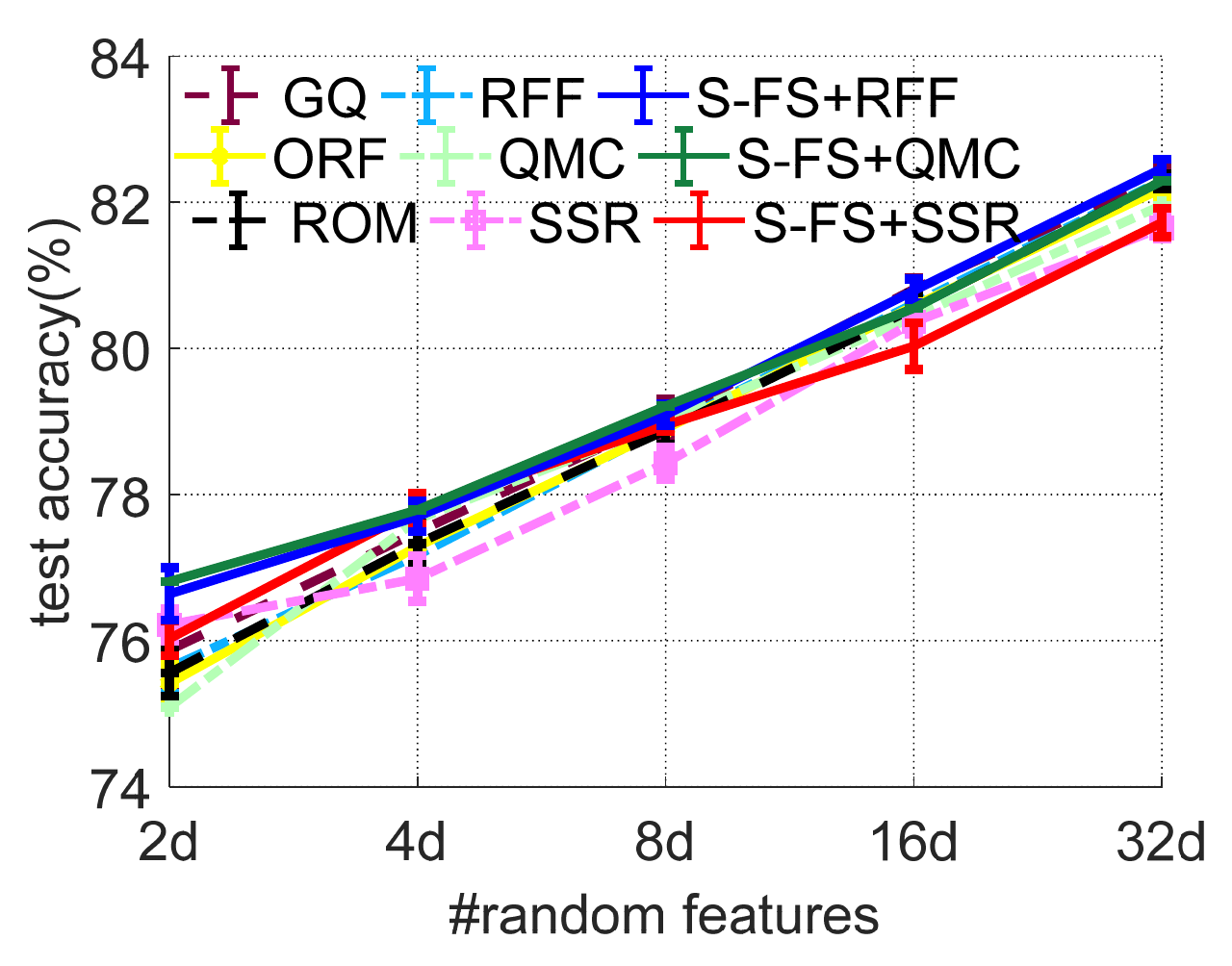}}
	
	\caption{Kernel approximation error (top) and test accuracy (bottom) across the first-order arc-cosine kernel.}\label{fapparc}
\end{figure*}

\subsection{Evaluation for Stochastic Rules}
\label{sec:exp:sto}
Here we evaluate the proposed third-degree S-FS under a dimension adjustment setting, in which the feature dimension in Eq.~\eqref{feamapf} is manually fixed with $D=\{ 2d,4d, 8d, 16d, 32d \}$.
In this case, S-FS generates the feature mapping $\widehat{\Phi}(\cdot) \in \mathbb{R}^{D+4d+2}$, but still achieves the same time/space complexity $\mathcal{O}(Dd)$ with RFF.
We begin with an intuitive comparison of S-FS in Eq.~\eqref{feamapf} against RFF and then conduct a comprehensive experimental evaluation of all the randomized algorithms.

First, to validate the effectiveness of S-FS on variance reduction, Figure~\ref{fvarious} shows the approximation error and the time cost across the Gaussian kernel on the \emph{magic04} data set between S-FS and RFF.
Both of them draw $\{ \bm \omega_i \}_{i=1}^D \sim \mathcal{N}(\bm 0, \bm I_d)$ by Monte-Carlo sampling, so S-FS under this setting is termed as ``S-FS+RFF". 
It can be noticed that, admittedly, ``S-FS+RFF" takes a little more time than RFF on generating the feature mapping. However, it achieves significant improvement on RFF in terms of the approximation quality, which demonstrates the effectiveness of the used control variates technique in Eq.~\eqref{feamapf}.
Besides, we observe that, the variance reduction effect weakens or even disappears when $D$ is large.
One reason might be that, the variance of S-FS converges to that of RFF at a fast $\mathcal{O}(1/(Dd))$ rate, as demonstrated by Theorem~\ref{thmvar}.

In the next, we present a comprehensive evaluation of the proposed S-FS rule with other representative approaches.
To purse a better approximation performance, apart from the original Monte-Carlo sampling in S-FS, we also incorporate various sampling strategies into S-FS: $\{ \bm \omega_i \}_{i=1}^D$ in  Eq.~\eqref{feamapf} are obtained by QMC and SSR, termed as ``S-FS+QMC" and ``S-FS+SSR" respectively.

\noindent {\bf Results on Gaussian kernel:} 
Figure~\ref{fapp} shows the approximation error, time cost, and test accuracy (mean$\pm$std.) of all the compared algorithms across the Gaussian kernel under different feature dimensionality with $D=\{ 2d, 4d, 8d, 16d, 32d \}$.
We find that, when compared to the original RFF, QMC, SSR, our stochastic rules including ``S-FS+RFF", ``S-FS+QMC", ``S-FS+SSR" manifest significant reduction on the approximation error, respectively.
In terms of time complexity, due to the used control variates technique, our stochastic rules take more time than the original RFF/ORF/QMC/ROM/GQ.
Among these three sampling strategies, ``S-FS+RFF" and ``S-FS+QMC" take the similar time cost on generating the feature mapping, achieving the same time complexity $\mathcal{O}(Dd)$ with RFF.
However, ``S-FS+SSR" is relatively time-consuming on the \emph{ijcnn1} and \emph{covtype} datasets as SSR itself requires more time to obtain random orthogonal matrices in large scale situations.

As mentioned before, the compared algorithms achieve the similar test accuracy in the deterministic setting. There is almost no distinct difference between these approaches on the final classification accuracy under varying feature dimensionlity.

\noindent {\bf Results on arc-cosine kernel:} 
Figure~\ref{fapparc} shows the approximation error and test accuracy of all the compared algorithms across the first-order arc-cosine kernel on these four datasets.
It can be found that, the compared algorithms across the arc-cosine kernel are generally inferior to them across the Gaussian kernel in terms of the approximation quality and generalization performance.

In sum, we experimentally validate that our stochastic rules are unbiased and achieve variance reduction in terms of the approximation error. 
Since ``S-FS+QMC" is more efficient than SSR on these datasets, and thus is demonstrated to achieve a good trade-off between the approximation quality and time cost.
\vspace{-0.2cm}

\section{Conclusion}
We present deterministic/stochastic quadrature methods D-FS/S-FS based on the fully symmetric interpolatory rule to approximate the Gaussian kernel and the first-order arc-cosine kernel via the integration representation~\eqref{originte}.
Our third/fifth-degree deterministic rules achieve promising approximation quality while retaining the same time cost with RFF.
Our S-FS rules exhibit variance reduction on the approximation error due to the used control variates technique, and performs well on real datasets.
By studying the relations among the third-degree quadrature based methods, our unified framework mainly demonstrates that, 1) D-FS recovers SGQ by choosing suitable parameters; 2) SSR can be regarded as a doubly stochastic version of D-FS via a \emph{random} projection scheme and a \emph{randomized} generator.

\section*{Acknowledgements}
The research leading to these results has received funding from the European Research Council under the European Union's Horizon 2020 research and innovation program / ERC Advanced Grant E-DUALITY (787960). This paper reflects only the authors' views and the Union is not liable for any use that may be made of the contained information.
This work was supported in part by Research Council KU Leuven: Optimization frameworks for deep kernel machines C14/18/068; Flemish Government: FWO projects: GOA4917N (Deep Restricted Kernel Machines: Methods and Foundations), PhD/Postdoc grant. This research received funding from the Flemish Government (AI Research Program). 
This work was supported in part by Ford KU Leuven Research Alliance Project KUL0076 (Stability analysis and performance improvement of deep reinforcement learning algorithms), EU H2020 ICT-48 Network TAILOR (Foundations of Trustworthy AI - Integrating Reasoning, Learning and Optimization), Leuven.AI Institute; and in part by the National Natural Science Foundation of China 61977046, in part by National Science Foundation grants CCF-1657420 and CCF-1704828, and in part by SJTU Global Strategic Partnership Fund (2020 SJTU-CORNELL) and Shanghai Municipal Science and Technology Major Project (2021SHZDZX0102).

\clearpage

\appendices

The outline of the appendix is stated as follows. First, the fifth-degree D-FS is derived in Appendix~\ref{sec:fd}. Then we analyze the statistical properties (unbiasness and variance reduction) of our third-degree S-FS in Appendix~\ref{sec:sgsir}. Besides, we give the proof of Theorem~\ref{thirdproj} in Appendix~\ref{sec:proofthirdproj} that reveals the relations between D-FS and the stochastic spherical rule.
\section{Fifth-degree Rule}
\label{sec:fd}
When choosing $m=2$ in Eq.~\eqref{IQmdfnew}, we obtain a fifth-degree rule $Q^{(2,d)}$ with $\| \bm p \|_1 \leq 2$ to further improve the approximation accuracy. 
To derive the fifth-degree rule, we cast it in three cases, i.e., $\| \bm p \|_1 = 0$, $\| \bm p \|_1 = 1$, and $\| \bm p \|_1 = 2$.

If $\| \bm p \|_1 = 0$, we have $p_i = 0$, $\bm \lambda = \bm 0$, and $K=0$.
Then the weight $a^{(2,d)}_0$ is
\begin{equation}\label{a02d}
\begin{aligned}  a_{0}^{(2,d)} &= \sum_{\|\bm u \|_1 \leq 2} \prod_{i=1}^{d} \frac{b_{u_{i}}}{\prod_{j=0, \neq 0}^{u_{i}}\left(\lambda_{0}^{2}-\lambda_{j}^{2}\right)} \\
& = 1-\frac{d}{\lambda_{1}^{2}}+\frac{d(d-1)}{2 \lambda_{1}^{4}}+\frac{d (3-\lambda_1^2)}{\lambda_{1}^{2} \lambda_{2}^{2}}\,,
\end{aligned}
\end{equation}
where $b_2 = 3 - \lambda_1^2$ is obtained by Eq.~\eqref{bi}. In our derivation, $\| \bm u \|_1 \leq 2$ is cast into three cases: $\bm u = \bm 0$, $\| \bm u\|_1 = 1$, and $\| \bm u\|_1 = 2$ for calculation.  

If $\| \bm p \|_1 = 1$, only one element of $\bm p$ is 1 and the remaining are zero.
We thereby have $K=1$ and $\bm \lambda = \lambda_1 \bm e_i$ with $i=1,2,\dots, d$, where $\bm e_i$ is a unit vector with the $i$-th element being 1.
Without loss of generality, assuming $\bm p =[1,0,\cdots,0]$, the weight $a^{(2,d)}_1$ is computed as
\begin{equation}\label{a12d}
\begin{aligned} 
a_{1}^{(2, d)} & = \frac{1}{2} \sum_{\| \bm u \|_1 \leq 1} \prod_{i=1}^{d} \frac{b_{u_{i}+p_{i}}}{\prod_{j=0, \neq p_{i}}^{u_{i}+p_{i}}\left(\lambda_{p_{i}}^{2}-\lambda_{j}^{2}\right)} \\
& =\frac{1}{2\lambda_{1}^{2}}+\frac{3-\lambda_{1}^{2}}{2\lambda_{1}^{2}\left(\lambda_{1}^{2}-\lambda_{2}^{2}\right)}-\frac{d-1}{2\lambda_{1}^{4}}\,,
\end{aligned}
\end{equation}
where $\| \bm u \|_1 \leq 1$ is cast into two cases: $\bm u = \bm 0$ and $\| \bm u \|_1 = 1$ for derivation.

If $\| \bm p \|_1 = 2$, the derivation is a little complex and we cast it into two cases.
One is that there are two elements in $\bm p$ being 1, i.e., $p_i = p_j = 1$ with $i \neq j$.
The other is only one element of $\bm p$ being 2, i.e., $p_i=2$.
For the $p_i = p_j = 1$ case, we have $K=2$ and $\bm \lambda = \lambda_1 \bm s_{l}^+$ or $\bm \lambda = \lambda_1 \bm s_{l}^-$, where the point sets of $\bm s_{l}^+$ and $\bm s_{l}^-$ are given by
\begin{equation*}
\begin{split}
& \{ \bm s_{l}^+ \}_{l=1}^{d(d-1)/2} := \{ \bm e_i + \bm e_j: i < j,~i,j = 1,2,\cdots,d \} \\
& \{ \bm s_{l}^- \}_{l=1}^{d(d-1)/2} := \{ \bm e_i - \bm e_j: i < j,~i,j = 1,2,\cdots,d \} \\
\end{split}
\end{equation*}
Without loss of generality, assuming $\bm p =[1,1,0,\cdots,0]$, the weight $a_2^{(2,d)}$ is
\begin{equation}\label{a22d}
\begin{split} 
a_{2}^{(2, d)} &=\frac{1}{4} \prod_{i=1}^{d} \frac{b_{p_{i}}}{\prod_{j=0, \neq p_{i}}^{p_{i}}\left(\lambda_{p_{i}}^{2}-\lambda_{j}^{2}\right)} \\ &=\frac{1}{4} \left[ \frac{b_{1}}{\left(\lambda_{1}^{2}-\lambda_{0}^{2}\right)} \right]^2 =\frac{1}{4 \lambda_{1}^{4}} \,.
\end{split}
\end{equation}

For the $p_i =2$ case, we have $K=1$ and $\bm \lambda = \lambda_2 \bm e_i$. Without loss of generality, assuming  $\bm p=[2,0,\cdots,0]$, the weight $a_3^{(2,d)}$ is computed as
\begin{equation}\label{a32d}
\begin{split} 
a_{3}^{(2, d)} &=\frac{1}{2} \prod_{i=1}^{d} \frac{b_{p_{i}}}{\prod_{j=0, \neq p_{i}}^{p_{i}}\left(\lambda_{p_{i}}^{2}-\lambda_{j}^{2}\right)}  = \frac{3- \lambda_1^2}{2 \lambda_2^2 (\lambda_2^2 - \lambda_1^2)} \,.
\end{split}
\end{equation}
Accordingly, combining the derived weights in Eqs.~\eqref{a02d},~\eqref{a12d},~\eqref{a22d}, and ~\eqref{a32d},
the fifth-degree full symmetric interpolatory rule is
\begin{equation*}
\begin{split} 
&Q^{(2, d)}(f)= a_{0}^{(2, d)} f(\bm 0)+a_{1}^{(2, d)} \sum_{i=1}^{d}\left[f\left(\lambda_{1} \bm{e}_{i}\right)+f\left(-\lambda_{1} \bm{e}_{i}\right)\right] \\ &+a_{2}^{(2, d)} \sum_{i=1}^{d(d-1) / 2} \!\!\! \!\left[f\left(\lambda_{1} \bm{s}_{i}^{+}\right) \!+\! f\left(-\lambda_{1} \bm{s}_{i}^{+}\right)\!+\! f\left(\lambda_{1} \bm{s}_{i}^{-}\right) \!+\! f\left(-\lambda_{1} \bm{s}_{i}^{-}\right) \right] \\ &+a_{3}^{(2, d)} \sum_{i=1}^{d}\left[f\left(\lambda_{2} \bm{e}_{i}\right)+f\left(-\lambda_{2} \bm{e}_{i}\right)\right] \\
&:=a_{0}^{(2, d)} f(\bm 0) + \sum_{j=1}^{2d} \left( a_{1}^{(2, d)}f(\bm P_{j,1}) 
+ a_{3}^{(2, d)} f(\bm P_{j,3}) \right) \\
&+ a_{2}^{(2, d)} \sum_{j=1}^{2d(d-1)} f(\bm P_{j,2})\,,
\end{split}
\end{equation*}
where 
\begin{equation*}
\bm P_{j,1} = \left\{
\begin{array}{rcl}
\begin{split}
& \lambda_1 \bm{e}_{i};~ a_{i} =\frac{1}{2d};~ 1  \leq i \leq d \\
& -\lambda_1 \bm{e}_{i-d};~ d+1 \leq i \leq 2 d
\end{split}
\end{array}\right.
\end{equation*}

\begin{equation*}
\bm P_{j,3} = \left\{
\begin{array}{rcl}
\begin{split}
& \lambda_2 \bm{e}_{i};~ 1  \leq i \leq d \\
& -\lambda_2 \bm{e}_{i-d};~ d+1 \leq i \leq 2 d
\end{split}
\end{array}\right.
\end{equation*}

\begin{equation*}
\bm{P}_{j, 2}=\left\{\begin{array}{ll}{\lambda_{1}\left(\bm{e}_{i}+\bm{e}_{t}\right)} & {i, t=1, \ldots, d ; i<t} \\ {\lambda_{1}\left(\bm{e}_{i}-\bm{e}_{t}\right)} & {i, t=1, \ldots, d ; i<t} \\ {\lambda_{1}\left(-\bm{e}_{i}+\bm{e}_{t}\right)} & {i, t=1, \ldots, d ; i<t} \\ {\lambda_{1}\left(-\bm{e}_{i}-\bm{e}_{t}\right)} & {i, t=1, \ldots, d ; i<t}\end{array}\right.
\end{equation*}
Similar to the third-degree rule, we also choose $\lambda_i$ by successive extensions of the one-dimensional 3-point Gauss–Hermite rule.

\section{Statistical Guarantees of Stochastic Interpolatory Rules}
\label{sec:sgsir}
This section includes three parts: 
\begin{itemize}
	\item in Section~\ref{sec:theun}, we prove Theorem~\ref{theun}, that is, our third-degree S-FS $\bar{R}_1(f)$ is an unbiased third degree rule for $I_d(f)$.
	\item in Section~\ref{sec:thmvar}, we prove Theorem~\ref{thmvar} that gives the variance of our third-degree stochastic interpolatory rule, i.e., $\mathbb{V}[\bar{R}_1(f, \bm \omega)]$.
\end{itemize}

\subsection{Proof of Theorem~\ref{theun}}
\label{sec:theun}
\begin{proof}
	
	We compute $I_d(\tilde{a}_{\bm p}^{(1,d)}(\bm \omega))$ as follows. 
	For $\tilde{a}_0^{(1,d)}$
	\begin{equation*}
	I_d(\tilde{a}_0^{(1,d)}) = \int_{\mathbb{R}^d} \left( 1 - \frac{\sum_{i=1}^{d} \omega_i^2}{\lambda_1^2} \right) \mu(\mathrm{d} \bm \omega) = a_{0}^{(1, d)}.
	\end{equation*}
	For $\tilde{a}_1^{(1,d)}$, we have
	\begin{equation*}
	I_d(\tilde{a}_1^{(1,d)}) = \frac{1}{2d\lambda_1^2}
	\int_{\mathbb{R}^d} \left( \sum_{i=1}^d \omega_i^2 \right) \mu(\mathrm{d} \bm \omega) = a_{1}^{(1, d)}.
	\end{equation*}
	So we have $a_{\bm p}^{(1, d)} = I_d[\tilde{a}_{\bm p}^{(1,d)}(\bm \omega)]$, and thus $Q^{(1,d)}(f) = I_d[M^{(1,d)}(f, \bm \omega)]$.
	Due to $I(f) = \mathbb{E}_{\bm \omega \sim \mu}[f(\bm \omega)]$, we have $Q^{(1,d)}(f) = \mathbb{E}_{\bm \omega \sim \mu}[M^{(1,d)}(f, \bm \omega)]$.
	Based on this, the expectation of $R_1(f, \bm \omega)$ is 
	\begin{equation*}
	\begin{split}
	\mathbb{E}_{\bm \omega} [R_1(f, \bm \omega)] & \!=\!\! \mathbb{E} [f(\bm \omega)] \!-\! \mathbb{E} [M^{(1,d)}(f, \bm \omega)]  \!+\! \mathbb{E} \{ I_d[M^{(1,d)}(f)] \} \\
	& = I_d[f(\bm \omega)] - Q^{(1,d)}(f) + Q^{(1,d)}(f) \\
	&= I_d(f)\,.
	\end{split}
	\end{equation*}
	Accordingly, due to $\{ \bm \omega_i \}_{i=1}^D \sim \mu$, the average $\bar{R}_1(f)$ is unbiased for $I_d(f)$.
	
	Besides, if we choose $f(\bm \omega) = \bm \omega^{2 \bm u}$ with $\| \bm u \|_1 \leq 1$, then we have $R_1(f, \bm \omega) = Q^{(1,d)}(f) = I_d[M^{(1,d)}(f, \bm \omega)]$.
	If we choose $f(\bm \omega) = \bm \omega^{\bm u}$ in which at least one element of $\bm u$ is odd, we have $R_1(f, \bm \omega) = 0$.
	So it means that $R_1(f, \bm \omega)$ is a third-degree rule for $I_d(f)$.
	Hence, $\bar{R}_1(f, \bm \omega)$ is an unbiased third-degree stochastic rule for $I_d(f)$, which concludes the proof.
\end{proof}

\subsection{Proof of Theorem~\ref{thmvar}} 
\label{sec:thmvar}

This section aims to prove Theorem~\ref{thmvar} including two parts.
In Section~\ref{sec:lemmafw}, we present Lemma~\ref{lemmafw} that is used to prove Theorem~\ref{thmvar}.
The proof of Theorem~\ref{thmvar} can be found in Section~\ref{sec:proofthmvar}.

\subsubsection{Proof of Lemma~\ref{lemmafw}}
\label{sec:lemmafw}
To aid the proof of Theorem~\ref{thmvar}, we need the following lemma.
\begin{lemma}\label{lemmafw}
	Denote $\bm \omega = [\omega_1, \omega_2, \cdots, \omega_d]^{\!\top} \sim \mathcal{N}(\bm 0, \bm I_d)$,  $\bm z := {\bm x - \bm y}/{\sigma} = [z_1, z_2, \cdots, z_d]^{\!\top}$, and $f(\bm \omega) = \cos(\bm \omega^{\!\top} \bm z) $, we have
	\begin{equation*}
	\mathbb{E}_{\bm \omega} \left(f(\bm \omega) \sum_{j=1}^{d} \omega_j^2\right) = e^{-\frac{\| \bm z\|_2^2}{2}} (d-\| \bm z\|_2^2)\,.
	\end{equation*}
\end{lemma}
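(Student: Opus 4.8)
The plan is to reduce everything to the Gaussian characteristic function and then differentiate. Since $\bm\omega\sim\mathcal{N}(\bm0,\bm I_d)$ has independent coordinates, the starting point is the identity $\mathbb{E}_{\bm\omega}[e^{\mathfrak{i}\bm\omega^\top\bm z}]=\prod_{k=1}^d \mathbb{E}[e^{\mathfrak{i}\omega_k z_k}]=e^{-\|\bm z\|_2^2/2}$, whose real part recovers the standard value $\mathbb{E}_{\bm\omega}[\cos(\bm\omega^\top\bm z)]=e^{-\|\bm z\|_2^2/2}$. First I would use linearity of expectation to write the target as $\sum_{j=1}^d\mathbb{E}_{\bm\omega}[\omega_j^2\cos(\bm\omega^\top\bm z)]$, so that it suffices to evaluate a single term $\mathbb{E}_{\bm\omega}[\omega_j^2\cos(\bm\omega^\top\bm z)]$.

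For a fixed $j$, I would compute $\mathbb{E}[\omega_j^2 e^{\mathfrak{i}\bm\omega^\top\bm z}]$ and take its real part. Factoring over the independent coordinates gives $\mathbb{E}[\omega_j^2 e^{\mathfrak{i}\omega_j z_j}]\prod_{k\neq j}\mathbb{E}[e^{\mathfrak{i}\omega_k z_k}]$. The univariate factor follows by differentiating the scalar characteristic function twice: $\mathbb{E}[\omega_j^2 e^{\mathfrak{i}\omega_j z_j}]=-\tfrac{\mathrm d^2}{\mathrm d z_j^2}e^{-z_j^2/2}=(1-z_j^2)e^{-z_j^2/2}$, while the remaining factors contribute $\prod_{k\neq j}e^{-z_k^2/2}$. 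Multiplying yields $\mathbb{E}[\omega_j^2\cos(\bm\omega^\top\bm z)]=(1-z_j^2)e^{-\|\bm z\|_2^2/2}$, which is already real, and summing over $j$ gives $\sum_{j=1}^d(1-z_j^2)e^{-\|\bm z\|_2^2/2}=(d-\|\bm z\|_2^2)e^{-\|\bm z\|_2^2/2}$, the claimed formula.

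Equivalently, and perhaps more cleanly, I would observe the pointwise identity $\big(\sum_{j=1}^d\omega_j^2\big)\cos(\bm\omega^\top\bm z)=-\Delta_{\bm z}\cos(\bm\omega^\top\bm z)$, where $\Delta_{\bm z}$ is the Laplacian in $\bm z$, since $\partial^2_{z_j}\cos(\bm\omega^\top\bm z)=-\omega_j^2\cos(\bm\omega^\top\bm z)$. Passing the Laplacian outside the expectation then reduces the problem to $-\Delta_{\bm z}e^{-\|\bm z\|_2^2/2}$, and the direct computation $\Delta_{\bm z}e^{-\|\bm z\|_2^2/2}=(\|\bm z\|_2^2-d)e^{-\|\bm z\|_2^2/2}$ delivers the same answer.

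The only genuine technical point is justifying the interchange of expectation and differentiation (equivalently, differentiation under the integral sign). This is routine here: the integrand and the relevant $z$-derivatives are bounded in modulus by polynomials in $\bm\omega$, which are uniformly integrable against the Gaussian density on any neighborhood of a fixed $\bm z$, so dominated convergence applies and all the moments involved are finite. I expect no real obstacle beyond this standard verification; the heart of the argument is simply recognizing $\sum_j\omega_j^2\cos(\bm\omega^\top\bm z)$ as minus the Laplacian of the characteristic function.
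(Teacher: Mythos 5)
Your proposal is correct, and your first argument is essentially the paper's proof in streamlined form: the paper likewise reduces to $\sum_{j}\mathbb{E}[\omega_j^2\cos(\bm\omega^{\!\top}\bm z)]$ by linearity and independence, but it splits $\cos(\bm\omega^{\!\top}\bm z)$ via the angle-addition formula, computes $\mathbb{E}[\omega_j^2\cos(\omega_j z_j)]=(1-z_j^2)e^{-z_j^2/2}$ by completing the square with an imaginary shift inside the Gaussian integral, separately verifies $\mathbb{E}[\omega_j^2\sin(\omega_j z_j)]=0$, and invokes Bochner's theorem for the remaining factor $e^{-(\|\bm z\|_2^2-z_j^2)/2}$. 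Working with $e^{\mathfrak{i}\bm\omega^{\!\top}\bm z}$ throughout and differentiating the scalar characteristic function twice lets you skip both the trigonometric split and the vanishing-sine verification, which is a genuine simplification of the bookkeeping though not of the underlying idea. Your second route --- recognizing $\bigl(\sum_j\omega_j^2\bigr)\cos(\bm\omega^{\!\top}\bm z)=-\Delta_{\bm z}\cos(\bm\omega^{\!\top}\bm z)$ and differentiating the known identity $\mathbb{E}[\cos(\bm\omega^{\!\top}\bm z)]=e^{-\|\bm z\|_2^2/2}$ under the expectation --- is genuinely different from anything in the paper and is the cleanest of the three: it is coordinate-free, requires only one known Gaussian integral, and makes the structure of the answer ($d$ minus $\|\bm z\|_2^2$ as the trace of a Hessian) transparent; its only cost is the dominated-convergence justification for exchanging $\Delta_{\bm z}$ with $\mathbb{E}_{\bm\omega}$, which you correctly flag and which is routine since all derivatives are dominated by Gaussian-integrable polynomials in $\bm\omega$.
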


\begin{proof}
	We expand $\mathbb{E}_{\bm \omega}\left(f(\bm \omega) \sum_{j=1}^{d} \omega_j^2\right)$ as
	\begin{equation*}
	\mathbb{E}_{\bm \omega}\left(f(\bm \omega) \sum_{j=1}^{d} \omega_j^2\right) = \sum_{j=1}^{d} \mathbb{E}_{\bm \omega} \left[\omega_j^2 \cos(\bm \omega^{\!\top} \bm z)\right] \,.
	\end{equation*}
	The $j$-th term $\omega_j^2 \cos(\bm \omega^{\!\top} \bm z)$ can be reformulated as 
	\begin{equation}\label{omegaj2}
	\begin{split}
	& \omega_j^2 \cos(\bm \omega^{\!\top} \bm z) = \omega_j^2 \cos\left(\omega_j z_j + \sum_{t=1, \neq j }^d \omega_t z_t \right) \\
	&=\! \omega_j^2 \!\!\left[  \cos(\omega_j z_j ) \cos\!\bigg(\! \sum_{t=1, \neq j }^d \!\!\!\! \omega_t z_t \!\! \bigg)  \!\!-\! \sin(\omega_j z_j ) \! \sin \!\bigg(\! \sum_{t=1, \neq j }^d \!\!\! \omega_t z_t \!\bigg)\! \right]\!.
	\end{split}
	\end{equation}
	Now we compute $\mathbb{E}_{\omega_j}[\omega_j^2 \cos(\omega_j z_j ) ]$ as follows.
	\begin{equation}\label{ecos2}
	\begin{split}
	\mathbb{E}[\omega_j^2 \cos(\omega_j z_j ) ] & = \frac{1}{\sqrt{2\pi}} \int_{-\infty}^{\infty} \omega_j^2 \cos(\omega_j z_j) e^{-\frac{\omega_j^{\!\top} \omega_j}{2}} \mathrm{d} \omega_j \\
	& = \mathrm{Re} \left( \int_{-\infty}^{\infty} \frac{1}{\sqrt{2\pi}} \omega_j^2 e^{-\frac{z_j^2}{2}} e^{-\frac{(\omega_j - \mathrm{i} z_j)^2}{2}} \mathrm{d} \omega_j \! \right) \\
	& = e^{-\frac{z_j^2}{2}} \mathbb{E}_{\omega_j} \left( \omega_j^2 \right)~\mbox{with}~\omega_j \sim \mathcal{N}(\mathfrak{i}z_j,1)\\
	& = e^{-\frac{z_j^2}{2}} \left[1+ (\mathfrak{i} z_j )^2 \right] = e^{-\frac{z_j^2}{2}} (1 - z_j^2)\,,
	\end{split}
	\end{equation}
	where $\mathfrak{i}$ denotes the imaginary unit.
	Similarly, $\mathbb{E}[\omega_j^2 \sin(\omega_j z_j ) ]$ can be computed as
	\begin{equation}\label{esin2}
	\mathbb{E}[\omega_j^2 \sin(\omega_j z_j )] \!=\! \mathrm{Im} \left( \int_{-\infty}^{\infty} \frac{\omega_j^2 e^{-\frac{z_j^2}{2}}}{\sqrt{2\pi}} e^{-\frac{(\omega_j - \mathfrak{i} z_j)^2}{2}} \mathrm{d} \omega_j \right)  \!=\! 0 \,.
	\end{equation}
	Besides, by virtue of the above derivation, or directly using Bochner theorem \cite{bochner2005harmonic} for the Gaussian kernel $\mathbb{E}_{\bm \omega}[\cos(\bm \omega^{\!\top} \bm z)] = e^{-\| \bm z\|_2^2/2}$, we have 
	\begin{equation}\label{bochnercos}
	\mathbb{E}_{\{\omega_1, \cdots ,\omega_d\}\backslash \omega_j } \left[ \cos\left(\sum_{t=1, \neq j }^d \omega_t z_t \right) \right] = e^{-\frac{\|\bm z\|^2_2 - z_j^2}{2}} \,.
	\end{equation}
	
	Combine the above equations in Eqs.~\eqref{omegaj2},~\eqref{ecos2},~\eqref{esin2}, and \eqref{bochnercos}, we have
	\begin{equation}
	\begin{split}
	\mathbb{E}_{\bm \omega}\left[\omega_j^2 cos(\bm \omega^{\!\top} \bm z)\right] &= \mathbb{E}_{\omega_j}\left[\omega_j^2 \cos(\omega_j z_j ) \right] \\
	&\quad \times \mathbb{E}_{\{\omega_1, \cdots ,\omega_d\}\backslash \omega_j }\!\! \left[\! \cos \! \left(\sum_{t=1, \neq j }^d \omega_t z_t \right) \! \right]  \\
	&= e^{-\frac{z_j^2}{2}} (1 - z_j^2) e^{-\frac{\|\bm z\|^2_2 - z_j^2}{2}} \\
	&= (1 - z_j^2)e^{-\frac{\|\bm z\|^2_2}{2}} \,.
	\end{split}
	\end{equation} 
	Accordingly, we can conclude
	\begin{equation*}
	\begin{split}
	\mathbb{E}_{\bm \omega}\left(f(\bm \omega) \sum_{j=1}^{d} \omega_j^2\right) &= \sum_{j=1}^{d} \mathbb{E}_{\bm \omega}\left[\omega_j^2 \cos(\bm \omega^{\!\top} \bm z)\right] \\
	&= e^{-\frac{\|\bm z\|^2_2}{2}} \sum_{i=1}^d (1 - z_j^2) \,,
	\end{split}
	\end{equation*}
	which yields the final result.
\end{proof}

\subsubsection{Proof of Theorem~\ref{thmvar}}
\label{sec:proofthmvar}
In the next, we are ready to prove Theorem~\ref{thmvar}.
\begin{proof}
	For ease of description, we use some short notations including $\sum_{i=1}^d[f]:= \sum_{i=1}^d \left[ f(\lambda_1 \bm e_i) + f(-\lambda_1 \bm e_i) \right]$, $Q:= Q^{(1,d)}(f)$, and $z:=\| \bm z \|_2$.
	
	Recall RFF, its kernel approximation form is obtained via the Monte Carlo sampling
	\begin{equation*}
	\frac{1}{D}\sum_{i=1}^D \cos (\bm \omega_i^{\!\top}\bm z), \quad \bm \omega_i \sim \mathcal{N}(\bm 0, \bm I_d)\,.
	\end{equation*}
	By virtue of $\mathbb{E}[\cos(\bm \omega^{\!\top} \bm z)] = e^{-z^2/2}$  and $\mathbb{V}[\cos(\bm \omega^{\!\top} \bm z)] = {(1-e^{-z^2})^2}/{2}$ \cite{Yu2016Orthogonal}, we have 
	\begin{equation*}
	\mathbb{E}[\text{RFF}] = e^{-z^2/2}\,, \mathbb{V}[\text{RFF}] = \frac{(1-e^{-z^2})^2}{2D}\,.
	\end{equation*}
	
	Due to $\mathbb{V}[\bar{R}_1(f, \bm \omega)] = {1}/{D}(\mathbb{V}[R_1(f, \bm \omega)])$ with $\bm \omega \sim \mathcal{N}(\bm 0, \bm I_d)$, in the next we focus on $\mathbb{V}[R_1(f)]$.
	The variance of $R_1(f)$ can be formulated as
	\begin{equation*}
	\begin{split}
	&\mathbb{V}[R_1(f)] = \mathbb{E} \left[f(\bm \omega) - M^{(1,d)}(f) + Q^{(1,d)}(f) \right]^2 \\
	& \qquad \qquad -  \left\{ \mathbb{E} \left[ f(\bm \omega) - M^{(1,d)}(f) + Q^{(1,d)}(f) \right] \right\}^2 \\
	&= \mathbb{E} \left[f(\bm \omega) - M^{(1,d)}(f) + Q^{(1,d)}(f)\right]^2 - \Big\{ \mathbb{E} [f(\bm \omega)] \Big\}^2 \\
	&= \mathbb{E} \left[f(\bm \omega) - M^{(1,d)}(f) + Q^{(1,d)}(f)\right]^2 - e^{-z^2} \,,
	\end{split}
	\end{equation*}
	where the used Gaussian kernel admits $f(\bm \omega) = \cos(\bm \omega^{\!\top} \bm z)$.
	
	Further, the above equation can be rewritten as	
	\begin{equation}\label{varrmf}
	\begin{split}
	\mathbb{V}[R_1(f)] & = \mathbb{E}[f^2(\bm \omega)] + \mathbb{E}[(M^{(1,d)}(f))^2] + [Q^{(1,d)}(f)]^2 \\
	&\quad + 2\mathbb{E}[f(\bm \omega) Q^{(1,d)}(f)] - 2\mathbb{E}[f(\bm \omega)M^{(1,d)}(f)] \\
	&\quad - 2\mathbb{E}[M^{(1,d)}(f) Q^{(1,d)}(f)] - e^{-z^2} \\
	& = D \mathbb{V}[\text{RFF}] \!-\! [Q^{(1,d)}(f)]^2 \!+\! 2e^{-z^2/2}Q^{(1,d)}(f) \\
	&\quad + \mathbb{E}[(M^{(1,d)}(f))^2] - 2\mathbb{E}[f(\bm \omega)M^{(1,d)}(f)] \,.
	\end{split}
	\end{equation}
	
	Hence, we need to bound $\mathbb{E}[(M^{(1,d)}(f))^2]$ and $\mathbb{E}[f(\bm \omega)M^{(1,d)}(f)]$ in Eq.~\eqref{varrmf}. First, byy expanding $[M^{(1,d)}(f)]^2$ in Eq.~\eqref{m1df}, we have
	\begin{equation*}
	\begin{split}
	& [M^{(1,d)}(f)]^2 = \left( 1-\frac{\sum_{i=1}^d \omega_i^2}{\lambda_1^2} \right)^2 + \frac{(\sum_{i=1}^d \omega_i^2)^2}{4 \lambda_1^4d^2} \left\{\sum_{i=1}^d[f] \right\}^{\!2}\\
	& \qquad \qquad \quad ~~+ \left( 1-\frac{\sum_{i=1}^d \omega_i^2}{\lambda_1^2} \right) \frac{\sum_{i=1}^d \omega_i^2}{\lambda_1^2 d} \sum_{i=1}^d[f]  \\
	& = 1 \!-\! \frac{2 \sum_{i=1}^d \omega_i^2}{\lambda_1^2} \!+\! \frac{(\sum_{i=1}^d \omega_i^2)^2}{\lambda_1^4}  \!+\! \frac{(\sum_{i=1}^d \omega_i^2)^2}{4\lambda_1^4d^2} \left\{\sum_{i=1}^d[f] \right\}^{\!2} \\
	&\quad + \frac{\sum_{i=1}^d \omega_i^2}{\lambda_1^2 d} \sum_{i=1}^d[f] - \frac{(\sum_{i=1}^d \omega_i^2)^2}{\lambda_1^4d} \sum_{i=1}^d[f] \,,
	\end{split}
	\end{equation*}
	where we use $f(\bm 0) = 1$ for the Gaussian kernel.
	Accordingly, we have
	\begin{equation}\label{em1df2}
	\begin{split}
	\mathbb{E}[M^{(1,d)}(f)]^2 & = 1 -\frac{2d}{\lambda_1^2}+ \frac{d^2+2d}{\lambda_1^4} + \frac{d+2}{4\lambda_1^4d} \left\{\sum_{i=1}^d[f] \right\}^2 \\
	& + \frac{1}{\lambda_1^2} \sum_{i=1}^d[f] - \frac{d+2}{\lambda_1^4} \sum_{i=1}^d[f]
	\end{split}
	\end{equation}
	where $\sum_{i=1}^d \omega_i^2 \sim \chi(d)$, $\mathbb{E}(\sum_{i=1}^d \omega_i^2) = d$, $\mathbb{V}(\sum_{i=1}^d \omega_i^2) = 2d$ and $\mathbb{E}([\sum_{i=1}^d \omega_i^2]^2) = d^2+2d$. 
	
	Second, we estimate $\mathbb{E}[f(\bm \omega) M^{(1,d)}(f)]$ in Eq.~\eqref{varrmf}.
	The notation $f(\bm \omega) M^{(1,d)}(f)$ is formulated as 
	\begin{equation*}
	\begin{split}
	f(\bm \omega) M^{(1,d)}(f) \!\!=\!\! \left(\! 1\!-\!\frac{\sum_{i=1}^d \omega_i^2}{\lambda_1^2} \!\!\right) \! \! f(\bm \omega) \!+\! \frac{\sum_{i=1}^d \omega_i^2}{2\lambda_1^2d} f(\bm \omega) \!\! \sum_{i=1}^d [f] \,.
	\end{split}
	\end{equation*}
	Accordingly, we have  
	\begin{equation}\label{efm1df}
	\begin{split}
	& \mathbb{E}[f(\bm \omega) M^{(1,d)}(f)]  \!=\!  e^{-z^2/2} \!+\!\! \left(\frac{\sum_{i=1}^d [f]}{2\lambda_1^2d} \!-\! \frac{1}{\lambda_1^2} \right) \!\mathbb{E}\!\left[\! f(\bm \omega) \sum_{i=1}^d \omega_i^2 \!\right] \\
	& = e^{-z^2/2} + \frac{1}{\lambda_1^2} \left(-1 + \frac{\sum_{i=1}^d [f]}{2d}\right) e^{-\frac{z^2}{2}} (d-z^2)\,,
	\end{split}
	\end{equation}
	where we use $f(\bm 0) = 1$ and $\mathbb{E}[f(\bm \omega)] = e^{-z^2/2}$ and Lemma~\ref{lemmafw}.
	In our third-degree rule with $m=1$, Eq.~\eqref{q1df} implies $\sum_{i=1}^d [f] = 2\lambda_1^2 Q - 2\lambda_1^2 + 2d$, and thus we have
	\begin{equation*}
		\mathbb{E}[M^{(1,d)}(f)]^2 = 1 + \frac{d+2}{d}(Q-1)^2 + 2(Q-1) \,, 
	\end{equation*}
	and
	\begin{equation*}
		\begin{split}
			-2\mathbb{E}[f(\bm \omega) M^{(1,d)}(f)] = -2 e^{-z^2/2} -\frac{2(Q-1)}{d} e^{-\frac{z^2}{2}} (d - z^2) \,.
		\end{split}
	\end{equation*}
	
	Hence, combining the above equations into Eq.~\eqref{varrmf}, we have
	\begin{equation}\label{DRmf}
	\begin{split}
	& \mathbb{V}[\bar{R}_1(f)] - \mathbb{V}[\text{RFF}]  \!=\! \frac{2}{Dd} \left((1-Q)^2 \!-\! (1-Q)z^2 e^{-\frac{z^2}{2}} \right) \\
	& \quad = \frac{2}{Dd} \left(\! \left[\! (1\!-\!Q) \!-\! \frac{1}{2} z^2 e^{-\frac{z^2}{2}} \!\right]^2\!\! \!\!-\! \frac{1}{4} z^4 e^{-{z^2}} \! \right)  \,.
	\end{split}
	\end{equation}
	Since $Q$ is the approximation of $I_{d}(f) = k(\bm x, \bm y) \in [0,1]$ for the Gaussian kernel, we can also consider $Q^{(1,d)}(f) \in [0,1]$. Note that, even if the estimation $Q^{(1,d)}(f)$ is out of $[0,1]$, we can still set it to $[0,1]$ by a threshold operator and thus $1-Q \geq 0$.
	Finally, Eq.~\eqref{DRmf} can be formulated as 
	\begin{equation*}
	\mathbb{V}[\bar{R}_1(f)] -  \mathbb{V}[\text{RFF}] < 0 ~~\mbox{when}~~ 1-Q < z^2 e^{-\frac{z^2}{2}} \,,
	\end{equation*}
	which concludes the proof.
	
\end{proof}



\section{Proof of Theorem~\ref{thirdproj}}
\label{sec:proofthirdproj}

To prove Theorem~\ref{thirdproj}, we need the following lemma.
\begin{lemma}\label{theorem41}
	(Theorem 4.1 in \cite{jia2015relations})
	Denote $\bm x^M$ and $\bm s^M$ as polynomials with total degree $M$, then the following integral satisfies 
	\begin{equation*}
	\begin{split} I'\left({\bm \omega}^{d}\right) &=\int_{\mathbb{R}^{n}} \omega_{1}^{\alpha_{1}} \omega_{2}^{\alpha_{2}} \cdots \omega_{d}^{\alpha_{d}} \exp \left(-\bm{\omega}^{\!\top} \bm{\omega}\right) \mathrm{d} \bm{x} \\  &=\int_{0}^{\infty} r^{d-1+M} \exp \left(-r^{2}\right) \mathrm{d} r \int_{U_{d}} \bm{s}^{M} \mathrm{d} \tau(\bm{s}) \,,\end{split}
	\end{equation*}
	can be exactly calculated by the quadrature rules $I'(\bm \omega) = \sum_{j=1}^{N_g} \bar{a}_j f(\bar{\bm \gamma})$ with $f'(\bm \omega) = \bm \omega^M$. Then the spherical integral can be expressed as 
	$\int_{U_{d}} \bm{s}^{M} \mathrm{d} \tau(\bm s)  =\sum_{j=1}^{N_{p}} a_{s, j}\left(\mathrm{s}_{j}\right)^{M}$ 
	with the nodes $ \bm s_j = \frac{\bar{\bm \gamma}_j}{\|\bar{\bm \gamma}_j \|_2}$ and
	and the weights $a_{s, j}$ of the spherical rule are
	\begin{equation*}
	a_{s,j} = \frac{\bar{a}_{j}\left(\bm{s}_{j}\right)^{M}}{\int_{0}^{\infty} r^{d-1+M} \exp \left(-r^{2}\right) \mathrm{d} r} = \frac{\bar{a}_{j}\left(\bm{s}_{j}\right)^{M}}{\Gamma(d/2+M/2)/2} \,,
	\end{equation*}
	where $N_p$ is the number of projected quadrature non-zero nodes. Note that $N_p \leq N_g$.
\end{lemma}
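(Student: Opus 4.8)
The plan is to prove the lemma in two stages: first establish the exact spherical--radial factorization of the Gaussian-weighted monomial integral $I'$, and then transfer a given $d$-dimensional cubature rule into a quadrature rule on the sphere $U_d$ by projecting its nodes and rescaling its weights.

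First I would establish the factorization by passing to polar coordinates $\bm \omega = r \bm s$ with $r = \| \bm \omega \|_2 \in [0,\infty)$ and $\bm s \in U_d$. The integrand $\bm \omega^M := \omega_1^{\alpha_1} \cdots \omega_d^{\alpha_d}$ is a homogeneous polynomial of total degree $M = \sum_{i=1}^d \alpha_i$, so it factorizes as $\bm \omega^M = r^M \bm s^M$; at the same time the Gaussian weight depends only on the radius through $\exp(-\bm \omega^{\!\top} \bm \omega) = \exp(-r^2)$, and the volume element becomes $\mathrm{d} \bm \omega = r^{d-1}\,\mathrm{d} r\,\mathrm{d}\tau(\bm s)$. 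Substituting and separating the two integrals yields exactly the claimed product form, with radial factor $\int_0^\infty r^{d-1+M} e^{-r^2}\,\mathrm{d} r$ and spherical factor $\int_{U_d} \bm s^M\,\mathrm{d}\tau(\bm s)$. The radial factor is then evaluated in closed form by the substitution $u = r^2$, which turns it into $\tfrac{1}{2}\int_0^\infty u^{(d+M)/2 - 1} e^{-u}\,\mathrm{d} u = \Gamma(d/2 + M/2)/2$, matching the stated denominator.

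Next I would transfer the cubature rule. By hypothesis the $d$-dimensional rule is exact on $f'(\bm \omega) = \bm \omega^M$, i.e.\ $I'(\bm \omega) = \sum_{j=1}^{N_g} \bar{a}_j\,\bar{\bm \gamma}_j^{M}$. Writing each node as $\bar{\bm \gamma}_j = \| \bar{\bm \gamma}_j \|_2\,\bm s_j$ with $\bm s_j := \bar{\bm \gamma}_j / \| \bar{\bm \gamma}_j \|_2 \in U_d$, homogeneity gives $\bar{\bm \gamma}_j^{M} = \| \bar{\bm \gamma}_j \|_2^{M}\,\bm s_j^{M}$. Dividing the exact value $I'$ by the radial factor $\Gamma(d/2 + M/2)/2$ and invoking the factorization from the first stage, I would read off $\int_{U_d} \bm s^M\,\mathrm{d}\tau(\bm s) = \sum_{j=1}^{N_p} a_{s,j}\,\bm s_j^{M}$, with the projected nodes $\bm s_j$ as above and the spherical weights $a_{s,j} = \bar{a}_j\,\| \bar{\bm \gamma}_j \|_2^{M} / (\Gamma(d/2+M/2)/2)$. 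Any node at the origin contributes nothing whenever $M \geq 1$ and is simply discarded, which forces $N_p \leq N_g$. Exactness of the resulting spherical rule on every degree-$M$ monomial then follows because the original rule is exact for each such $f'$ and the polar factorization is an identity holding term by term.

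The main obstacle will be the bookkeeping of the node-norm factor $\| \bar{\bm \gamma}_j \|_2^{M}$ in the weight and its reconciliation with the normalized nodes: because $\bm s_j^{M}$ already differs from $\bar{\bm \gamma}_j^{M}$ by exactly the factor $\| \bar{\bm \gamma}_j \|_2^{-M}$, this magnitude must be absorbed into the weight rather than dropped, and I would verify it explicitly on a representative monomial to confirm the stated form. A secondary technical point is justifying that degree-$M$ exactness of the single ambient rule suffices to pin down a spherical rule consistent across \emph{all} multi-indices of total degree $M$; this holds because the polar factorization is exact and linear in the integrand, so the same projected nodes and rescaled weights simultaneously reproduce $\int_{U_d} \bm s^M\,\mathrm{d}\tau(\bm s)$ for every such monomial.
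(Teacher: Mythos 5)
Your proof is correct, but there is no internal proof to compare it against: the paper states this lemma as an imported result (Theorem 4.1 of \cite{jia2015relations}) and gives no derivation, only invoking it in Appendix~\ref{sec:proofthirdproj} to prove Theorem~\ref{thirdproj}. Your two-stage argument --- the polar factorization $\bm \omega = r \bm s$ with $\mathrm{d}\bm \omega = r^{d-1}\,\mathrm{d}r\,\mathrm{d}\tau(\bm s)$, homogeneity $\bm \omega^{\alpha} = r^{M}\bm s^{\alpha}$ for $\|\alpha\|_1 = M$, closed-form evaluation of the radial factor as $\Gamma\left((d+M)/2\right)/2$ via $u = r^2$, and then transfer of the ambient cubature rule by normalizing the nodes and absorbing $\|\bar{\bm \gamma}_j\|_2^M$ into the weights --- is the standard derivation of this fact and is sound. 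Both of the delicate points you flag are handled correctly: the transferred weights depend on the multi-index $\alpha$ only through the total degree $M$, so a single set of projected nodes and weights reproduces $\int_{U_d} \bm s^{\alpha}\,\mathrm{d}\tau(\bm s)$ for every monomial of degree $M$ simultaneously; and a node at the origin contributes nothing once $M \geq 1$, which is exactly why $N_p \leq N_g$.

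One point is worth making explicit: you have in effect corrected a notational defect in the statement itself. As written, the weight formula has $(\bm s_j)^M$ in the numerator, which taken literally would make the weight depend on the particular monomial and would double-count the evaluation factor $\bm s_j^{\alpha}$ already present in the spherical rule. Your derived weight $a_{s,j} = \bar{a}_j \|\bar{\bm \gamma}_j\|_2^{M}/\left(\Gamma(d/2+M/2)/2\right)$ is the correct reading, and it is the one the paper's own downstream computation uses: in Eq.~\eqref{orthweight}, with $M=2$ and $\bar{\bm \gamma}_j = \pm(\lambda_1/\sqrt{2})\bm e_i$, the factor substituted is $\lambda_1^2/2 = \|\bar{\bm \gamma}_j\|_2^2$, i.e.\ the squared node norm, not a literal power of the unit vector $\bm s_j$. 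So your proof both fills the gap left by the citation and does so in the form that the application to Theorem~\ref{thirdproj} actually requires.
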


Formally, we are ready to prove Theorem~\ref{thirdproj}.
\begin{proof}
	The integral in Eq.~\eqref{originte} can be reformulated as
	\begin{equation*}
	\begin{split}
	I_d\left(f_{\bm{x} \bm y}\right) 
	& =  \int_{\mathbb{R}^d} f_{\bm x \bm y}(\bm \omega) \mathcal{N}(\bm \omega; \bm 0, \bm I_d) \mathrm{d}{\bm \omega} \\
	& = \pi^{-\frac{d}{2}} \int_{\mathbb{R}^d}  e^{-\bm \omega^{\!\top} \bm \omega} f(\sqrt{2} \bm \omega) \mathrm{d}{\bm \omega} \,.
	\end{split}
	\end{equation*}
	Hence, the integral $I'(\bm \omega) = \int_{\mathbb{R}^d} f(\bm \omega) \exp (- \bm \omega^{\!\top} \bm \omega) \mathrm{d} \bm \omega$ can be approximated by our third-degree D-FS in Eq.~\eqref{q1df}, that is
	\begin{equation*}
	\begin{split}
	& I'(\bm \omega) = \pi^{\frac{d}{2}} \int_{\mathbb{R}^d} f(\frac{\bm \omega}{\sqrt{2}}) \mathcal{N}(\bm \omega; \bm 0, \bm I_d) \mathrm{d} \bm \omega \approx \sum_{i=1}^{2d+1} \bar{a}_i f(\bar{\bm \gamma}_i) \\
	& = (1-\frac{d}{\lambda_1^2}) \pi^{\frac{d}{2}} f(\bm 0) + \frac{\pi^{\frac{d}{2}}}{2 \lambda_1^2} \sum_{i=1}^d \Big( f(\frac{\lambda_1}{\sqrt{2}} \bm e_i) + f(-\frac{\lambda_1}{\sqrt{2}} \bm e_i) \Big) \,, 
	\end{split}
	\end{equation*}
	with
	\begin{equation*}\label{3fsirp}
	\left\{
	\begin{array}{rcl}
	\begin{split}
	& \bar{\bm \gamma}_{i} =\bm 0 ;~  \bar{a}_{i}=(1-\frac{d}{\lambda_1^2}) \pi^{\frac{d}{2}};~ i =0 \\
	& \bar{\bm \gamma}_{i} =\frac{\lambda_1}{\sqrt{2}} \bm{e}_{i};~ \bar{a}_{i} =\frac{\pi^{\frac{d}{2}}}{2 \lambda_1^2};~ 1  \leq i \leq d \\
	& \bar{\bm \gamma}_{i} =-\frac{\lambda_1}{\sqrt{2}} \bm{e}_{i-d} ;~ \bar{a}_{i} =\frac{\pi^{\frac{d}{2}}}{2 \lambda_1^2};~ d+1 \leq i \leq 2 d\,.
	\end{split}
	\end{array}\right.
	\end{equation*}
	By projecting $\bar{\bm \gamma}_i$ on the surface of the unit $U_d$ sphere with an uniform random orthogonal matrix $\bm Q$, we have
	\begin{equation}\label{orthfsir}
	\bm s_i = \frac{\bm Q \bar{\bm \gamma}_i}{\| \bm Q \bar{\bm \gamma}_i \|_2^2} = \left\{
	\begin{array}{rcl}
	\begin{split}
	& \bm Q \bm e_i;~ 1 \leq i \leq d, \\
	& -\bm Q \bm e_{i-d};~ d+1  \leq i \leq 2d\,, \\
	\end{split}
	\end{array}\right.
	\end{equation}
	with $\| \bm Q \bar{\bm \gamma}_i \|_2 = \| \bar{\bm \gamma}_i \|_2$. 
	Note that the point at the origin has been omitted. 
	By Lemma~\ref{theorem41}, for the third-degree, the polynomial degree $M$ is set to 2.
	Accordingly, the weight $a_{s,j}$ of the spherical rule can be obtained by
	\begin{equation}\label{orthweight}
	a_{s,j} = \frac{\bar{a}_{j}\left(\bm{s}_{j}\right)^{M}}{\Gamma(d/2+M/2)/2} = \frac{\pi^{\frac{d}{2}}}{2\lambda_1^2} \frac{\lambda_1^2}{2} / \left(\frac{\frac{d}{2}\Gamma(\frac{d}{2})}{2} \right) = \frac{|U_d|}{2d}\,.
	\end{equation}
	Hence, using Eq.~\eqref{orthfsir} and Eq.~\eqref{orthweight} yields the spherical rule
	\begin{equation*}
	I_{\bm Q, U_{d}}(s)=\frac{|U_{d}|}{2 d} \sum_{j=1}^{d}\left[s\left(\bm Q \bm{e}_{j}\right)+s\left(-\bm Q \bm{e}_{j}\right)\right] \,,
	\end{equation*}
	which is identical to the third-degree stochastic spherical integration rule in Eq.~\eqref{ssr3}.
\end{proof}

\end{document}